\DeclareMathOperator*{\argmin}{arg\,min}
\newtheorem{theorem}{Theorem}
\newtheorem{lemma}{Lemma}
\newtheorem{claim}{Claim}
\newtheorem{assumption}{Assumption}
\newenvironment{claimproof}{\noindent\emph{Proof of claim.}}{\hfill$\qed$}
\crefname{lemma}{lemma}{lemmas}
\crefname{assumption}{assumption}{assumptions}
\newcommand{\abs}[1]{\lvert#1\rvert}
\newcommand{\vect}[1]{\ensuremath{\bm{#1}}}
\newcommand{\E}{\ensuremath{\mathbb{E}}}
\newcommand{\norm}[1]{\left\lVert#1\right\rVert}
\newcommand{\parenthese}[1]{\left(#1\right)}
\newcommand{\bracket}[1]{\left[#1\right]}
\newcommand{\scalarproduct}[1]{\left\langle#1\right\rangle}
\newcommand{\curlybracket}[1]{\left\{#1\right\}}
\newcommand{\ocal}{\ensuremath{\mathcal{O}}}
\newcommand{\dcal}{\ensuremath{\mathcal{D}}}
\newcommand{\rcal}{\ensuremath{\mathcal{R}}}
\newcommand{\fcal}{\ensuremath{\mathcal{F}}}
\newcommand{\kcal}{\ensuremath{\mathcal{K}}}
\newcommand{\indi}{\ensuremath{\mathbb{I}}}
\newcommand{\opt}[1]{\ensuremath{\bm{#1}^*}}
\newcommand{\ce}{DeLMFW\xspace}
\newcommand{\oco}{OCO\xspace}
\newcommand{\fw}{FW\xspace}
\newcommand{\ftpl}{FTPL\xspace}
\newcommand{\mynote}[3]{
		\fbox{\bfseries\sffamily\scriptsize#1}
		{\small$\blacktriangleright$\textsf{\emph{\color{#3}{#2}}}$\blacktriangleleft$}}
	\newcommand{\zzz}[1]{{\setlength{\fboxsep}{2pt}\fcolorbox{black}{yellow}{\textsf{\emph{#1}}}}\xspace}}
\newcommand{\mynote}[3]{}
	\newcommand{\zzz}[1]{}}
\title{Handling Delayed Feedback in Distributed Online Optimization : A Projection-Free Approach
\thanks{The authors express their gratitude for the support provided by the Multidisciplinary Institute in Artificial Intelligence, University Grenoble-Alpes, France (ANR-19-P3IA-0003)} 
}
\author{
  Tuan-Anh Nguyen \\
  LIG, Inria \\ 
  University Grenoble-Alpes \\
  Saint-Martin-d'Hères, France \\
  \texttt{tuan-anh.nguyen@inria.fr}
  \And 
  Nguyen Kim Thang \\
  LIG, Inria, CNRS, Grenoble INP \\
  University Grenoble-Alpes \\
  Saint-Martin-d'Hères, France\\
  \texttt{kim-thang.nguyen@univ-grenoble-alpes.fr} \\
   \And
  Denis Trystram \\
  LIG, Inria, CNRS, Grenoble INP \\
  University Grenoble-Alpes \\
  Saint-Martin-d'Hères, France \\
  \texttt{denis.trystram@imag.fr} \\
}
\begin{document}
\maketitle

\begin{abstract}
Learning at the edges has become increasingly important as large quantities of data are continually generated locally. Among others, this paradigm requires 
algorithms that are \emph{simple} (so that they can be executed by local devices), \emph{robust} (again uncertainty as data are continually generated), and 
\emph{reliable} in a distributed manner under network issues, especially delays.
In this study, we investigate the problem of online convex optimization (\oco) under adversarial delayed feedback. We propose two projection-free algorithms for centralized and distributed settings in which they are carefully designed to achieve a regret bound of $\ocal(\sqrt{B})$ where $B$ is the sum of delay, which is optimal for the OCO problem in the delay setting while still being projection-free. We provide an extensive theoretical study and experimentally validate the performance of our algorithms by comparing them with existing ones on real-world problems. 

\end{abstract}

\keywords{Online Learning \and Distributed Learning \and Delayed Feedback \and Frank-Wolfe Algorithm}


\section{Introduction}

Many machine learning (ML) applications owe their success to factors such as efficient optimization methods, effective system design, robust computation, and the availability of enormous amounts of data. In a typical situation, ML models are trained in an offline and centralized manner. However, in real-life scenarios, significant portions of data are continuously generated locally at the user level. Learning at the edge naturally emerges as a new paradigm to address such issues. In this new paradigm, the development of suitable learning techniques has become a crucial research objective. Responding to the requirements (of this new paradigm), online learning has been intensively studied in recent years. Its efficient use of computational resources, adaptability to changing environments, scalability, and robustness against uncertainty show promise as an effective approach for edge devices.

However, online learning/online convex optimization (OCO) problems typically assume that the feedback is immediately received after a decision is made, which is too restrictive in many real-world scenarios. For example, a common problem in online advertising is the delay that occurs between clicking on an ad and taking subsequent action, such as buying or selling a product. In distributed systems, the previous assumption is clearly a real issue. Wireless sensor/mobile networks that exchange information sequentially may experience delays in feedback due to several problems: connectivity reliability, varying processing/computation times, heterogeneous data and infrastructures, and unaware-random events. This can lead to difficulties in maintaining coordination and efficient data exchange, eventually affecting network performance and responsiveness. Given these scenarios, the straightforward application of traditional OCO algorithms often results in inefficient resource utilization because one must wait for feedback before starting another round. To address this need, this paper focuses on developing algorithms that can adapt to adversarial delayed feedback in both centralized and distributed settings.

\paragraph{Model.} 
We first describe the delay model in a centralized setting. Given a convex set $\mathcal{K} \subseteq \mathbb{R}^d$, at every time step $t$, the decision maker/agent chooses a decision $\vect{x}_t \in \mathcal{K}$ and suffers from a loss function $f_t : \mathcal{K} \rightarrow \mathbb{R}$. We denote by $d_t \geq 1$ an arbitrary delay value of time $t$. In contrast to the classical OCO problem, the feedback of iteration $t$ is revealed at time $t + d_t - 1$. The agent does not know $d_{t}$ in advance and is only aware of 
the feedback of iteration $t$ at time $t + d_t - 1$. Consequently, at time $t$, the agent receives feedback from the previous iterations $s \in \mathcal{F}_t$, where $\mathcal{F}_t = \curlybracket{s: s + d_s -1 = t}$. In other words, $\mathcal{F}_{t}$ is the set of moments before time $t$ such that the corresponding feedbacks are released at time $t$. Moreover, the corresponding feedbacks are not necessarily released in the order of their iterations.  The goal is to minimize regret, which is defined as:
\begin{align*}
    \mathcal{R}_T := \sum_{t = 1}^T f_t(\vect{x}_t) - \min_{\vect{x} \in \mathcal{K}} \sum_{t=1}^T f_t(\vect{x}) 
\end{align*}
In a distributed setting, we have additionally a set of agents connected over a network, 
represented by a graph $\mathcal{G} = (V, E)$  where $n  = |V|$ is the number of agents. 
Each agent $i \in V$ can communicate with (and only with) its immediate neighbors, that is, adjacent agents in $\mathcal{G}$.
At each time $t \geq 1$, agent $i$ takes a decision $\vect{x}^{i}_{t} \in \mathcal{K}$ and suffers a partial loss function $f^i_t : \mathcal{K} \rightarrow \mathbb{R}$, which is revealed adversarially and locally to the agent at time $(t + d^i_t - 1)$ --- again, that is unknown to the agent. 
Similarly, denote $\mathcal{F}^i_t = \{s: s + d^i_s - 1 = t\}$ as the set of feedbacks revealed to agent $i$ at time 
$t$ where $d^i_s$ is the delay of iteration $s$ to agent $i$.
Although the limitation in communication and information, the agent $i$ is interested in the global loss $F_t(.)$ where $F_t (.) = \frac{1}{n} \sum_{i=1}^n f^i_t (.)$. In particular, at time $t$, the loss of agent $i$ for chosen $\vect{x}^i_t$ is $F_t (\vect{x}^i_t)$. 
Note that each agent $i$ does not know $F_{t}$ but has only knowledge of $f^i_{t}$ --- its observed cost function. 
The objective here is to minimize regret for all agents: 
$$
\mathcal{R}_T := \max_{i} \biggl( \sum_{t = 1}^T F_t(\vect{x}^i_t) - \min_{\vect{x} \in \mathcal{K}} \sum_{t=1}^T F_t(\vect{x}) \biggr)
$$
\begin{wrapfigure}{r}{0.5\textwidth}
    \vspace{+20pt}
    \includegraphics[width=0.5\textwidth]{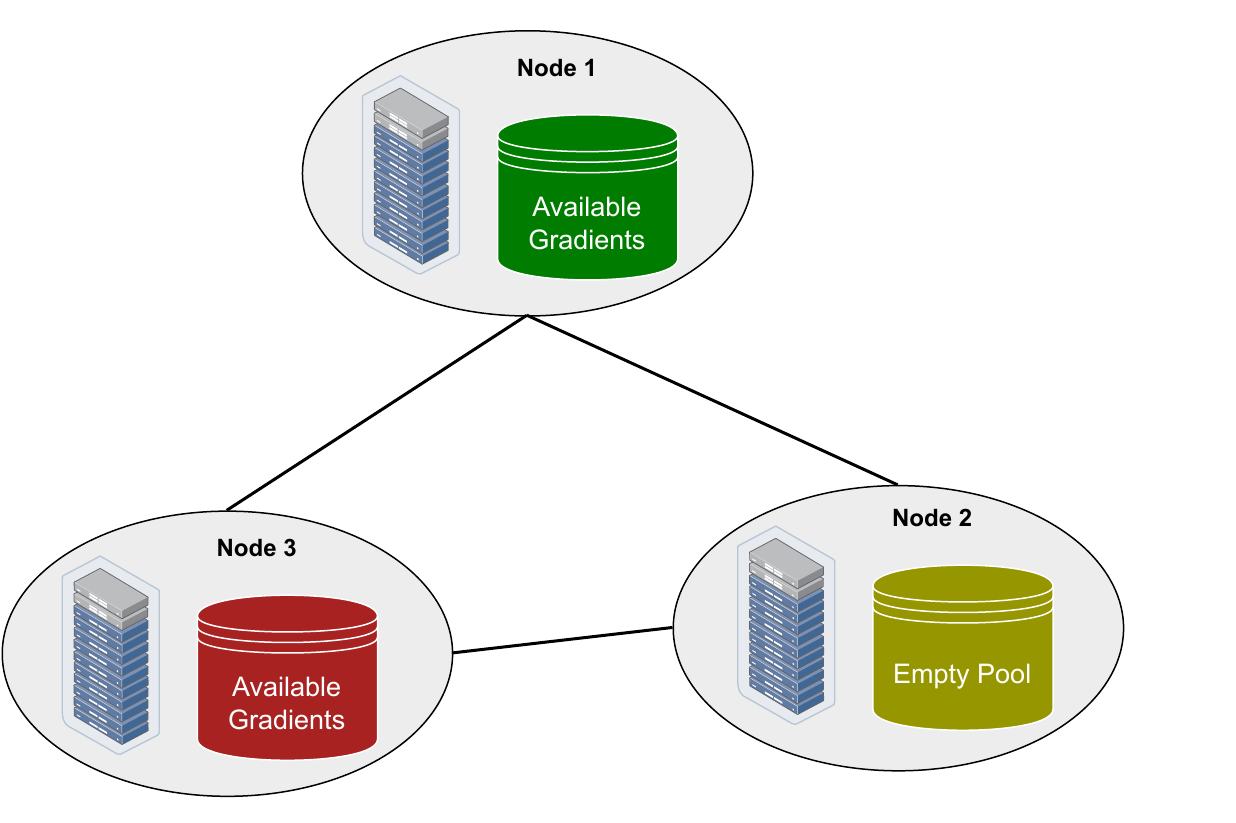}
    \caption{Illustration of delayed feedback in distributed system. Given a time $t$, each agent holds a distinct pool of available gradient feedback from $s <t$ that is ready for computation at the current time. The pool can also be empty if no feedback is provided.}
    \vspace{-60pt}
\end{wrapfigure}

\subsection{Our contribution}
The challenge in designing robust and efficient algorithms for these problems is to address the following issues simultaneously:
\begin{itemize}
    \item Uncertainty (online setting, agents observe their loss functions only after selecting their decisions).
    \item Asynchronous (distributed setting with different delayed feedback between agents)
    \item Partial information (distributed setting, agents know only its local loss functions while attempting to minimize the cumulative loss).
    \item Low computation/communication resources of agents (so it is desirable that each agent performs a small number of gradient computations and communications). 
\end{itemize}

We introduce performance-guaranteed algorithms in solving the centralized and distributed constraint online convex optimization problem with adversarial delayed feedback. Our algorithms achieve an \emph{optimal} regret bound for centralized and distributed settings.
Specifically, we obtain the regret bound of $O(\sqrt{B})$ where $B$ is the total delay in the centralized setting and 
$B$ is the average total delay over all agents in the distributed setting. Note that, if $d$ is a maximum delay of each feedback then 
our regret bound becomes $\mathcal{O}(\sqrt{dT})$. This result recovers the regret bound of $O(\sqrt{T})$ in the classic setting without delay (i.e., $d = 1$). Additionally, the algorithms can be made projection-free by selecting appropriate oracles, allowing them to be implemented in different contexts based on the computational capacity of local devices. Finally, we illustrate the practical potential of our algorithms and provide a thorough analysis of their performance which is predictably explained by our theoretical results. The experiments demonstrate that our proposed algorithms outperform existing solutions in both synthetic and real-world datasets.

\begin{table*}[hbt!]
\caption{Comparisons to previous algorithms DGD \cite{quanrud} and DOFW \cite{OFW-delay} on centralized online convex optimization with delays bounded by $d$. Our algorithms are in bold.}
\centering
\begin{tabular*}{\linewidth}{c @{\extracolsep{\fill}} ccccc}
\toprule
Algorithm & Centralized & Distributed & Adversarial Delay & Projection-free & Regret \\
\midrule
DGD & Yes & - & Yes  & - & $\mathcal{O} (\sqrt{dT})$ \\ 
DOFW  & Yes & - & Yes & Yes &$\mathcal{O}(T^{3/4} + dT^{1/4})$ \\
\textbf{DeLMFW}  & Yes & - & Yes & Yes &$\mathcal{O}(\sqrt{dT})$ \\
\textbf{De2MFW} & - & Yes & Yes & Yes &$\mathcal{O}(\sqrt{dT})$ \\
\bottomrule
\end{tabular*}
\hspace{0.8cm}
\label{tab:dataset}
\end{table*}
\subsection{Related Work}

\paragraph{Online Optimization with delayed feedback}

Over the years, studies on online optimization with delayed feedback have undergone a swift evolution. \cite{learner-are-fast} shed light on the field by focusing on the convergence properties of online stochastic gradient descent with delays. They provide a regret bound of $\mathcal{O}(\sqrt{d T})$ with $d$ the delay value if $d^2 \leq T$. Later on, \cite{quanrud} proposes a centralized (single-agent) gradient descent algorithm under adversarial delays. The theoretical analysis of \cite{quanrud} entails a regret bound of $\mathcal{O}(\sqrt{B})$, where $B$ is the total delay. This bound becomes $\mathcal{O}(\sqrt{d T})$ if $d$ is the upper bound of delays. \cite{bold-ofw} provided a black-box style method to learn under delayed feedback. They showed that for any non-delayed online algorithms, the additional regret in the presence of delayed feedback depends on its prediction drifts. \cite{constrained-oco-delay} developed an online saddle point algorithm for convex optimization with feedback delays. They achieved a sublinear regret $\mathcal{O}(\sqrt{dT})$ where $d$ is a fixed constant delay value. Recently, \cite{OFW-delay} proposed a first Frank-Wolfe-type online algorithm with delayed feedback. They modified the Online Frank-Wolfe (OFW) for the unknown delays setting and provided a regret bound of $\mathcal{O}(T^{3/4} + dT^{1/4})$. 
This is the current state of the art for projection-free (Frank-Wolfe-type) algorithms with delays. 
Our bound of $O(\sqrt{dT})$ improves over the aforementioned results.

\paragraph{Distributed Online Optimization.}  %
\cite{Yan:2013} introduced decentralized online projected subgradient descent and showed vanishing regret for convex and strongly convex functions.  In contrast,\cite{Hosseini:2013} extended distributed dual averaging technique to the online setting, using a general regularized projection for both unconstrained and constrained optimization.
A distributed variant of online conditional gradient~\cite{Hazanothers16:Introduction-to-online} was designed and analyzed in~\cite{Zhang:2017} that requires linear minimizers and uses exact gradients. Computing exact gradients may be prohibitively expensive for moderately sized data and intractable when a closed form does not exist. \cite{THANG2022334} proposes a decentralized online algorithm for convex function using stochastic gradient estimate and multiple optimization oracles. This work achieves the optimal regret bound of $O(T^{1/2})$ and requires multiple gradient evaluation and communication rounds. Later on, \cite{nguyen23a} provide a decentralized algorithm that uses stochastic gradient estimate and reduces communication by using only one gradient evaluation. \cite{survey-distributed} provides a comprehensible survey on recent developpement of distributed \oco.
More recent work on distributed online optimization with feedback delays is proposed in \cite{decentralized-feedback-delays}. The authors consider a distributed projected gradient descent algorithm where each agent has a fixed known amount of delay $d_i$. They provide a regret bound of $\mathcal{O} (\sqrt{d T})$ where $d = \max_{i} d_{i}$ but the delays $d_{i}$ must be fixed (non-adversarial). 

Despite the growing number of studies on decentralized online learning in recent years, there is a lack of research that accounts for the \emph{adversarial/online} delayed feedback. In this paper, we first present a centralized online algorithm and then extend it to a distributed online variant that takes an adversarial delay setting into consideration. 

\section{Projection-Free Algorithms under Delayed Feedback}
In this section, we will present our method for addressing delayed feedback in the \oco problem. In \Cref{sec:preliminaries}, we state some assumptions and results that form the basis of our approach.  We then describe our first algorithm \ce for centralized setting in \Cref{sec:centralized} and extend it to distributed setting in \Cref{sec:distributed}.
\subsection{Preliminaries}
\label{sec:preliminaries}
Throughout the paper, we use boldface letter e.g $\vect{x}$ to represent vectors. We denote by $\vect{x}_t$ the final decision of round $t$ and $\vect{x}_{t,k}$ to be the sub-iterate at round $k$ of $t$. In distributed setting, we add a superscript $i$ to make distinction between agents. If not specified otherwise, we use Euclidean norm $\norm{.}$ and suppose that the constraint set $\mathcal{K} \subset \mathbb{R}^m$ is convex. We state the following standard assumptions in \oco.

\begin{assumption}
\label{assum:boundedness}
The constraint set $\mathcal{K}$ is a bounded convex set with diameter $D$ i.e $D := \sup_{\vect{x}, \vect{y} \in \mathcal{K}} \norm{\vect{x} - \vect{y}}$
\end{assumption}

\begin{assumption}[Lipschitz]
\label{assum:lipschitz}
For all $\vect{x} \in \kcal$, there exists a constant $G$ such that, $\forall t \in [T]$, $\norm{\nabla f_t (\vect{x})} \leq G$
\end{assumption}

\begin{assumption}[Smoothness]
\label{assum:smooth}
For all $\vect{x}, \vect{y} \in \kcal$, there exists a constant $\beta$ such that, $\forall t \in [T]$ :
\begin{align*}
    f_t(\vect{y}) \leq f_t(\vect{x}) + \scalarproduct{\nabla f_t(\vect{x}), \vect{y}-\vect{x}} + \frac{\beta}{2} \norm{\vect{y}-\vect{x}}^2 
\end{align*}
or equivalently $\norm{\nabla f_t(\vect{x}) - \nabla f_t(\vect{y})} \leq \beta \norm{\vect{x} - \vect{y}}$.
\end{assumption}

\paragraph{Online Linear Optimization Oracles} 
In the context of the Frank-Wolfe (\fw) algorithm, we utilize multiple optimization oracles to approximate the gradient of the upcoming loss function by solving an online linear problem. This approach was first introduced in \cite{ChenHarshaw18:Projection-Free-Online}. Specifically, the online linear problem involves selecting a decision $\vect{v}_t \in \kcal$ at every time $t \in [T]$. The adversary then reveals a vector $\vect{g}_t$ and loss function $\scalarproduct{\vect{g}_t, \cdot}$ to the oracle. The objective is to minimize the oracle's regret. A possible candidate for an online linear oracle is the Follow the Perturbed Leader algorithm (\ftpl) \cite{KALAI2005291}. Given a sequence of historical loss functions $\scalarproduct{\vect{g}_{\ell}, \cdot}, s \in [1, t]$ and a random vector $\vect{n}$ drawn uniformly from a probability distribution \dcal, \ftpl makes the following update.
\begin{align}
\label{eq:ftpl-update}
    \hat{\vect{v}}_{t+1} = \argmin_{\vect{v} \in \kcal} \curlybracket{
        \zeta \sum_{\ell = 1}^{t} \scalarproduct{\vect{g}_{\ell} , \vect{v}} 
            + \scalarproduct{\vect{n}, \vect{v}}
    }
\end{align}

\begin{lemma}[Theorem 5.8 \cite{Hazanothers16:Introduction-to-online}]
\label{lmm:ftpl-regret}
Given a sequence of linear loss function $f_1, \ldots, f_T$. Suppose that \Cref{assum:boundedness,assum:lipschitz,assum:smooth} hold true. Let $\dcal$ be a
the uniform distribution over hypercube $\bracket{0,1}^m$. The regret of FTPL is 
\begin{align*}
    \rcal_{T, \ocal} \leq \zeta DG^2 T + \frac{1}{\zeta} \sqrt{m} D
\end{align*}
where $\zeta$ is learning rate of algorithm. 
\end{lemma}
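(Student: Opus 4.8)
The plan is to follow the classical ``Follow-the-Leader / Be-the-Leader'' (FTL/BTL) analysis of \ftpl, treating the random perturbation $\scalarproduct{\vect{n}, \cdot}$ as a fictitious linear loss played at a round $0$. Writing $G_t = \sum_{\ell=1}^t \vect{g}_\ell$, the iterate actually played at round $t$ is the perturbed leader $\vect{v}_t = \argmin_{\vect{v}\in\kcal}\curlybracket{\zeta\scalarproduct{G_{t-1}, \vect{v}} + \scalarproduct{\vect{n}, \vect{v}}}$, while $\vect{v}_{t+1}$ denotes the ``leader in hindsight'' that has additionally seen $\vect{g}_t$. First I would decompose the regret against an arbitrary fixed $\vect{u}\in\kcal$ as
\[
\sum_{t=1}^T \scalarproduct{\vect{g}_t, \vect{v}_t - \vect{u}} = \underbrace{\sum_{t=1}^T \scalarproduct{\vect{g}_t, \vect{v}_t - \vect{v}_{t+1}}}_{\text{stability/drift}} \;+\; \underbrace{\sum_{t=1}^T \scalarproduct{\vect{g}_t, \vect{v}_{t+1} - \vect{u}}}_{\text{hindsight leader}} .
\]
The two terms on the right are treated separately and will produce the $\zeta DG^2 T$ and the $\frac{1}{\zeta}\sqrt{m} D$ contributions respectively. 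Since the perturbation is random, $\rcal_{T,\ocal}$ is understood as the expected regret over the draw of $\vect{n}$, and all bounds below are taken in expectation.

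For the hindsight-leader term I would invoke the BTL lemma on the augmented loss sequence whose round $0$ carries the loss $\scalarproduct{\vect{n}, \cdot}$: a straightforward induction shows that playing $\vect{v}_{t+1}$ at every round is no worse than any fixed point, which after rearranging gives $\zeta\sum_{t=1}^T\scalarproduct{\vect{g}_t, \vect{v}_{t+1} - \vect{u}} \le \scalarproduct{\vect{n}, \vect{u} - \vect{v}_1}$. Dividing by $\zeta$ and applying Cauchy--Schwarz together with \Cref{assum:boundedness} bounds this by $\frac{1}{\zeta}\norm{\vect{n}} D$; since $\vect{n}$ is drawn uniformly from $\bracket{0,1}^m$ we have $\norm{\vect{n}}\le\sqrt{m}$, which yields exactly the $\frac{1}{\zeta}\sqrt{m} D$ term.

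The remaining stability term is where the real work lies. Using Cauchy--Schwarz and \Cref{assum:lipschitz}, each summand is at most $G\norm{\vect{v}_t - \vect{v}_{t+1}}$, so it suffices to show that the expected displacement of the perturbed minimizer between consecutive rounds is $\ocal(\zeta)$. The two minimizers solve linear programs over $\kcal$ whose objectives differ only by the shift $\zeta\vect{g}_t$; deterministically the argmin jumps between vertices and this difference can be as large as $D$, so no per-step control is possible without the perturbation. The key step is therefore to exploit the \emph{smoothing} induced by the uniform noise: the expected minimizer $\vect{c}\mapsto\E_{\vect{n}}\bracket{\argmin_{\vect{v}\in\kcal}\scalarproduct{\vect{c}+\vect{n}, \vect{v}}}$ is the gradient of the concave smoothed support function $\vect{c}\mapsto\E_{\vect{n}}\bracket{\min_{\vect{v}\in\kcal}\scalarproduct{\vect{c}+\vect{n},\vect{v}}}$ and is Lipschitz in $\vect{c}$ with modulus controlled by the perturbation, so shifting the cumulative loss by $\zeta\vect{g}_t$ moves the expected play by $\ocal(\zeta G D)$. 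Summing over $t$ then delivers the $\zeta DG^2 T$ term.

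I expect this last step --- quantifying the expected drift $\E\norm{\vect{v}_t-\vect{v}_{t+1}}$ through a coupling between the law of $\vect{n}$ and its $\zeta\vect{g}_t$-translate (equivalently, through the smoothness of the convolved support function) --- to be the main obstacle, since it is precisely where the randomness is essential and where the dimension dependence enters. The BTL induction and the perturbation-range bound, by contrast, are routine once the fictitious round $0$ has been set up.
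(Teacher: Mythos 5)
First, note that the paper contains no proof of this lemma at all: it is imported verbatim as Theorem~5.8 of \cite{Hazanothers16:Introduction-to-online}, so your attempt can only be measured against the classical Kalai--Vempala/Hazan analysis that the citation points to. Your skeleton is exactly that analysis: the FTL--BTL decomposition with the perturbation $\scalarproduct{\vect{n},\cdot}$ as a fictitious round-$0$ loss, the hindsight-leader term bounded by $\frac{1}{\zeta}\scalarproduct{\vect{n},\vect{u}-\vect{v}_1}\leq \frac{1}{\zeta}\norm{\vect{n}}D\leq\frac{1}{\zeta}\sqrt{m}D$, and a stability term to be shown of order $\zeta DG^2T$. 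The BTL half is complete and correct as written.

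The gap is in the half you yourself flag as ``the real work'': the per-round stability estimate is asserted, not proven, and the specific reduction you set up would not deliver the stated constant. By first applying Cauchy--Schwarz to get $G\,\E\norm{\vect{v}_t-\vect{v}_{t+1}}$ and then coupling the uniform law on $\bracket{0,1}^m$ with its translate by $\zeta\vect{g}_t$, you can only conclude $\E\norm{\vect{v}_t-\vect{v}_{t+1}}\leq D\cdot\norm{\zeta\vect{g}_t}_1\leq \zeta\sqrt{m}GD$ (the total-variation distance between a unit cube and its translate by $\vect{w}$ is of order $\norm{\vect{w}}_1$, and $\norm{\vect{g}_t}_1\leq\sqrt{m}\norm{\vect{g}_t}$), which yields $\zeta\sqrt{m}DG^2T$ --- an extra $\sqrt{m}$ relative to the lemma. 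To obtain the dimension-free first term one must bound the \emph{directional} quantity $\E\scalarproduct{\vect{g}_t,\vect{v}_t-\vect{v}_{t+1}}$ directly, e.g.\ via the smoothness of the convolved support function $\Phi(\vect{c})=\E_{\vect{n}}\min_{\vect{v}\in\kcal}\scalarproduct{\vect{c}+\vect{n},\vect{v}}$ along the direction $\vect{g}_t$; you name precisely this object, but then retreat to bounding the full displacement norm, which is where the slack enters. Relatedly, your closing remark misplaces where the dimension dependence arises: the $\sqrt{m}$ in the bound comes from $\norm{\vect{n}}\leq\sqrt{m}$ in the BTL term, which you already handled, while the stability term must be kept dimension-free --- exactly why the naive coupling is insufficient. (For what it is worth, the distribution-shift device you invoke is the one the paper does use, in its proof of \Cref{lmm:ftpl-cost-delay}, where the analogous $L^1$-stability step is likewise treated somewhat informally.)
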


\paragraph{Delay Mechanism}
We consider the following delay mechanism. At round $t$, the agent receives a set of delayed gradient $\nabla f_s (\vect{x}_s)$ from previous rounds $s \leq t$ such that $s + d_s -1 = t$, where $d_s$ is the delay value of iteration $s$. We denote by $\fcal_t = \curlybracket{s:s + d_s - 1 = t}$ the set of indices released at round $t$. Following this setting, the feedback of round $t$ is released at time $t + d_t -1$, and the case $d_t = 1$ is considered as no delay. We suppose that the delay value is unknown to the agent and make no assumption about the set $\fcal_t$. Consequently it is possible for the set to be empty at any particular round. We extend the aforementioned mechanism to the distributed setting by assuming that each agent has a unique delay value at each round $t \in [T]$. The delay value of agent $i$ at round $t$ is denoted by $d^i_t$, and the set of delayed feedbacks of agent $i$ at round $t$ is denoted by $\fcal^i_t = \curlybracket{s:s + d^i_s - 1 = t}$, which is distinct between agents.

%

\subsection{Centralized Algorithm}
\label{sec:centralized}

We describe the procedure of \Cref{algo:delmfw} in details. At each round $t$, the agent performs two blocks of operations: prediction and update. During the prediction block, the agent performs $K$ iterations of \fw updates by querying solutions from the oracles $\ocal_k$, $k \in [K]$ and updates the sub-iterate vector $\vect{x}_{t,k+1}$ using a convex combination of the previous one and the oracle's output. The agent then plays the final decision $\vect{x}_t = \vect{x}_{t,K+1}$ and incurs a loss $f_t (\vect{x}_t)$ which may not be revealed at $t$ due to delay. From the mechanism described in \Cref{sec:preliminaries}, there exists a set of gradient feedbacks from the previous rounds revealed at $t$ whose indices are in $\fcal_t$. The update block involves observing the delayed gradients evaluated at $K$ sub-iterates of rounds $s \in \fcal_t$, computing surrogate gradients $\curlybracket{\vect{g}_{t,k}, k \in [K]}$ by summing the delayed gradients and feeding them back to the oracles $\curlybracket{\ocal_k, k \in K}$.

In our algorithm, the agent employs a suite of online linear optimization oracles, denoted $\ocal_1, \ldots, \ocal_K$. These oracles utilize feedbacks accumulated from previous rounds to estimate the gradient of the upcoming loss function. However, in the delay setting, these estimations may be perturbed owing to a lack of information. For example, if there is no feedback from rounds $t$ to $t'$, that is, $\fcal_s = \emptyset$ for $s \in [t, t']$, the oracles will resort to the information available in round $t-1$ to estimate the gradient of all rounds from $t+1$ to $t'+1$. As a result, the oracle's output remains unchanged for these rounds, and decisions $\curlybracket{\vect{x}_s: s \in [t+1, t'+1]}$ are not improved. Our analysis for \Cref{algo:delmfw,algo:de2mfw} will be focused on assessing the impact of delayed feedback on the oracle's output.

\begin{lemma}
\label{lmm:ftpl-cost-delay}
Let $\hat{\vect{v}}_t$ be the \ftpl prediction defined in \Cref{eq:ftpl-update} and 
\begin{align*}
    \vect{v}_t = \argmin_{\vect{v} \in \kcal} \curlybracket{\zeta \sum_{\ell = 1}^{t-1} \scalarproduct{\sum_{s \in \fcal_\ell}\vect{g}_s , \vect{v}} + \scalarproduct{\vect{n}, \vect{v}}}
\end{align*}
the prediction of \ftpl with delayed feedback.
For all $t \in [T]$, we have: 
\begin{align*}
    \norm{\vect{v}_t - \hat{\vect{v}}_t} \leq \zeta DG \sum_{s < t} \indi_{\curlybracket{s + d_s > t}}
\end{align*}
\end{lemma}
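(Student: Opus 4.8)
The plan is to trace the discrepancy between $\vect{v}_t$ and $\hat{\vect{v}}_t$ back to the discrepancy in the linear objectives the two oracles minimize, and then control how far the minimizer of a linear objective over $\kcal$ can move when that objective is perturbed. Writing $\hat{\vect{G}}_t := \sum_{\ell=1}^{t-1}\vect{g}_\ell$ and $\vect{G}_t := \sum_{\ell=1}^{t-1}\sum_{s\in\fcal_\ell}\vect{g}_s$, both $\hat{\vect{v}}_t$ and $\vect{v}_t$ are of the form $\argmin_{\vect{v}\in\kcal}\curlybracket{\zeta\scalarproduct{\vect{G},\vect{v}}+\scalarproduct{\vect{n},\vect{v}}}$ with the \emph{same} perturbation $\vect{n}$, differing only in that $\hat{\vect{v}}_t$ uses $\hat{\vect{G}}_t$ and $\vect{v}_t$ uses $\vect{G}_t$. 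So the first task is to compute $\hat{\vect{G}}_t-\vect{G}_t$ exactly.

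First I would reindex the delayed accumulation. Swapping the order of summation, $\vect{G}_t=\sum_{s}\vect{g}_s\,\abs{\curlybracket{\ell\in[1,t-1]:s\in\fcal_\ell}}$, and since $s\in\fcal_\ell$ holds for the unique index $\ell=s+d_s-1$, this index lies in $[1,t-1]$ iff $s+d_s\le t$. Hence $\vect{G}_t=\sum_{s:\,s+d_s\le t}\vect{g}_s$, while $\hat{\vect{G}}_t=\sum_{s<t}\vect{g}_s$. Because $d_s\ge1$ forces $\curlybracket{s:s+d_s\le t}\subseteq\curlybracket{s:s<t}$, subtracting gives the clean identity $\hat{\vect{G}}_t-\vect{G}_t=\sum_{s<t}\indi_{\curlybracket{s+d_s>t}}\vect{g}_s$: the objectives differ exactly by the gradients of rounds that have been played but whose feedback has not yet arrived by round $t$. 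Applying the triangle inequality and \Cref{assum:lipschitz} (so that $\norm{\vect{g}_s}\le G$) then yields $\norm{\hat{\vect{G}}_t-\vect{G}_t}\le G\sum_{s<t}\indi_{\curlybracket{s+d_s>t}}$.

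It remains to convert this bound on the objective gap into the claimed bound on $\norm{\vect{v}_t-\hat{\vect{v}}_t}$, which is the heart of the argument. I would do this by a telescoping/hybrid argument: enumerate the missing rounds $M_t=\curlybracket{s<t:s+d_s>t}$ and reinsert their gradients into the objective one at a time, interpolating between $\vect{v}_t$ and $\hat{\vect{v}}_t$ through a chain of minimizers whose consecutive objectives differ by a single term $\zeta\vect{g}_s$. The triangle inequality then reduces the global bound to a sum over $M_t$ of per-step displacements, so that a per-step bound of the form $\zeta D\norm{\vect{g}_s}\le\zeta DG$ telescopes into the target $\zeta DG\abs{M_t}=\zeta DG\sum_{s<t}\indi_{\curlybracket{s+d_s>t}}$.

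The main obstacle is precisely this per-step control of the minimizer of a linear objective over $\kcal$: such a minimizer is an extreme point and is \emph{not} Lipschitz in the objective in general (over the hypercube of \Cref{lmm:ftpl-regret}, an arbitrarily small perturbation can flip a coordinate and move the vertex by a full unit), so the optimality/monotonicity inequalities alone only yield $\scalarproduct{\zeta\vect{g}_s,\,\vect{w}-\vect{w}'}\le0$ together with the crude diameter bound $\norm{\vect{w}-\vect{w}'}\le D$. Obtaining the extra factor $\zeta G$ rather than settling for $D$ is where the real work lies; I expect to need a genuine stability property of the \ftpl oracle --- either a smoothing effect of the perturbation $\vect{n}$ that regularizes the minimizer, or a known stability estimate for the oracle's outputs --- and I would concentrate my effort on pinning down exactly which property delivers the $\zeta DG$ scaling.
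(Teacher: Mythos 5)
Your setup is correct and, on one point, more careful than the paper itself: the reindexing showing that the two cumulative objectives differ exactly by the not-yet-arrived gradients, i.e.\ $\hat{\vect{u}}_t - \vect{u}_t = \zeta\sum_{s<t}\indi_{\curlybracket{s+d_s>t}}\vect{g}_s$ and hence $\norm{\vect{u}_t - \hat{\vect{u}}_t}\le \zeta G\sum_{s<t}\indi_{\curlybracket{s+d_s>t}}$, is exactly the paper's (one-line) justification, spelled out. But the proposal stops precisely where the lemma's content begins. The per-step displacement bound $\zeta D\norm{\vect{g}_s}$ that your telescoping chain requires is never established; you correctly observe that the minimizer of a linear objective over $\kcal$ is not Lipschitz in the objective for a fixed realization of $\vect{n}$, and then defer to ``a genuine stability property'' to be identified later. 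That deferral is the gap: for a fixed $\vect{n}$ the claimed inequality is in general \emph{false} (a single arbitrarily small increment $\zeta\vect{g}_s$ can flip a vertex of a polytope and move the argmin by $\Theta(D)$), so no pointwise hybrid argument of the kind you describe can close it. The lemma is really a statement about the \emph{expected} \ftpl output, and the smoothing by $\vect{n}$ is not an optional refinement but the entire proof.

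The paper's proof supplies exactly this missing mechanism, and does so in one shot with no telescoping. Writing $h(\vect{n})=\argmin_{\vect{v}\in\kcal}\scalarproduct{\vect{n},\vect{v}}$, it interprets $\hat{\vect{v}}_t=\E\bracket{h(\vect{n}+\hat{\vect{u}}_t)}=\int h(\vect{n})\,p(\vect{n}-\hat{\vect{u}}_t)\,\mathrm{d}\vect{n}$ and likewise $\vect{v}_t=\int h(\vect{n})\,p(\vect{n}-\vect{u}_t)\,\mathrm{d}\vect{n}$, so the difference becomes an integral of the \emph{fixed} map $h$ against the difference of two shifted perturbation densities. Recentering by $h(\vect{0})$ (legitimate because both densities integrate to one) and using $\norm{h(\vect{n})-h(\vect{0})}\le D$ gives $\norm{\vect{v}_t-\hat{\vect{v}}_t}\le D\int\abs{p(\vect{n}-\vect{u}_t)-p(\vect{n}-\hat{\vect{u}}_t)}\mathrm{d}\vect{n}\le DL\norm{\vect{u}_t-\hat{\vect{u}}_t}$, where $L$ quantifies the stability of the perturbation distribution under shifts; combining with your gradient-gap bound yields the claim directly. (Incidentally, the paper's own proof ends with the constant $\zeta DLG$ while the lemma statement drops the $L$ --- a cosmetic inconsistency, but it confirms that the distributional-stability constant, the very object your proposal leaves unspecified, is where the $\zeta DG$ scaling comes from.) So you diagnosed the right obstacle and even guessed the right cure, but a proof attempt that names the crucial property without proving or invoking it has not proved the lemma.
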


\begin{algorithm}
\algsetup{linenosize=\tiny}
\small
\begin{flushleft}
\textbf{Input}:  Constraint set $\mathcal{K}$, 
	number of iterations $T$, sub-iteration $K$, online oracles $\curlybracket{\mathcal{O}_{k}}_{k=1}^K$, step sizes $\eta_k \in (0, 1]$
\end{flushleft}
\begin{algorithmic}[1]
\FOR {$t = 1$ to $T$}	 		
    \STATE Initialize arbitrarily $\vect{x}_{t,1} \in \mathcal{K}$ 
    \FOR{$k=1$ to $K$}
        \STATE Query $\vect{v}_{t,k}$ from oracle $\mathcal{O}_{k}$.
        \STATE $\vect{x}_{t,k+1} \gets (1 - \eta_{k}) \vect{x}_{t,k} + \eta_{k} \vect{v}_{t,k}$.
    \ENDFOR
    \STATE $\vect{x}_{t} \gets \vect{x}_{t,K+1}$, play $\vect{x}_{t}$ and incurs loss $f_t(\vect{x}_t)$ 
    \STATE Receive $\mathcal{F}_t = \curlybracket{s \in [T]: s + d_s -1 = t} $  
    \IF {$\mathcal{F}_t = \emptyset$} 
        \STATE do nothing 
    \ELSE
        \FOR{$k=1$ to $K$}
            \STATE $\vect{g}_{t,k} \gets \sum_{s \in \mathcal{F}_t} \nabla f_s (\vect{x}_{s,k}) $
            \STATE Feedback $\langle \vect{g}_{t,k}, \cdot \rangle$ 
    				to oracles $\mathcal{O}_{k}$. 
        \ENDFOR            
    \ENDIF
\ENDFOR
\end{algorithmic}
\caption{DeLMFW}
\label[algorithm]{algo:delmfw}
\end{algorithm}

\begin{theorem}
\label[theorem]{thm:centralized}
Given a constraint set $\mathcal{K}$. Let $A = \max\curlybracket{3, \frac{G}{\beta D}}$, $\eta_k = \min \curlybracket{1,\frac{A}{k}}$, and $K = \sqrt{T}$. Suppose that \Cref{assum:boundedness,assum:lipschitz,assum:smooth} hold true. If we choose FTPL as the underlying oracle and set $\zeta = \frac{1}{G \sqrt{B}}$, the regret of \Cref{algo:delmfw} is 
\begin{align}
    \sum_{t=1}^T &\bracket{f_t(\vect{x}_t) - f_t(\vect{x}^*)} 
    \leq 2 \beta A D^2 \sqrt{T}  
        + 3(A+1)\parenthese{DG\sqrt{B} + \rcal_{T, \ocal}}
\end{align}
where $B = \sum_{t=1}^T d_t$, the sum of all delay values and $\rcal_{T,\ocal}$ is the regret of FTPL with respect to the current choice of $\zeta$.
\end{theorem}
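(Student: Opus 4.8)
The plan is to follow the standard multi-oracle Frank--Wolfe analysis, isolating the effect of the delay into an additive perturbation of the oracle predictions. First I would fix a sub-iteration index $k$ and apply smoothness (\Cref{assum:smooth}) to the update $\vect{x}_{t,k+1} = (1-\eta_k)\vect{x}_{t,k} + \eta_k\vect{v}_{t,k}$. After subtracting $f_t(\vect{x}^*)$ and using convexity to replace $\scalarproduct{\nabla f_t(\vect{x}_{t,k}), \vect{x}^* - \vect{x}_{t,k}}$ by $f_t(\vect{x}^*) - f_t(\vect{x}_{t,k})$, and bounding $\norm{\vect{v}_{t,k}-\vect{x}_{t,k}} \leq D$ via \Cref{assum:boundedness}, this gives
\begin{align*}
 f_t(\vect{x}_{t,k+1}) - f_t(\vect{x}^*) &\leq (1-\eta_k)\parenthese{f_t(\vect{x}_{t,k}) - f_t(\vect{x}^*)} \\
 &\quad + \eta_k \scalarproduct{\nabla f_t(\vect{x}_{t,k}), \vect{v}_{t,k} - \vect{x}^*} + \frac{\beta \eta_k^2 D^2}{2}.
\end{align*}
Summing over $t\in[T]$ and writing $\Delta_k := \sum_{t=1}^T \bracket{f_t(\vect{x}_{t,k}) - f_t(\vect{x}^*)}$ yields the Frank--Wolfe recursion $\Delta_{k+1} \leq (1-\eta_k)\Delta_k + \eta_k \sum_{t=1}^T \scalarproduct{\nabla f_t(\vect{x}_{t,k}), \vect{v}_{t,k} - \vect{x}^*} + \tfrac{\beta D^2 T}{2}\eta_k^2$.

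The crux is to bound the linear term $\sum_t \scalarproduct{\nabla f_t(\vect{x}_{t,k}), \vect{v}_{t,k} - \vect{x}^*}$ by the oracle regret plus a delay penalty. Since the feedback fed to $\ocal_k$ is the reordered, batched surrogate $\vect{g}_{t,k}=\sum_{s\in\fcal_t}\nabla f_s(\vect{x}_{s,k})$, the actual prediction $\vect{v}_{t,k}$ is not what a no-delay run produces. I would therefore introduce the counterfactual undelayed \ftpl prediction $\hat{\vect{v}}_{t,k}$ (built from all gradients $\nabla f_\ell(\vect{x}_{\ell,k})$, $\ell<t$) and split
\begin{align*}
 \scalarproduct{\nabla f_t(\vect{x}_{t,k}), \vect{v}_{t,k} - \vect{x}^*} &= \scalarproduct{\nabla f_t(\vect{x}_{t,k}), \hat{\vect{v}}_{t,k} - \vect{x}^*} \\
 &\quad + \scalarproduct{\nabla f_t(\vect{x}_{t,k}), \vect{v}_{t,k} - \hat{\vect{v}}_{t,k}}.
\end{align*}
The first summand is controlled by the \ftpl regret guarantee (\Cref{lmm:ftpl-regret}) applied to the linear losses $\scalarproduct{\nabla f_t(\vect{x}_{t,k}),\cdot}$ with comparator $\vect{x}^*$, giving $\sum_t \scalarproduct{\nabla f_t(\vect{x}_{t,k}), \hat{\vect{v}}_{t,k} - \vect{x}^*} \leq \rcal_{T,\ocal}$. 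For the second summand I would use Cauchy--Schwarz and \Cref{assum:lipschitz} together with \Cref{lmm:ftpl-cost-delay}, so each term is at most $G\norm{\vect{v}_{t,k}-\hat{\vect{v}}_{t,k}} \leq \zeta D G^2 \sum_{s<t}\indi_{\curlybracket{s+d_s>t}}$; summing over $t$ and exchanging the order of summation bounds the double sum by $\sum_{s=1}^T (d_s-1) \leq B$. With $\zeta = 1/(G\sqrt{B})$ the penalty collapses to $\zeta D G^2 B = D G\sqrt{B}$, so $\sum_{t} \scalarproduct{\nabla f_t(\vect{x}_{t,k}), \vect{v}_{t,k} - \vect{x}^*} \leq \rcal_{T,\ocal} + D G\sqrt{B} =: R$, a bound identical across all oracles $k\in[K]$.

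It then remains to unroll $\Delta_{k+1} \leq (1-\eta_k)\Delta_k + \eta_k R + \tfrac{\beta D^2 T}{2}\eta_k^2$ with $\eta_k = \min\curlybracket{1, A/k}$ and $K=\sqrt{T}$. Since $\eta_1 = 1$, the product weights $\prod_{j>k}(1-\eta_j)$ annihilate the initial condition, and the standard Frank--Wolfe induction (ansatz $\Delta_k \leq \tfrac{c\,\beta D^2 T A}{k} + O(R)$, using $A\geq 3$ to close the induction across the $\eta_k = A/k$ phase) forces a $1/K$ decay of the quadratic term, $\tfrac{\beta D^2 T}{K}\cdot O(A) = O(\beta A D^2\sqrt{T})$, while the linear term contributes a constant multiple of $R$. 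Collecting constants yields $2\beta A D^2\sqrt{T} + 3(A+1)\parenthese{DG\sqrt{B}+\rcal_{T,\ocal}}$, and $\Delta_{K+1}=\sum_t\bracket{f_t(\vect{x}_t)-f_t(\vect{x}^*)}$ is exactly the claimed regret.

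I expect the main obstacle to be the second step: correctly handling that the surrogate gradients arrive out of order and in aggregated batches, so the oracle effectively faces a delayed, reshuffled version of the clean linear-loss sequence. The decomposition through the counterfactual $\hat{\vect{v}}_{t,k}$ is what makes \Cref{lmm:ftpl-regret,lmm:ftpl-cost-delay} applicable, and the delicate point is verifying that the cumulative-gradient discrepancy between $\vect{v}_{t,k}$ and $\hat{\vect{v}}_{t,k}$ is precisely the set of not-yet-released rounds $\curlybracket{s<t:s+d_s>t}$, whose total count over all $t$ telescopes to $B$. The recursion in the final step is routine Frank--Wolfe bookkeeping by comparison.
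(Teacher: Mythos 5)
Your proposal is correct and follows essentially the same route as the paper's proof: the same smoothness/convexity recursion, the same decomposition through the counterfactual undelayed \ftpl prediction $\hat{\vect{v}}_{t,k}$ bounded via \Cref{lmm:ftpl-cost-delay} (with the double sum telescoping to $B$) and the oracle regret, and the same step-size bookkeeping yielding the $3(A+1)$ factor. The only difference is cosmetic --- you sum over $t$ before unrolling the recursion in $k$, whereas the paper unrolls per round via \Cref{lmm:recurrence-1} and then sums --- and since the recursion is linear with $t$-independent coefficients, the two orderings are equivalent.
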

\paragraph{Discussion}
The regret bound of \Cref{thm:centralized} differs from that of the non-delayed MFW \cite{ChenHarshaw18:Projection-Free-Online} by the additive term $DG\sqrt{B}$ which represents the total cost of sending delayed feedback to the oracles over $T$ rounds (\Cref{lmm:ftpl-cost-delay}). If we assume that there exists a maximum value $d$ such that $d_t \leq d$ for all $t \in [T]$. Our regret bound becomes $\ocal (\sqrt{dT})$ which coincides with the setting in \cite{OFW-delay}, a delayed-feedback \fw algorithm that achieves $\ocal (T^{3/4} + dT^{1/4})$. Another line of work is from \cite{bold-ofw}, a framework that addresses delayed feedback for any base algorithm. By considering MFW as the base algorithm, their theoretical analysis suggests that the algorithm also achieves $\ocal(\sqrt{dT})$ regret bound. However, their delay value is not completely unknown to the agent because it is time-stamped by maintaining multiple copies of the base algorithm. We empirically show in \Cref{sec:exp} that this algorithm is highly susceptible to high delay values. Instead of using \ftpl, our algorithm has the flexibility to select any online algorithm as an oracle, for example, Online Gradient Descent \cite{Hazanothers16:Introduction-to-online}.

\subsection{Distributed Algorithm}
\label{sec:distributed}
In this section, we extend \Cref{algo:delmfw} to a distributed setting in which multiple agents collaboratively optimize a global model. Our setting considers a fully distributed framework, characterized by the absence of a server to coordinate the learning process. Let $\mathbf{W} \in \mathbb{R}_{+}^{n \times n}$ be the adjacency matrix of communication graph $\mathcal{G} =(V,E)$. The entries $w_{ij}$ are defined as 
\begin{align*}
    w_{ij} = \begin{cases}
            \dfrac{1}{1+\max\{\tau_i, \tau_j\}} & \text{if $(i,j) \in E$}\\
            0 &  \text{if $(i,j) \not\in E$,$i \neq j$}\\
            1 - \sum_{j \in N(i)} w_{ij} & \text{if $i=j$}
        \end{cases}
\end{align*}
where $\tau_i = \# \{j \in V: (i,j) \in E\}$ is the degree of vertex $i$.
The matrix $\mathbf{W}$ is doubly stochastic (i.e $\mathbf{W} \vect{1} = \mathbf{W}^T \vect{1} = \vect{1}$) and therefore possesses several useful properties associated with doubly stochastic matrices. 
We call $\lambda (\mathbf{W})$ the second-largest eigenvalue of $\mathbf{W}$ and define $k_0$ as the smallest integer that verifies $\lambda(\mathbf{W}) \leq  \parenthese{\frac{k_0}{k_0 +1}}^2$. Furthermore, we set $\rho = 1 - \lambda (\mathbf{W})$ to be the spectral gap of matrix $\mathbf{W}$.

At a high level, each agent maintains $K$ copies of the oracles $\ocal^i_1, \cdots, \ocal^i_K$ while performing prediction and update at every round $t$. The prediction block consists of performing $K$ \fw-steps while incorporating the neighbors' information. Specifically, the agent computes at its local level during the $K$ steps a local average decision $\vect{y}^i_{t,k}$ representing a weighted aggregation of its neighbor's current sub-iterates. The update vector is convex combination of the local average decision and the oracle's output. The final decision of agent $\vect{x}^i_t$ is disclosed at the end of $K$ steps. \Cref{lmm:decision-distance} shows that $\vect{y}^i_{t,k}$ is a local estimation of the global average $\overline{\vect{x}}_{t,k} = \frac{1}{n} \sum_{i=1}^n \vect{x}^i_{t,k}$ as $K$ increases.

Following the $K$ \fw-steps, the update block employs $K$ gradient updates utilizing the delayed feedback from previous rounds. The agent observes the delayed gradients evaluated on theirs corresponding subiterates and computes the local average gradient $\vect{d}^i_{t,k} $ through a weighted aggregation of the neighbors' current surrogates (\ref{eq:gradient-avg}). The agent updates the surrogate gradient via a gradient-tracking step (\ref{eq:gradient-tracking1}) to ensure that it approaches the global gradient as $K$ increases.
It is worth noting that feedback provided to the oracle contains information about delays experienced by all neighboring agents. Consequently, the oracle $\ocal^i_k$ observes delayed feedback from $\cup_{j \in \mathcal{N}(i)} \fcal^j_{t}$ instead of $\fcal^i_t$. This result highlights the dependency on the connectivity of the communication graph when considering the effect of delayed feedback to the oracle's output, as demonstrated in \Cref{lmm:ftpl-bound-distributed}.
\begin{lemma}
\label{lmm:decision-distance}
	Define $C_d = k_0\sqrt{n} D$, for all $t \in [T]$, $k \in [K]$, we have
		\begin{align}
		\max_{i \in [1,n]} \norm{\vect{y}^{i}_{t,k} - \overline{\vect{x}}_{t,k}} \leq \frac{C_d}{k}
	\end{align}
\end{lemma}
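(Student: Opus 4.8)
The plan is to lift the per-agent recursions into a single matrix recursion and track how fast the agents' sub-iterates collapse onto their running average. Stack the sub-iterates, the gossiped iterates, and the oracle outputs as the rows of matrices $\mathbf{X}_{t,k}, \mathbf{Y}_{t,k}, \mathbf{V}_{t,k} \in \mathbb{R}^{n\times m}$. With the updates described in the text, the prediction block reads $\mathbf{Y}_{t,k} = \mathbf{W}\mathbf{X}_{t,k}$ and $\mathbf{X}_{t,k+1} = (1-\eta_k)\mathbf{W}\mathbf{X}_{t,k} + \eta_k\mathbf{V}_{t,k}$. Writing $\mathbf{\Pi} = \frac{1}{n}\vect{1}\vect{1}^\top$ for the averaging projector, so that $\mathbf{\Pi}\mathbf{X}_{t,k}$ has every row equal to $\overline{\vect{x}}_{t,k}$, the first thing I would record is that because $\mathbf{W}$ is doubly stochastic ($\vect{1}^\top\mathbf{W} = \vect{1}^\top$), the global average \emph{decouples} from the gossip and evolves as $\overline{\vect{x}}_{t,k+1} = (1-\eta_k)\overline{\vect{x}}_{t,k} + \eta_k\overline{\vect{v}}_{t,k}$. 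This decoupling is what makes a clean consensus recursion possible.

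Next I would subtract the average dynamics from the iterate dynamics to obtain a recursion purely in the deviation $(\mathbf{I}-\mathbf{\Pi})\mathbf{X}_{t,k}$. Using $(\mathbf{W}-\mathbf{\Pi})\mathbf{\Pi}=\mathbf{0}$, which follows from $\mathbf{W}\mathbf{\Pi}=\mathbf{\Pi}$ and $\mathbf{\Pi}^2=\mathbf{\Pi}$, this gives
\[
(\mathbf{I}-\mathbf{\Pi})\mathbf{X}_{t,k+1} = (1-\eta_k)(\mathbf{W}-\mathbf{\Pi})(\mathbf{I}-\mathbf{\Pi})\mathbf{X}_{t,k} + \eta_k(\mathbf{I}-\mathbf{\Pi})\mathbf{V}_{t,k}.
\]
Setting $\Phi_k := \norm{(\mathbf{I}-\mathbf{\Pi})\mathbf{X}_{t,k}}_F$ and taking norms, I would control the contraction term by the spectral gap, $\norm{(\mathbf{W}-\mathbf{\Pi})\vect{z}}\leq\sqrt{\lambda(\mathbf{W})}\,\norm{\vect{z}}$ on the subspace orthogonal to $\vect{1}$, and the injection term by $\norm{(\mathbf{I}-\mathbf{\Pi})\mathbf{V}_{t,k}}_F\leq\sqrt{n}D$, which holds because each oracle output $\vect{v}^i_{t,k}$ and the average $\overline{\vect{v}}_{t,k}$ lie in $\kcal$, whose diameter is $D$ by \Cref{assum:boundedness}. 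This produces a scalar recursion of the form $\Phi_{k+1}\leq(1-\eta_k)\sqrt{\lambda(\mathbf{W})}\,\Phi_k+\eta_k\sqrt{n}D$.

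I would then solve this recursion by induction on $k$ to prove $\Phi_k\leq k_0\sqrt{n}D/k = C_d/k$, so that $\max_i\norm{\vect{x}^i_{t,k}-\overline{\vect{x}}_{t,k}}\leq\Phi_k\leq C_d/k$. Here the definition of $k_0$ enters decisively: the inequality $\lambda(\mathbf{W})\leq(k_0/(k_0+1))^2$, i.e. $\sqrt{\lambda(\mathbf{W})}\leq k_0/(k_0+1)$, is exactly the quantity that lets the contraction factor times $C_d/k$ plus the $\eta_k$-weighted injection telescope back down to $C_d/(k+1)$; the base case is seeded on the initial rounds where $\eta_k = 1$ annihilates the dependence on $\Phi_k$. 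Finally I would convert the bound on $\mathbf{X}_{t,k}$ into the stated bound on the \emph{gossiped} iterate: since the rows of $\mathbf{W}$ are nonnegative and sum to one, $\vect{y}^i_{t,k}-\overline{\vect{x}}_{t,k} = \sum_j w_{ij}\parenthese{\vect{x}^j_{t,k}-\overline{\vect{x}}_{t,k}}$ is a convex combination, so $\norm{\vect{y}^i_{t,k}-\overline{\vect{x}}_{t,k}}\leq\max_j\norm{\vect{x}^j_{t,k}-\overline{\vect{x}}_{t,k}}\leq\Phi_k\leq C_d/k$ uniformly in $i$.

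I expect the main obstacle to be closing the induction with the \emph{sharp} constant $k_0\sqrt{n}D$ rather than a looser $\ocal(\sqrt{n}D/\rho)$ estimate: one must balance the $\Theta(1/k)$ step size against the contraction factor so that the telescoped error matches $C_d/(k+1)$ exactly, which is sensitive to where $\mathbf{W}$ is applied in the update (whether the injected $\mathbf{V}_{t,k}$ inherits a factor of $\sqrt{\lambda(\mathbf{W})}$) and forces the small-$k$ regime to be treated separately. The surrounding algebraic facts — $\mathbf{W}$ doubly stochastic, $(\mathbf{W}-\mathbf{\Pi})\mathbf{\Pi}=\mathbf{0}$, and the diameter bound on oracle outputs — are routine; the delicate part is purely the constant-tracking in the inductive step via the $k_0$-calibration of the spectral gap.
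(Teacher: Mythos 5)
Your setup (the matrix stacking, the decoupled average dynamics, and the diameter bound $\norm{(\mathbf{I}-\mathbf{\Pi})\mathbf{V}_{t,k}}_F\le\sqrt{n}D$) is sound, but the inductive step fails, and it fails exactly at the point you flagged as delicate: in the update $\mathbf{X}_{t,k+1}=(1-\eta_k)\mathbf{W}\mathbf{X}_{t,k}+\eta_k\mathbf{V}_{t,k}$ the oracle outputs are injected \emph{after} the gossip, so in your scalar recursion $\Phi_{k+1}\le(1-\eta_k)\sqrt{\lambda(\mathbf{W})}\,\Phi_k+\eta_k\sqrt{n}D$ the injection carries no contraction factor at all. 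With $\eta_k=A/k$ and the hypothesis $\Phi_k\le k_0\sqrt{n}D/k$, the best your recursion yields is $\Phi_{k+1}\le\frac{k_0}{k_0+1}\cdot\frac{k_0\sqrt{n}D}{k}+\frac{A\sqrt{n}D}{k}$, and asking this to be at most $\frac{k_0\sqrt{n}D}{k+1}$ for large $k$ forces $\frac{k_0^2}{k_0+1}+A\le k_0$, i.e.\ $A\le\frac{k_0}{k_0+1}<1$, contradicting $A\ge 3$. Worse, your intermediate claim $\Phi_k\le C_d/k$ is false in general, not merely unprovable by this route: whenever $A>k_0$ there are rounds with $\eta_k=1$ and $k+1>k_0$ in which $\mathbf{X}_{t,k+1}=\mathbf{V}_{t,k}$, and nothing in the algorithm drives the per-agent oracle outputs toward consensus, so $\Phi_{k+1}$ can be of order $\sqrt{n}D>C_d/(k+1)$. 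Your final conversion step then discards the one factor that could rescue the argument: bounding $\norm{\vect{y}^i_{t,k}-\overline{\vect{x}}_{t,k}}$ as a convex combination costs a factor $1$, whereas $\vect{y}^{cat}_{t,k}-\overline{\vect{x}}^{cat}_{t,k}=\bracket{\parenthese{\mathbf{W}-\frac{1}{n}\mathbf{1}_n\mathbf{1}_n^\top}\otimes I_d}\parenthese{\vect{x}^{cat}_{t,k}-\overline{\vect{x}}^{cat}_{t,k}}$ costs a factor $\lambda(\mathbf{W})$.

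The paper's proof differs in precisely these two respects, which is why its induction closes. It inducts directly on the post-gossip deviation $\norm{\vect{y}^{cat}_{t,k}-\overline{\vect{x}}^{cat}_{t,k}}$, expanding $\vect{x}^{cat}_{t,k}-\overline{\vect{x}}^{cat}_{t,k}=(1-\eta_{k-1})\parenthese{\vect{y}^{cat}_{t,k-1}-\overline{\vect{x}}^{cat}_{t,k-1}}+\eta_{k-1}\parenthese{\vect{v}^{cat}_{t,k-1}-\overline{\vect{v}}^{cat}_{t,k-1}}$, so that \emph{both} the carried deviation and the fresh oracle spread sit inside one application of $\mathbf{W}-\frac{1}{n}\mathbf{1}_n\mathbf{1}_n^\top$, whose norm is $\lambda(\mathbf{W})\le\parenthese{\frac{k_0}{k_0+1}}^2$ --- not $\sqrt{\lambda(\mathbf{W})}$, which for the symmetric $\mathbf{W}$ here is a needlessly weakened bound. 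The closure then spends the two factors of $\frac{k_0}{k_0+1}$ on two separate jobs: one absorbs the inflation $C_d+\sqrt{n}D=\frac{k_0+1}{k_0}C_d$ caused by the injection, and the other converts $\frac{1}{k-1}$ into $\frac{1}{k}$ via $\frac{k_0}{k_0+1}\le\frac{k-1}{k}$ for $k-1\ge k_0$; the base case $k\le k_0$ is the trivial bound $\sqrt{n}D\le C_d/k$. Your single budget of $\sqrt{\lambda(\mathbf{W})}\le\frac{k_0}{k_0+1}$ cannot perform both tasks even if the injection were attenuated. (In fairness to your constant-tracking worry: the paper's own step bounds the injection by $\lambda(\mathbf{W})\frac{\sqrt{n}D}{k-1}$, silently replacing $\eta_{k-1}$ by $\frac{1}{k-1}$ and thus dropping a factor $A$, so the sensitivity you anticipated is real; but the structural idea missing from your proposal is to run the induction on $\vect{y}-\overline{\vect{x}}$ so that the injected oracle spread is itself contracted by the gossip matrix.)
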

\begin{lemma}
\label{lmm:ftpl-bound-distributed}
For all $t \in [T]$, $k \in [K]$ and $i \in [n]$. Let $\vect{v}^i_{t,k}$ be the output of the oracle $\ocal^i_k$ with delayed feedback and $\hat{\vect{v}}^i_{t,k}$ its homologous in non-delay case. Suppose that \Cref{assum:boundedness,assum:lipschitz} hold true. Choosing \ftpl as the oracle, we have:
	\begin{align}
		&\norm{\vect{v}^i_{t,k} - \hat{\vect{v}}^i_{t,k}} 
		\leq 2 \zeta \sqrt{n} DG \parenthese{\frac{\lambda\parenthese{\mathbf{W}}}{\rho} +1 } \frac{1}{n} \sum_{i=1}^n \sum_{s \leq t}  \indi_{\curlybracket{s + d^i_s > t}}
	\end{align}
where $\zeta$ is the learning rate, $\lambda(\mathbf{W})$ is the second-largest eigenvalue of $\mathbf{W}$ and $\rho = 1 - \lambda (\mathbf{W})$ is the spectral gap of matrix $\mathbf{W}$.
\end{lemma}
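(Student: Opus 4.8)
The plan is to mirror the structure of the centralized bound in \Cref{lmm:ftpl-cost-delay} and then account for the additional error that the communication network injects into each oracle's feedback. First I would write both the delayed prediction $\vect{v}^i_{t,k}$ and its non-delayed homologue $\hat{\vect{v}}^i_{t,k}$ as \ftpl argmins driven by the surrogate gradients accumulated by $\ocal^i_k$, so that the two differ only through the cumulative coefficient vector $\zeta \sum_{\ell < t}\parenthese{\hat{\vect{g}}^i_{\ell,k} - \vect{g}^i_{\ell,k}}$, where $\hat{\vect{g}}$ denotes the ideal surrogate (all feedback arrived on time) and $\vect{g}$ the realised one. Invoking the same stability of the \ftpl map used for \Cref{lmm:ftpl-cost-delay} (optimality of both argmins over $\kcal$, whose diameter is $D$), I would reduce the claim to bounding
\begin{align*}
    \norm{\vect{v}^i_{t,k} - \hat{\vect{v}}^i_{t,k}} \leq \zeta D \norm{\sum_{\ell < t}\parenthese{\hat{\vect{g}}^i_{\ell,k} - \vect{g}^i_{\ell,k}}},
\end{align*}
so the remaining work is purely on the right-hand side.

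Next I would expand the surrogate $\vect{g}^i_{t,k}$ through the gradient-tracking recursion together with the neighbour-aggregation step. Stacking the per-agent surrogates into a block vector and writing the update in terms of the mixing matrix $\mathbf{W}$, the only discrepancy between the ideal and realised trajectories is the set of gradient terms $\nabla f^j_s(\cdot)$ whose feedback has not yet reached agent $j$ by round $t$, i.e. the indices with $s + d^j_s > t$. Each such missing term enters the recursion and is subsequently propagated to agent $i$ by repeated multiplication with $\mathbf{W}$.

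The core step is then a spectral analysis of this propagation. I would decompose every injected perturbation into its consensus component (the average, which $\mathbf{W}$ leaves invariant since $\mathbf{W}$ is doubly stochastic) and its orthogonal disagreement component (which each application of $\mathbf{W}$ contracts by the factor $\lambda(\mathbf{W})$). The consensus component spreads a single agent's missing gradient uniformly across all $n$ agents, which is exactly what turns the per-agent counts into the average $\frac{1}{n}\sum_{i=1}^n \sum_{s \leq t}\indi_{\curlybracket{s + d^i_s > t}}$ and, after converting from the stacked block norm back to the per-agent Euclidean norm, produces the factor $\sqrt{n}$. Summing the contracting disagreement component over the rounds separating injection from observation gives the geometric series $\sum_{r \geq 1}\lambda(\mathbf{W})^r = \frac{\lambda(\mathbf{W})}{\rho}$, while the direct (undelayed-mixing) contribution accounts for the remaining $+1$; together these yield the bracket $\parenthese{\frac{\lambda(\mathbf{W})}{\rho} + 1}$. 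Bounding each individual gradient by $G$ via \Cref{assum:lipschitz} and collecting the norm-conversion constants into the leading factor $2$ then gives the stated inequality.

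I expect the main obstacle to be this propagation bookkeeping: tracking how each un-arrived gradient at an arbitrary neighbour flows through the coupled consensus-and-gradient-tracking dynamics, and proving that the disagreement error telescopes into a convergent geometric series governed by $\lambda(\mathbf{W})$ rather than accumulating across rounds. A secondary subtlety is the consistent passage between the per-agent norm appearing on the left and the stacked (block) norm natural to the spectral argument, which is the source of the $\sqrt{n}$ factors and must be handled without losing a power of $n$. Establishing the initial \ftpl stability estimate — the distributed analogue of \Cref{lmm:ftpl-cost-delay} — is routine once the centralized argument is in hand.
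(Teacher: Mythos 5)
Your proposal follows the paper's proof essentially step for step: the same FTPL stability reduction to the accumulated-feedback difference $\zeta D \norm{\sum_{\ell \le t}(\vect{d}^{cat}_{\ell,k} - \hat{\vect{d}}^{cat}_{\ell,k})}$, the same stacked unrolling of the gradient-tracking/mixing recursion, the same spectral bound $\norm{\mathbf{W}^{r} - \frac{1}{n}\mathbf{1}_n\mathbf{1}_n^{\top}} \le \lambda(\mathbf{W})^{r}$ summed into the geometric factor $\frac{\lambda(\mathbf{W})}{\rho}+1$, and the same identification of the discrepancy as the un-arrived gradients $\indi_{\curlybracket{s+d^i_s>t}}$ bounded by $G$ with the $\sqrt{n}$ arising from the block-to-per-agent norm conversion. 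The one minor imprecision is that the repeated multiplications by $\mathbf{W}$ (and hence the geometric series) occur across the $K$ sub-iterations within a single round --- across rounds the oracle only sums the surrogates linearly --- not "over the rounds separating injection from observation," but this bookkeeping detail does not alter the argument or the bound.
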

\begin{theorem}
    \label[theorem]{thm:decentralized}
    Given a constraint set $\mathcal{K}$. Let $A = \max\curlybracket{3,\frac{3G}{2\beta D}, \frac{2\beta C_d + C_g}{\beta D}}$, $\eta_k = \min \curlybracket{1,\frac{A}{k}}$, and $K = \sqrt{T}$. Suppose that \Cref{assum:boundedness,assum:lipschitz,assum:smooth} hold true. If we choose FTPL as the underlying oracle and set $\zeta=\frac{1}{G \sqrt{B}}$, the regret of \Cref{algo:de2mfw} is 
    \begin{align}
        &\sum_{t=1}^T  \bracket{F_t (\vect{x}^i_t) - F_t(\opt{x})}
        \leq \parenthese{GC_d + 2\beta A D^2} \sqrt{T} 
		 + 3(A+1)\parenthese{2\sqrt{n} DG \parenthese{\frac{\lambda\parenthese{\mathbf{W}}}{\rho} +1 } \sqrt{B}  + \rcal_{T, \ocal}} \nonumber
    \end{align}
    where $B = \frac{1}{n} \sum_{i=1}^n B_i$ such that $B_i$ is the sum of all delay values of agent $i$. $C_d = k_0 \sqrt{n} D$ and $C_g = \sqrt{n} \max \curlybracket{\lambda_2(\mathbf{W}) \parenthese{G + \frac{\beta D}{\rho}}, k_0\beta \parenthese{4C_d + AD}}$ and $\rcal_{T,\ocal}$ is the regret of FTPL with respect to the current choice of $\zeta$.
\end{theorem}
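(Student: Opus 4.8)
The plan is to extend the analysis of \Cref{thm:centralized} by isolating the two distributed error sources---the consensus drift of the decisions and the gradient-tracking drift of the surrogate gradients---and showing that each contributes only an additive lower-order term. I would first decompose the regret of agent $i$ as $F_t(\vect{x}^i_t) - F_t(\opt{x}) = \bracket{F_t(\vect{x}^i_t) - F_t(\overline{\vect{x}}_t)} + \bracket{F_t(\overline{\vect{x}}_t) - F_t(\opt{x})}$, where $\overline{\vect{x}}_t = \frac{1}{n}\sum_{i=1}^n \vect{x}^i_t$ is the global average iterate. The first bracket is controlled by the $G$-Lipschitzness of \Cref{assum:lipschitz} together with \Cref{lmm:decision-distance}: since the latter bounds the final-iterate consensus error by $C_d/K$, summing over $t$ and setting $K = \sqrt{T}$ produces exactly the additive term $GC_d\sqrt{T}$.

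For the second bracket I would exploit the double stochasticity of $\mathbf{W}$: averaging the local update $\vect{x}^i_{t,k+1} = (1-\eta_k)\vect{y}^i_{t,k} + \eta_k\vect{v}^i_{t,k}$ over all agents collapses the consensus step (because $\frac{1}{n}\sum_i\vect{y}^i_{t,k} = \overline{\vect{x}}_{t,k}$) and reveals that the global average obeys a clean Frank-Wolfe recursion $\overline{\vect{x}}_{t,k+1} = (1-\eta_k)\overline{\vect{x}}_{t,k} + \eta_k\overline{\vect{v}}_{t,k}$. Applying the smoothness inequality of \Cref{assum:smooth} to $F_t$ (which is $\beta$-smooth and has $G$-bounded gradient as an average of the $f^i_t$), then adding and subtracting $\opt{x}$ inside the gradient inner product and invoking convexity, yields
\[
F_t(\overline{\vect{x}}_{t,k+1}) - F_t(\opt{x}) \leq (1-\eta_k)\bigl(F_t(\overline{\vect{x}}_{t,k}) - F_t(\opt{x})\bigr) + \eta_k \scalarproduct{\nabla F_t(\overline{\vect{x}}_{t,k}),\, \overline{\vect{v}}_{t,k} - \opt{x}} + \frac{\beta \eta_k^2}{2} D^2 .
\]
Under $\eta_k = \min\curlybracket{1, A/k}$, the quadratic term telescopes over the $K$ sub-iterations into the $2\beta AD^2\sqrt{T}$ contribution, exactly as in the centralized proof.

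The decisive step is bounding the linear term $\scalarproduct{\nabla F_t(\overline{\vect{x}}_{t,k}),\, \overline{\vect{v}}_{t,k} - \opt{x}}$. I would rewrite $\nabla F_t(\overline{\vect{x}}_{t,k})$ as the averaged surrogate $\overline{\vect{d}}_{t,k}$ plus three deviations: the gradient-tracking drift $\norm{\vect{d}^i_{t,k} - \overline{\vect{d}}_{t,k}}$, the decision consensus drift from \Cref{lmm:decision-distance}, and the smoothness gap between evaluating gradients at $\vect{y}^i_{t,k}$ and at $\overline{\vect{x}}_{t,k}$. These three deviations are precisely what the constant $C_g$ is engineered to dominate, and the hypothesis $A \geq (2\beta C_d + C_g)/(\beta D)$ is exactly the condition that lets the telescoped recursion absorb them without degrading the $1/K$ rate. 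The residual $\scalarproduct{\overline{\vect{d}}_{t,k},\, \overline{\vect{v}}_{t,k} - \opt{x}}$ is the online-linear-optimization regret of the oracles on their delayed surrogate inputs; replacing $\overline{\vect{v}}_{t,k}$ by its non-delayed counterpart costs at most $G\norm{\overline{\vect{v}}_{t,k} - \overline{\hat{\vect{v}}}_{t,k}}$, which \Cref{lmm:ftpl-bound-distributed} bounds, while what remains is the oracle regret $\rcal_{T,\ocal}$.

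Finally I would sum the telescoped recursion over $k \in [K]$---the Frank-Wolfe weighting producing the common prefactor $3(A+1)$ on both the delay and oracle-regret terms---and then over $t \in [T]$. The key combinatorial identity is $\sum_{t=1}^T \frac{1}{n}\sum_{i=1}^n \sum_{s \leq t}\indi_{\curlybracket{s + d^i_s > t}} = \frac{1}{n}\sum_{i=1}^n B_i = B$, since each index $s$ is counted in exactly $d^i_s$ rounds; consequently the delay bound of \Cref{lmm:ftpl-bound-distributed} accumulates to $2\zeta\sqrt{n}DG^2(\lambda(\mathbf{W})/\rho + 1)\,B$, and the choice $\zeta = 1/(G\sqrt{B})$ converts this into the advertised $\sqrt{B}$ scaling while simultaneously balancing $\rcal_{T,\ocal}$. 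The main obstacle I anticipate lies not in this Frank-Wolfe bookkeeping but in establishing the gradient-tracking bound $\norm{\vect{d}^i_{t,k} - \overline{\vect{d}}_{t,k}} \leq C_g/k$ that underlies the $C_g$ definition: this requires a coupled recursion in $k$ that intertwines the contraction of the consensus matrix $\mathbf{W}$ (through its spectral gap $\rho$) with the delayed, aggregated gradient inputs $\cup_{j \in \mathcal{N}(i)}\fcal^j_t$, and it is there that essentially all the technical work beyond the centralized case resides.
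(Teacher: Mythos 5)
Your proposal follows essentially the same route as the paper's own proof: the same split of the agent regret into consensus error (giving $GC_d\sqrt{T}$ via \Cref{lmm:decision-distance} with $K=\sqrt{T}$) plus average-iterate regret, the same smoothness/convexity recursion on $\overline{\vect{x}}_{t,k}$ with the two gradient deviations absorbed exactly by the condition $A \geq (2\beta C_d + C_g)/(\beta D)$, and the same use of \Cref{lmm:ftpl-bound-distributed} together with the counting identity $\sum_{t=1}^T\sum_{s\leq t}\indi_{\{s+d^i_s>t\}} = B_i$ and the choice $\zeta = 1/(G\sqrt{B})$ to get the $\sqrt{B}$ term with prefactor $3(A+1)$. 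One notational caution: in the oracle-regret step keep the agent average outside the inner product, i.e.\ bound $\frac{1}{n}\sum_{i=1}^n \sum_{t=1}^T \scalarproduct{\hat{\vect{d}}^i_{t,k}, \hat{\vect{v}}^i_{t,k} - \opt{x}} \leq \rcal_{T,\ocal}$ as the paper does, rather than the inner product of averages $\scalarproduct{\overline{\vect{d}}_{t,k}, \overline{\vect{v}}_{t,k} - \opt{x}}$, since each \ftpl guarantee pairs $\hat{\vect{d}}^i_{t,k}$ with its own oracle's output $\hat{\vect{v}}^i_{t,k}$.
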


\begin{algorithm}[ht!]
	\algsetup{linenosize=\tiny}
	\small
	\begin{flushleft}
	\textbf{Input}:  Constraint set $\mathcal{K}$, 
		number of iterations $T$, sub-iterations $K$, online linear optimization oracles $\curlybracket{\mathcal{O}^i_k: k \in [K]}$ for each agent $i \in [n]$, step sizes $\eta_k \in (0, 1]$
	\end{flushleft}
	\begin{algorithmic}[1]
	\FOR {$t = 1$ to $T$}	 		
		\FOR{every agent $i = 1$ to $n$}	%
		\STATE Initialize arbitrarily $\vect{x}^i_{t,1} \in \mathcal{K}$ 
			\FOR{$k=1$ to $K$}
				\STATE Query $\vect{v}^i_{t,k}$ from oracle $\mathcal{O}^i_{k}$ 
				\STATE Exchange $\vect{x}^{i}_{t,k}$  with neighbours $\mathcal{N}(i)$
				\STATE $\vect{y}^{i}_{t,k} \gets \sum_{j} w_{ij} \vect{x}^j_{t,k}$
				\STATE \label{eq:convex-combi-dis} $\vect{x}^t_{i,k+1} \gets (1 - \eta_{k}) \vect{y}^i_{t,k} + \eta_{k} \vect{v}^i_{t,k}$
			\ENDFOR
			\STATE $\vect{x}^t_{t} \gets \vect{x}^t_{t,K+1}$, play $\vect{x}^i_{t}$ and incurs loss $f^i_t (\vect{x}^i_t)$
			\STATE Receive $\mathcal{F}^i_t = \curlybracket{s \in [T]: s + d^i_s - 1 = t}$
			\IF {$\mathcal{F}^i_t = \emptyset$}  
				\STATE do nothing 
			\ELSE
				\STATE $\vect{g}^i_{t,1} \gets \sum_{s \in \mathcal{F}^i_t} \nabla f^i_{s}(\vect{x}^i_{s,1})$
				\FOR{$k=1$ to $K$}
					\STATE Exchange $\vect{g}^{i}_{t,k}$ with neighbours $\mathcal{N}(i)$
					\STATE \label{eq:gradient-avg}$\vect{d}^{i}_{t,k} \gets  \sum_{j \in \mathcal{N}(i)} w_{ij} \vect{g}^j_{t,k}$
					\STATE \label{eq:gradient-tracking1} $\vect{g}^i_{t,k+1} \gets \sum_{s \in \mathcal{F}^i_t}\parenthese{ \nabla f^i_{s}(\vect{x}^i_{s,k+1}) -  \nabla f^i_{s}(\vect{x}^i_{s,k})}$ \\
					 \qquad \qquad + $\vect{d}^{i}_{t,k}$
					\STATE Feedback $\langle \vect{d}^i_{t,k}, \cdot \rangle$ 
							to oracles $\mathcal{O}_{i,k}$
				\ENDFOR
			\ENDIF
		\ENDFOR
	\ENDFOR
	\end{algorithmic}
	\caption{De2MFW}
	\label{algo:de2mfw}
	\end{algorithm}
\section{Numerical Experiments}
\label{sec:exp}
We evaluated the performance of our algorithms on the online multiclass logistic regression problem using two datasets: MNIST and FashionMNIST. MNIST is a well-known hand digit dataset containing $60 000$ grayscale images of size $(28 \times 28)$, divided into 10 classes, and FashionMNIST includes images of fashion products with the same configuration. We conducted the experiment using Julia 1.7 on MacOS 13.3 with 16GB of memory. The code is available at \url{https://github.com/tuananhngh/DelayMFW}.

\paragraph{Centralized Setting} Given an iteration $t$, the agent receives a subset $\mathcal{B}_t$ of the form $\vect{b}_t = \curlybracket{\vect{a}_t, y_t} \in \mathbb{R}^m \times \curlybracket{1, \dots, C}$, consisting of the features vector $\vect{a}_t$ and the corresponding label $y_t$. We define the loss function $f_t$ as  
\begin{align}
\label{eq:loss-experiment}
    f_t (\vect{x}) 
    = -\sum_{\vect{b}_t \in \mathcal{B}_t} 
        \sum_{c=1}^C \curlybracket{y^i_t = c} \log \frac{\exp{\scalarproduct{\vect{x}_c, \vect{a}^i_t}}}{\sum_{\ell=1}^C \exp{\scalarproduct{\vect{x}_{\ell}, \vect{a}^i_t}}}
\end{align}
where $\vect{x}$ must satisfy the constraint $\vect{x} \in \mathcal{K}$ such that $\mathcal{K} = \curlybracket{\vect{x} \in \mathbb{R}^{m \times C}, \norm{\vect{x}}_1 \leq r}$. Using the MNIST dataset, we note $m=784$, $C=10$, $r = 8$, $|\mathcal{B}_t| = 60$ and a total of $T=1000$ rounds. To evaluate the performance of the algorithm under different delay regimes, we generated a random sequence of delays $d_t$ such that $d_t \leq d$ for $d \in \curlybracket{21, 41, 61, 81, 101}$. We compared the performance of \ce against DOFW \cite{OFW-delay}, a projection-free algorithm with adversarial delay, and BOLD-MFW \cite{bold-ofw}, an online learning framework designed to handle delayed feedback.
\begin{figure*}[t]
\centering
    \includegraphics{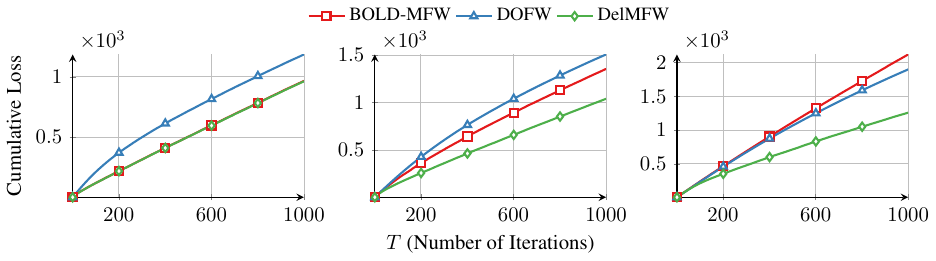}
    \caption{\textit{Cumulative Loss Comparison for Different Delays Regimes. \textit{Left} : Without delay. \textit{Middle} : Maximal delay 21. \textit{Right} : Maximal delay 101}}
    \label{fig:cumloss-centralized}
    \vspace{-20pt}
\end{figure*}
\Cref{fig:cumloss-centralized} displays the performance of the three algorithms under various delay regimes. In the absence of delay, that is, $d=1$ (left figure), \ce and BOLD-MFW have the same performance since both algorithms reduce to MFW \cite{ChenHarshaw18:Projection-Free-Online} with a regret of $\mathcal{O}(\sqrt{T})$. Meanwhile, DOFW is the classical OFW \cite{ofw} that guarantees a regret of $\mathcal{O}(T^{3/4})$. The analysis in \Cref{thm:centralized} suggests that \ce achieves a regret of $\ocal(\sqrt{dT})$ when the delay is upper-bounded by $d$. In the case where $d \leq T^{1/2}$  (middle figure, $d=21$), the dominant term in DOFW is $T^{3/4}$ whereas \ce takes advantage by incurring a regret of order $\sqrt{dT} \leq T^{3/4}$. For $d\geq T^{1/2}$ (right figure, $d=101$), DOFW's regret is dominated by the term $dT^{1/4}$, which is outperformed by \ce, particularly for high values of $d$. This result confirms our theoretical analysis in \Cref{sec:centralized}.

%
%
%
\begin{wrapfigure}{r}{0.5\linewidth}
    \vspace{-15pt}
    \includegraphics[width=0.5\textwidth]{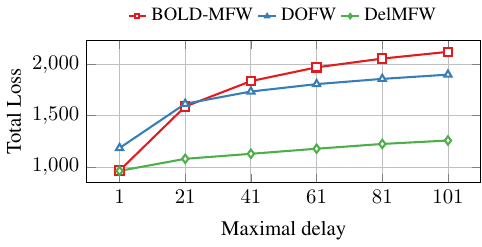}
    \caption{\textit{Total loss of BOLD-MFW, DOFW and \ce when varying delay value.}}
    \label{fig:totalloss-compared}
    \vspace{-20pt}
\end{wrapfigure}
\Cref{fig:totalloss-compared} illustrates the total loss of \ce and the other two algorithms when increasing $d$ to show the sensitivity of each algorithm in the presence of delays. As BOLD is a general framework that can be applied to any base algorithm, it is noticeable that it is susceptible to high levels of delays. This phenomenon has also been observed in \cite{OFW-delay} when utilizing BOLD with OFW, highlighting the need for a customized design algorithms in the context of delayed feedback. 
\paragraph{Distributed Setting}  
In the second experiment, we examined the distributed online multiclass logistic regression problem on the FashionMNIST dataset, using a network of 30 agents. The algorithm was run on four different topologies, including Erdos-Renyi, Complete, Grid, and Cycle. At each iteration $t \in [T]$, each agent $i$ received a subset $\mathcal{B}^i_t$ of the form $\curlybracket{\vect{a}^i_t, y^i_t} \in \mathbb{R}^d \times \curlybracket{1, \dots, C}$, which consisted of the feature vector $\vect{a}^i_t$ and its corresponding label $y^i_t$. The goal was to collaboratively optimize the global loss function $F_t(\vect{x}) = \frac{1}{n} \sum_{i=1}^n f^i_t(\vect{x})$, where the local loss $f^i_t$ was defined in \Cref{eq:loss-experiment}.

For this experiment, we set $m=784$, $C=10$, $r = 32$, $|\mathcal{B}^i_t| = 2$ and $T=1000$ rounds. We are interested in examining the effect of delays on network performance, and thus randomly select $f < n$ agents to have delayed feedback with a maximum value of 501. We compared the total loss on each topology under these conditions, and present the result in \Cref{fig:select-agent}. We observe that the presence of delayed agents has a significant impact on the network performance of Cycle graph as the number of delayed agents increases, while the Complete graph is less affected. This result is consistent with the analysis in \Cref{sec:distributed} because the delay term in the regret bound depends on the connectivity of the communication graph. 

\begin{wrapfigure}{l}{0.5\textwidth}
    \vspace{-15pt}
    \includegraphics[width=0.5\textwidth]{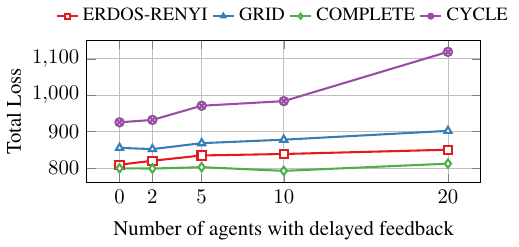}
    \caption{\textit{Total Loss with varying numbers of agents experiencing delayed feedback in the network. $(f=0)$ for no delayed-agents.}}
    \vspace{-30pt}
    \label{fig:select-agent}
\end{wrapfigure}

In \Cref{tab:data_table}, we report the change in total loss when increasing the number of delayed agents. We observe that the average percentage change is smaller for Grid than for Erdos-Renyi when compared with the network of non-delayed agents ($f=0$). This result indicate that the generated Erdos-Renyi graph is more sensitive to the presence of delayed agents. 

\begin{table}[t]
\caption{\textit{Total Loss of the algorithm running on 4 different topology. We randomly select $f < n$ agents to have delay with maximal value to be 501. In parenthesis, the percentage of total loss compared that of no delayed agents in the network (i.e $f=0$).}}
\label{tab:data_table}
\begin{center}
\begin{tabular}{c|c|c|c|c}
\toprule
\diagbox[font=\footnotesize\itshape,linewidth=1pt, width=8em]{f}{Topology} &\textbf{Erd\H{o}s-Rényi} & \textbf{Grid} & \textbf{Complete} & \textbf{Cycle} \\ 
\midrule
0 & 809.37 & 855.62 & 799.49 & 925.72 \\ 
2 & 820.15 (+1.3\%) & 852.15 (-0.4\%) & 798.79 (-0.08\%) & 932.34 (+0.7\%) \\ 
5 & 834.74 (+3.0\%) & 868.52 (+1.4\%) & 802.59 (+0.3\%) & 971.24 (+4.7\%) \\ 
10 & 838.74 (+3.5\%) & 878.04 (+2.5\%) & 792.45 (-0.8\%) & 983.89 (+6.0\%) \\ 
20 & 850.49 (+4.9\%) & 902.30 (+5.3\%) & 812.21 (+1.5\%) & 1119.24 (+18.9\%) \\ 
\bottomrule
\end{tabular}
\end{center}
\vspace{-10pt}
\end{table}

\section{Concluding Remarks}

In this paper, we propose two algorithms for solving the online convex optimization problem with adversarial delayed feedback in both centralized and decentralized settings. These algorithms achieve optimal $\mathcal{O}(\sqrt{dT})$ regret bounds, where $d$ is the upper bound of the delays. The experimental results show that our algorithms outperform existing solutions in both centralized and decentralized settings, which are predictable by our theoretical analysis. Although the algorithms achieve good performance guarantees for the online convex optimization problem with adversarial delays, they currently rely on exact gradients, which may not be feasible for many real-world applications. Therefore, future research could explore the use of stochastic gradients with variance reduction techniques. Additionally, in decentralized settings, communication delays can be practically challenging, and further improvements are needed in this area. Nevertheless, our work demonstrates the potential of using Frank-Wolfe-type algorithms for solving constraint convex optimization problems under adversarial delays, which is beneficial for learning on edge devices.

\bibliographystyle{plain}  
\bibliography{references} 

\newpage
\begin{center}
\rule{\textwidth}{.5pt} \\[3pt]
\textbf{\large Supplementary Material} \\[7pt]
\textbf{\large Handling Delayed Feedback in Distributed Online Optimization : A Projection-Free Approach}
\rule{\textwidth}{.5pt}
\end{center}
\appendix

We start by recalling notations that will be used throughout the proof of \Cref{thm:centralized,thmm:decen}. 
All notation, are described in the centralized setting. For decentralized setting, if not specify otherwise, we add a superscript $i$ to indicate the dependant of the variable to agent $i$. 
Following the notation in the main section, we note $\mathcal{F}_t = \curlybracket{s \in [T], s + d_s -1 = t}$ as the set of feedback arrives at time $t$ (denoted $\mathcal{F}^i_t$ in decentralized setting). The sum of total delay is denoted by $B := \sum_{t=1}^T d_t $ and we call $d$ the upper bound on all delay value when it is clear from the context. 
For simplicity of notation, we denote by $\nabla f_{t,k} = \nabla f_t (\vect{x}_{t,k})$, $\zeta$ the learning rate of the oracle and $\pi_{\mathcal{K}} (\vect{x})$ the projection of $\vect{x}$ into the constraint set $\mathcal{K}$. 
If $t$ is the time the agent plays the decision $\vect{x}_t$, the feedback is revealed at time $\sigma (t) := t + d_t -1$. 

\section{Analysis of Algorithm \ref{algo:delmfw}}

\subsection{Auxiliary Lemmas}
\setcounter{lemma}{1}
\begin{lemma}
    Let $\hat{\vect{v}}_t$ be the \ftpl prediction defined in \Cref{eq:ftpl-update} and 
    \begin{align*}
        \vect{v}_t = \argmin_{\vect{v} \in \kcal} \curlybracket{\zeta \sum_{\ell = 1}^{t-1} \scalarproduct{\sum_{s \in \fcal_\ell}\vect{g}_s , \vect{v}} + \scalarproduct{\vect{n}, \vect{v}}}
    \end{align*}
    the prediction of \ftpl with delayed feedback.
    For all $t \in [T]$, we have: 
    \begin{align*}
        \norm{\vect{v}_t - \hat{\vect{v}}_t} \leq \zeta DG \sum_{s < t} \indi_{\curlybracket{s + d_s > t}}
    \end{align*}
    \end{lemma}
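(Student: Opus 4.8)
The plan is to reduce the bound on the prediction gap to a bound on the gap between the \emph{cumulative linear losses} that the delayed and non-delayed oracles have accumulated by round $t$, and then invoke the stability of the \ftpl map. Both predictions are minimizers of linear objectives over $\kcal$ sharing the same perturbation $\vect{n}$: writing $\Theta_t = \zeta\sum_{\ell=1}^{t-1}\vect{g}_\ell$ and $\tilde{\Theta}_t = \zeta\sum_{\ell=1}^{t-1}\sum_{s\in\fcal_\ell}\vect{g}_s$, we have $\hat{\vect{v}}_t = \argmin_{\vect{v}\in\kcal}\curlybracket{\scalarproduct{\Theta_t+\vect{n},\vect{v}}}$ and $\vect{v}_t = \argmin_{\vect{v}\in\kcal}\curlybracket{\scalarproduct{\tilde{\Theta}_t+\vect{n},\vect{v}}}$. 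All the content is therefore concentrated in the two coefficient vectors $\Theta_t$ and $\tilde{\Theta}_t$.

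First I would do the bookkeeping on $\tilde{\Theta}_t$. Since $\fcal_\ell = \curlybracket{s : s+d_s-1=\ell}$, the index sets $\fcal_1,\dots,\fcal_{t-1}$ are pairwise disjoint and their union is exactly $\curlybracket{s : s+d_s\le t}$. Hence $\sum_{\ell=1}^{t-1}\sum_{s\in\fcal_\ell}\vect{g}_s = \sum_{s:\,s+d_s\le t}\vect{g}_s$, and subtracting this from $\sum_{\ell=1}^{t-1}\vect{g}_\ell = \sum_{s<t}\vect{g}_s$ leaves precisely the ``in-flight'' gradients — those played before $t$ whose feedback has not yet arrived:
\[
\Theta_t - \tilde{\Theta}_t = \zeta\sum_{s<t,\; s+d_s>t}\vect{g}_s .
\]
Applying the triangle inequality together with \Cref{assum:lipschitz} ($\norm{\vect{g}_s}\le G$) gives $\norm{\Theta_t - \tilde{\Theta}_t} \le \zeta G\sum_{s<t}\indi_{\curlybracket{s+d_s>t}}$, which is already the target right-hand side up to the factor $D$.

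It remains to convert this coefficient gap into a prediction gap, i.e. to show that the \ftpl prediction map $\Theta\mapsto\argmin_{\vect{v}\in\kcal}\scalarproduct{\Theta+\vect{n},\vect{v}}$ is $D$-Lipschitz, so that $\norm{\vect{v}_t-\hat{\vect{v}}_t}\le D\,\norm{\Theta_t-\tilde{\Theta}_t}$ and the two bounds compose into the claim. This is the main obstacle, and it is exactly where the perturbation is needed: for a \emph{fixed} $\vect{n}$ the minimizer of a linear objective over $\kcal$ is only piecewise constant and can jump by as much as $D$ under an arbitrarily small change of $\Theta$, so the optimality conditions alone give no better than the trivial bound $\norm{\vect{v}_t-\hat{\vect{v}}_t}\le D$. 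The required stability comes from the smoothing of the random $\vect{n}$: the (expected) prediction is the gradient of the perturbed potential $\Phi(\Theta)=\E_{\vect{n}}\bracket{\min_{\vect{v}\in\kcal}\scalarproduct{\Theta+\vect{n},\vect{v}}}$, and for $\dcal$ uniform on $\bracket{0,1}^m$ one checks that $\Phi$ is smooth with a Hessian bound making its gradient $D$-Lipschitz in $\Theta$ — this is the very same stability estimate ($\norm{\vect{v}(\Theta_1)-\vect{v}(\Theta_2)}\le D\norm{\Theta_1-\Theta_2}$) that produces the $\zeta DG^2T$ term of \Cref{lmm:ftpl-regret}. Feeding the coefficient bound into it yields $\norm{\vect{v}_t-\hat{\vect{v}}_t}\le \zeta DG\sum_{s<t}\indi_{\curlybracket{s+d_s>t}}$. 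I expect that verifying this Lipschitz constant for the cube perturbation in general dimension, and reconciling the smoothing argument with the pointwise phrasing of the statement, will be the delicate point; the disjointness identity for the in-flight set and the norm estimate are routine.
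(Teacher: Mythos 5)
Your proposal is correct and follows essentially the same route as the paper: the same two-step decomposition into the in-flight-gradient identity $\vect{u}_t-\hat{\vect{u}}_t=\zeta\sum_{s<t,\,s+d_s>t}\vect{g}_s$ (bounded by $\zeta G\sum_{s<t}\indi_{\{s+d_s>t\}}$ via \Cref{assum:lipschitz}) composed with the $D$-Lipschitz stability of the smoothed \ftpl prediction map, which the paper establishes by the density-shift computation $\norm{\vect{v}_t-\hat{\vect{v}}_t}\le D\int\abs{p(\vect{n}-\vect{u})-p(\vect{n}-\hat{\vect{u}}_t)}\,\mathrm{d}\vect{n}\le DL\norm{\vect{u}-\hat{\vect{u}}_t}$ rather than via your potential-smoothness phrasing. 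The two delicate points you flag are real but are shared by the paper itself: its proof likewise treats $\vect{v}_t,\hat{\vect{v}}_t$ as expectations over $\vect{n}$ despite the pointwise statement, and it likewise absorbs the density's Lipschitz constant $L$ (with its potential dimension dependence) into the final bound without comment.
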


\begin{proof}
    Recall that $f_t$ is a linear function defined as $f_t = \scalarproduct{\vect{g}_t, \cdot}$. For the ease of analysis, we call $ \hat{\vect{u}}_t = \zeta \sum_{\ell=1}^{t-1} \vect{g}_{\ell} $ 
    and $\vect{u}_t = \zeta \sum_{\ell=1}^{t-1} \sum_{s \in \mathcal{F}_{\ell}} \vect{g}_s$. 
    We define 
        $h_t(\vect{n}) = \argmin_{\vect{v} \in \mathcal{K}} \curlybracket{ 
                \scalarproduct{\vect{n}, \vect{v}}
                }$. 
    By definition of $\hat{\vect{v}}_t$ and $\vect{v}_t$, 
    we have 
    \begin{align}
        \hat{\vect{v}}_t = \E \bracket{h_t (\vect{n} + \hat{\vect{u}}_t)} 
        = \int_{\vect{n}} h_t (\vect{n} + \hat{\vect{u}}_t) p(\vect{n}) \mathrm{d} \vect{n}
        = \int_{\vect{n}} h_t (\vect{n}) p(\vect{n} - \hat{\vect{u}}_t) \mathrm{d} \vect{n}
    \end{align}

    and 
    \begin{align}
        \vect{v}_t = \E \bracket{h_t (\vect{n} + \vect{u})}
        = \int_{\vect{n}} h_t (\vect{n}) p(\vect{n} - \vect{u}) \mathrm{d} \vect{n}
    \end{align}
    From linearity of expectation and Cauchy-Schwarz inequality,
    \begin{align}
        \norm{\vect{v}_t - \hat{\vect{v}}_t} 
        &\leq \int_{\vect{n}} \norm{h_t(\vect{n})} 
                \abs{p(\vect{n} - \vect{u}) - p(\vect{n} - \hat{\vect{u}}_t)} \mathrm{d} \vect{n} \nonumber \\
        &= \int_{\vect{n}} \norm{h_t(\vect{n}) - h_t (\vect{0})} 
                \abs{p(\vect{n} - \vect{u}) - p(\vect{n} - \hat{\vect{u}}_t)} \mathrm{d} \vect{n} \nonumber \\
        &\leq D \int_{\vect{n}} \abs{p(\vect{n} - \vect{u}) - p(\vect{n} - \hat{\vect{u}}_t)} \mathrm{d} \vect{n} \nonumber \\
        &\leq D L \norm{\vect{u} - \hat{\vect{u}}_t} \nonumber \\
        &\leq \zeta D L G \sum_{s < t} \indi_{\curlybracket{s + d_s > t}}
    \end{align}
The first inequality follows from the fact that $h_t(\vect{n})$ and $h_t(\vect{0})$ are in $\mathcal{K}$. The second inequality is due to the stability of the distribution of $\vect{n}$, where $L$ is the Lipschitz constant of the probability distribution function $p$. Since each function is $G$-Lipschitz, the distance between $\vect{u}$ and $\hat{\vect{u}}_t$ is bounded by $G$ multiplied by the number of functions whose feedback is not received at time $t$, which leads to the last inequality. 
\end{proof}

\setcounter{lemma}{4}
\begin{lemma}[\cite{THANG2022334}]
    \label{lmm:recurrence-1}
    For every $t \in [T]$ and $k \in [K]$. Define $h_{t,k} = f_t (\vect{x}_{t,k+1}) - f_t(\opt{x})$ let $A = \max(3, \frac{G}{\beta D})$ and $\eta_k = \min(1, \frac{A}{k})$, it holds that
    \begin{align}
        h_{t,k} = f_t (\vect{x}_{t,k+1}) - f_t (\vect{x}^*) 
            \leq \frac{2\beta A D^2}{k} 
                + \sum_{k'=1}^k \eta_{k'} \parenthese{\prod_{\ell=k'+1}^k (1-\eta_\ell)} \scalarproduct{\nabla f_{t,k'}, \vect{v}_{t,k'} - \vect{x}^*}
    \end{align}
\end{lemma}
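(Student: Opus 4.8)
The plan is to derive a one-step recurrence for $h_{t,k}$ inside a fixed round $t$ and then unroll it. First I would apply the smoothness \Cref{assum:smooth} to the Frank-Wolfe update $\vect{x}_{t,k+1} = (1-\eta_k)\vect{x}_{t,k} + \eta_k \vect{v}_{t,k}$, whose increment is $\vect{x}_{t,k+1} - \vect{x}_{t,k} = \eta_k(\vect{v}_{t,k} - \vect{x}_{t,k})$. Smoothness gives $f_t(\vect{x}_{t,k+1}) \le f_t(\vect{x}_{t,k}) + \eta_k \scalarproduct{\nabla f_{t,k}, \vect{v}_{t,k} - \vect{x}_{t,k}} + \tfrac{\beta \eta_k^2}{2}\norm{\vect{v}_{t,k} - \vect{x}_{t,k}}^2$, and the diameter bound \Cref{assum:boundedness} replaces the final norm by $D^2$, so the quadratic term is at most $\tfrac{\beta \eta_k^2 D^2}{2}$.

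Next I would subtract $f_t(\opt{x})$ from both sides and rewrite the inner product as $\scalarproduct{\nabla f_{t,k}, \vect{v}_{t,k} - \vect{x}_{t,k}} = \scalarproduct{\nabla f_{t,k}, \vect{v}_{t,k} - \opt{x}} + \scalarproduct{\nabla f_{t,k}, \opt{x} - \vect{x}_{t,k}}$. By convexity the second summand is bounded by $f_t(\opt{x}) - f_t(\vect{x}_{t,k}) = -h_{t,k-1}$, where I write $h_{t,k-1} = f_t(\vect{x}_{t,k}) - f_t(\opt{x})$. Collecting terms yields the linear recurrence $h_{t,k} \le (1-\eta_k)h_{t,k-1} + \eta_k \scalarproduct{\nabla f_{t,k}, \vect{v}_{t,k} - \opt{x}} + \tfrac{\beta \eta_k^2 D^2}{2}$.

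I would then unroll this recurrence down to $k'=1$. Because $A \ge 3 > 1$ forces $\eta_1 = \min(1,A) = 1$, the product $\prod_{\ell=1}^k (1-\eta_\ell)$ vanishes, so the contribution of the arbitrary initialization $\vect{x}_{t,1}$ drops out entirely and I am left with $h_{t,k} \le \sum_{k'=1}^k \eta_{k'}\parenthese{\prod_{\ell=k'+1}^k (1-\eta_\ell)}\scalarproduct{\nabla f_{t,k'}, \vect{v}_{t,k'} - \opt{x}} + \sum_{k'=1}^k \tfrac{\beta \eta_{k'}^2 D^2}{2}\prod_{\ell=k'+1}^k (1-\eta_\ell)$. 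The first sum is exactly the claimed inner-product term, so the whole statement reduces to bounding the accumulated quadratic error by $\tfrac{2\beta A D^2}{k}$.

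The main obstacle, and the only quantitative part, is precisely this last bound. Writing $S_k$ for the error sum, which obeys $S_k = (1-\eta_k)S_{k-1} + \tfrac{\beta \eta_k^2 D^2}{2}$, I would prove $S_k \le \tfrac{2\beta A D^2}{k}$ by induction on $k$. For the indices with $\eta_k = 1$ (that is, $k \le A$) the factor $(1-\eta_k)$ collapses everything and $S_k = \tfrac{\beta D^2}{2}$, which is at most $\tfrac{2\beta A D^2}{k}$ as long as $k \le 4A$, covering the base range. For $k > A$, where $\eta_k = A/k$, substituting the inductive hypothesis reduces the step to the elementary inequality $\tfrac{k-A}{k-1} + \tfrac{A}{4k} \le 1$, equivalently $4k - A \le 3kA$, which holds since $A \ge 3$ gives $3kA \ge 9k > 4k$. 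Chaining the base case at $k = \lfloor A \rfloor$ through the inductive step closes the bound and, together with the unrolling above, establishes the lemma.
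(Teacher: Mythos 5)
Your proof is correct, and it takes a genuinely different (and slightly more economical) route than the paper's. The paper, following \cite{THANG2022334}, proves the stated inequality directly by induction on $k$: the base case $k=1$ bounds $h_{t,1}\le GD$ via $G$-Lipschitzness and absorbs it using the condition $A\ge \frac{G}{\beta D}$, and the inductive step substitutes the hypothesis into the same one-step recurrence you derive (smoothness, diameter bound, convexity), closing the algebra with $A\ge 3$. You instead unroll the recurrence exactly, observing that $\eta_1=1$ annihilates the initialization term $\prod_{\ell=1}^{k}(1-\eta_\ell)\,h_{t,0}$, so the inner-product sum appears verbatim on both sides and the lemma reduces to the purely scalar bound $S_k\le \frac{2\beta AD^2}{k}$ for $S_k=(1-\eta_k)S_{k-1}+\frac{\beta\eta_k^2D^2}{2}$; your two-regime induction is sound (for $k\le A$, $S_k=\frac{\beta D^2}{2}\le \frac{2\beta AD^2}{k}$ since $k\le 4A$; for $k>A$ the step reduces to $\frac{k-A}{k-1}+\frac{A}{4k}\le 1$, which is indeed equivalent to $4k-A\le 3kA$ and holds for $A\ge 3$). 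What your route buys: it never invokes Lipschitzness or the constraint $A\ge\frac{G}{\beta D}$ --- only $A\ge 3$ is used --- so it establishes the lemma under strictly weaker hypotheses; it also handles the $\eta_k=1$ regime explicitly, whereas the paper's inductive step tacitly substitutes $\eta_k=A/k$ into the factor $(1-\eta_k)$, a substitution that is only valid verbatim for $k\ge A$ (in the $\eta_k=1$ regime the term vanishes anyway, so the paper's conclusion is unaffected, but your treatment is cleaner on this point). What the paper's route buys: a single induction that carries the entire inequality in the exact form reused downstream in \Cref{thm:centralized}, matching the cited source.
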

\begin{proof} 
    We state the proof of \cite{THANG2022334} for clarity. The proof is based on an induction on $k$. 
    For $k=1$, $\eta_1 =1$, 
    we have $h_{t,1} = f_t (\vect{x}_{t,2}) - f_t (\vect{x}^*) \leq GD$
    since $f_t$ is $G$-Lipschitz and the constraint set $\kcal$ is bounded by $D$ (\Cref{assum:boundedness,assum:lipschitz}). More over, we have 
    $ 2\beta A D^2 + \scalarproduct{\nabla f_{t,1}, \vect{x}_{t,1} - \vect{x}^*} \geq 2 \beta A D^2 - \scalarproduct{\nabla f_{t,1}, \vect{x}_{t,1} - \vect{x}^*} 
        \geq 2 \beta A D^2 - GD \geq GD$
    by assuming $A \geq \frac{G}{\beta D}$. We have then $h_{t,1} \leq 2\beta A D^2 + \scalarproduct{\nabla f_{t,1}, \vect{x}_{t,1} - \vect{x}^*}$
    Assume that the inequality holds for $k-1$, we now prove for $k$.
    By definition of $h_{t,k}$, we have 
    \begin{align}
        &h_{t,k} 
        \leq (1 - \eta_k) h_{t,k-1} + \eta_k \scalarproduct{\nabla f_{t,k}, \vect{v}_{t,k}-\vect{x}^*} + \eta_k^2 \frac{\beta D^2}{2} \nonumber \\
        & \leq \parenthese{1 - \eta_k} \bracket{
            \frac{2\beta A D^2}{k-1} 
                + \sum_{k'=1}^{k-1} \eta_{k'} \parenthese{\prod_{\ell=k'+1}^{k-1} (1-\eta_\ell)} \scalarproduct{\nabla f_{t,k'}, \vect{v}_{t,k'} - \vect{x}^*}
        } \nonumber \\
            & \quad + \eta_k \scalarproduct{\nabla f_{t,k}, \vect{v}_{t,k}-\vect{x}^*} + \eta_k^2 \frac{\beta D^2}{2} \nonumber \\
        & \leq \parenthese{1 - \eta_k} \bracket{
            \frac{2\beta A D^2}{k-1} 
                + \sum_{k'=1}^{k-1} \eta_{k'} \bracket{\prod_{\ell=k'+1}^{k-1} (1-\eta_\ell)} \scalarproduct{\nabla f_{t,k'}, \vect{v}_{t,k'} - \vect{x}^*}
        } \nonumber \\
            & \quad + \eta_k \prod_{\ell = k+1}^k \parenthese{1 - \eta_\ell} \scalarproduct{\nabla f_{t,k}, \vect{v}_{t,k}-\vect{x}^*} + \eta_k^2 \frac{\beta D^2}{2} \nonumber \\
        & \leq \parenthese{1 - \eta_k} \frac{2\beta A D^2}{k-1} 
            + \sum_{k' = 1}^{k-1} \eta_{k'} \bracket{\prod_{\ell = k'+1}^k (1 - \eta_\ell)} \scalarproduct{\nabla f_{t,k'}, \vect{v}_{t,k'} - \vect{x}^*} \nonumber \\
            & \quad + \eta_k \prod_{\ell = k+1}^k \parenthese{1 - \eta_{\ell}} \scalarproduct{\nabla f_{t,k}, \vect{v}_{t,k}-\vect{x}^*} + \eta_k^2 \frac{\beta D^2}{2} \nonumber \\
        & \leq \parenthese{1 - \eta_k} \frac{2 \beta A D^2}{k-1}
            + \sum_{k'=1}^k \eta_{k'} \bracket{\prod_{\ell = k' + 1}^k \parenthese{1 - \eta_{\ell}}} \scalarproduct{\nabla f_{t,k'}, \vect{v}_{t,k'} - \vect{x}^*} 
                + \eta_k^2 \frac{\beta D^2}{2} \nonumber \\
        & \leq \frac{2 \beta A D^2}{k-1} - \frac{2 \beta A^2 D^2}{k \parenthese{k-1}} + \frac{\beta A^2 D^2}{2k^2} 
            + \sum_{k'=1}^k \eta_{k'} \bracket{\prod_{\ell = k' + 1}^k \parenthese{1 - \eta_{\ell}}} \scalarproduct{\nabla f_{t,k'}, \vect{v}_{t,k'} - \vect{x}^*} \nonumber \\
        & \leq \frac{2 \beta A D^2}{k-1} - \frac{2 \beta A^2 D^2}{k(k-1)} + \frac{\beta A^2 D^2}{2k(k-1)} 
            + \sum_{k'=1}^k \eta_{k'} \bracket{\prod_{\ell = k' + 1}^k \parenthese{1 - \eta_{\ell}}} \scalarproduct{\nabla f_{t,k'}, \vect{v}_{t,k'} - \vect{x}^*} \nonumber \\
        & \leq \frac{2 \beta A D^2}{k-1} - \frac{2 \beta A^2 D^2 - \frac{\beta}{2} A^2 D^2}{k(k-1)}
            + \sum_{k'=1}^k \eta_{k'} \bracket{\prod_{\ell = k' + 1}^k \parenthese{1 - \eta_{\ell}}} \scalarproduct{\nabla f_{t,k'}, \vect{v}_{t,k'} - \vect{x}^*} \nonumber \\
        & \leq \frac{2 \beta A D^2}{k-1} - \frac{\beta A^2 D^2}{k(k-1)} 
            + \sum_{k'=1}^k \eta_{k'} \bracket{\prod_{\ell = k' + 1}^k \parenthese{1 - \eta_{\ell}}} \scalarproduct{\nabla f_{t,k'}, \vect{v}_{t,k'} - \vect{x}^*} \nonumber \\
        & \leq \frac{2 \beta AD^2}{k} 
            + \sum_{k'=1}^k \eta_{k'} \bracket{\prod_{\ell = k' + 1}^k \parenthese{1 - \eta_{\ell}}} \scalarproduct{\nabla f_{t,k'}, \vect{v}_{t,k'} - \vect{x}^*} \nonumber
    \end{align}
    where the last inequality follows from the fact that $\beta A^2 \geq 2 \beta A$ for $A \geq 3$ and $\frac{1}{k-1} - \frac{1}{k(k-1)} \leq \frac{1}{k}$.
\end{proof}

\subsection{Proof of  Theorem \ref{thm:centralized}}
\setcounter{theorem}{0}
\begin{theorem}
    \label[theorem]{thm:centralized1}
    Given a constraint set $\mathcal{K}$. Let $A = \max\curlybracket{3, \frac{G}{\beta D}}$, $\eta_k = \min \curlybracket{1,\frac{A}{k}}$, and $K = \sqrt{T}$. Suppose that \Cref{assum:boundedness,assum:lipschitz,assum:smooth} hold true. If we choose FTPL as the underlying oracle and set $\zeta = \frac{1}{G \sqrt{B}}$, the regret of \Cref{algo:delmfw} is 
    \begin{align}
        \sum_{t=1}^T &\bracket{f_t(\vect{x}_t) - f_t(\vect{x}^*)} 
        \leq 2 \beta A D^2 \sqrt{T}
        + 3(A+1)\parenthese{DG\sqrt{B} + \rcal_{T, \ocal}}
    \end{align}
    where $B = \sum_{t=1}^T d_t$, the sum of all delay values and $\rcal_{T,\ocal}$ is the regret of FTPL with respect to the current choice of $\zeta$
    \end{theorem}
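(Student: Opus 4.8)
The plan is to start from the per-round Frank--Wolfe recurrence of \Cref{lmm:recurrence-1} evaluated at the last sub-iteration $k = K$. Since $\vect{x}_t = \vect{x}_{t,K+1}$, the quantity $h_{t,K}$ is exactly the instantaneous regret $f_t(\vect{x}_t) - f_t(\vect{x}^*)$, so summing the recurrence over $t \in [T]$ gives
\begin{align*}
\sum_{t=1}^T \bracket{f_t(\vect{x}_t) - f_t(\vect{x}^*)} \leq \frac{2\beta A D^2 T}{K} + \sum_{k'=1}^K \eta_{k'}\parenthese{\prod_{\ell=k'+1}^K (1-\eta_\ell)} \sum_{t=1}^T \scalarproduct{\nabla f_{t,k'}, \vect{v}_{t,k'} - \vect{x}^*}.
\end{align*}
The first term is the usual Frank--Wolfe discretization error; with $K = \sqrt{T}$ it becomes $2\beta A D^2 \sqrt{T}$, the leading term of the claimed bound. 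It then remains to control, for each oracle index $k'$, the cumulative linearized term $\sum_t \scalarproduct{\nabla f_{t,k'}, \vect{v}_{t,k'} - \vect{x}^*}$ and to bound the sum of the Frank--Wolfe weights.

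The key idea is to separate the effect of the delay from the intrinsic regret of each oracle. For every $k'$ I would introduce the \emph{non-delayed} \ftpl prediction $\hat{\vect{v}}_{t,k'}$ (the leader that has already seen every gradient $\nabla f_{\ell,k'}$ for $\ell < t$) and write
\begin{align*}
\scalarproduct{\nabla f_{t,k'}, \vect{v}_{t,k'} - \vect{x}^*} = \scalarproduct{\nabla f_{t,k'}, \vect{v}_{t,k'} - \hat{\vect{v}}_{t,k'}} + \scalarproduct{\nabla f_{t,k'}, \hat{\vect{v}}_{t,k'} - \vect{x}^*}.
\end{align*}
For the second (no-delay) term the comparator oracle faces exactly the $G$-Lipschitz linear losses $\scalarproduct{\nabla f_{\cdot,k'}, \cdot}$, so its sum over $t$ is bounded by the standard \ftpl regret $\rcal_{T,\ocal}$ of \Cref{lmm:ftpl-regret}. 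For the first (delay-correction) term I would apply Cauchy--Schwarz together with $\norm{\nabla f_{t,k'}} \leq G$ (\Cref{assum:lipschitz}) and then invoke \Cref{lmm:ftpl-cost-delay}, giving $\scalarproduct{\nabla f_{t,k'}, \vect{v}_{t,k'} - \hat{\vect{v}}_{t,k'}} \leq G\,\norm{\vect{v}_{t,k'} - \hat{\vect{v}}_{t,k'}} \leq \zeta D G^2 \sum_{s<t}\indi_{\curlybracket{s+d_s>t}}$.

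The crucial step, and the place where the total delay $B$ enters, is the double-counting identity obtained by summing the delay correction over $t$: $\sum_{t=1}^T \sum_{s<t}\indi_{\curlybracket{s+d_s>t}} = \sum_{s=1}^T \abs{\curlybracket{t : s < t < s+d_s}} = \sum_{s=1}^T (d_s - 1) \leq B$. Hence the delay correction contributes at most $\zeta D G^2 B$ per oracle, and the choice $\zeta = 1/(G\sqrt{B})$ collapses this exactly to $D G \sqrt{B}$. Combined with the previous paragraph this yields $\sum_t \scalarproduct{\nabla f_{t,k'}, \vect{v}_{t,k'} - \vect{x}^*} \leq DG\sqrt{B} + \rcal_{T,\ocal}$, a bound that is uniform in $k'$.

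Finally I would pull this uniform bound out of the weighted sum and estimate the remaining Frank--Wolfe weights $\sum_{k'=1}^K \eta_{k'}\prod_{\ell=k'+1}^K(1-\eta_\ell)$ by a constant: using $\eta_\ell = \min\curlybracket{1, A/\ell}$ and the product estimate $\prod_{\ell=k'+1}^K(1-\eta_\ell) \lesssim (k'/K)^A$ bounds this sum by $3(A+1)$ (in fact a telescoping argument gives the even smaller value $1 - \prod_{\ell}(1-\eta_\ell) = 1$, since $\eta_1 = 1$). Assembling the Frank--Wolfe error $2\beta A D^2 \sqrt{T}$ with $3(A+1)\parenthese{DG\sqrt{B} + \rcal_{T,\ocal}}$ delivers the stated bound. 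I expect the main obstacle to be the clean decoupling of delay from oracle regret in the second paragraph together with the double-counting identity of the third --- this is precisely what converts the per-round delay penalties of \Cref{lmm:ftpl-cost-delay} into the aggregate $\sqrt{B}$ dependence; the weight-sum estimate and the balancing of $K$ are comparatively routine.
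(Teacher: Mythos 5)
Your proposal is correct and follows essentially the same route as the paper's proof: the same application of \Cref{lmm:recurrence-1} at $k=K$, the same decomposition of $\scalarproduct{\nabla f_{t,k}, \vect{v}_{t,k}-\vect{x}^*}$ through the non-delayed \ftpl prediction $\hat{\vect{v}}_{t,k}$, the same Cauchy--Schwarz plus \Cref{lmm:ftpl-cost-delay} bound with the double-counting identity $\sum_{t=1}^T\sum_{s<t}\indi_{\curlybracket{s+d_s>t}}\leq B$ (which you spell out more explicitly than the paper does), the same weight-sum estimate $3(A+1)$, and the same final balancing $\zeta = 1/(G\sqrt{B})$, $K=\sqrt{T}$. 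Your parenthetical telescoping observation that the weight sum equals exactly $1-\prod_{\ell=1}^K(1-\eta_\ell)=1$ (since $\eta_1=1$) is correct and would even sharpen the paper's constant, but it does not change the argument.
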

\begin{proof}
Let $\vect{x}_{t,1}, \ldots, \vect{x}_{t,K+1}$ be the sequence of sub-iterate for a fixed time step $t$. Using Frank-Wolfe updates and smoothness of $f_t$, we have
\begin{align}
\label{eq:smooth}
    f_t(\vect{x}_{t,k+1})& - f_t(\vect{x}^*) 
     = f_t (\vect{x}_{t,k} + \eta_k (\vect{v}_{t,k} - \vect{x}_{t,k})) - f_t(\vect{x}^*) \nonumber \\
    & \leq f_t (\vect{x}_{t,k}) - f_t(\vect{x}^*) 
        + \eta_k \scalarproduct{\nabla f_{t,k}, \vect{v}_{t,k} - \vect{x}_{t,k}}
        + \eta_k^2 \frac{\beta}{2} \norm{\vect{v}_{t,k} - \vect{x}_{t,k}}^2 \nonumber \\
    & \leq f_t(\vect{x}_{t,k} ) - f_t (\vect{x}^*) 
         + \eta_k \scalarproduct{\nabla f_{t,k}, \vect{v}_{t,k}-\vect{x}_{t,k}} + \eta_k^2 \frac{\beta D^2}{2} \nonumber \quad \text{($\kcal$ is bounded)} \\
    & \leq f_t(\vect{x}_{t,k} ) - f_t (\vect{x}^*) 
         + \eta_k \bracket{
                \scalarproduct{\nabla f_{t,k}, \vect{v}_{t,k}-\vect{x}^*}
                + \scalarproduct{\nabla f_{t,k}, \vect{x}^*-\vect{x}_{t,k}} 
            }
         + \eta_k^2 \frac{\beta D^2}{2} \nonumber \\
    & \leq f_t(\vect{x}_{t,k} ) - f_t (\vect{x}^*) 
        + \eta_k \bracket{
                \scalarproduct{\nabla f_{t,k}, \vect{v}_{t,k}-\vect{x}^*}
                + f_t(\vect{x}^*) - f_t(\vect{x}_{t,k})
            }  
        + \eta_k^2 \frac{\beta D^2}{2} \nonumber \quad \text{(by convexity of $f_t$)} \\
    & \leq (1 - \eta_k) \bracket{f_t(\vect{x}_{t,k}) - f_t (\vect{x}^*)}
        + \eta_k \scalarproduct{\nabla f_{t,k}, \vect{v}_{t,k}-\vect{x}^*}
        + \eta_k^2 \frac{\beta D^2}{2} \nonumber \\
\end{align}
Let $h_{t,k} = f_t(\vect{x}_{t,k+1}) - f_t(\vect{x}^*)$, \cref{eq:smooth} becomes
\begin{align}
\label{eq:bound-h}
    h_{t,k} \leq (1 - \eta_k) h_{t,k-1} + \eta_k \scalarproduct{\nabla f_{t,k}, \vect{v}_{t,k}-\vect{x}^*} + \eta_k^2 \frac{\beta D^2}{2}
\end{align}
A direct application of \Cref{lmm:recurrence-1} for $k=K$ yields
\begin{align}
\label{eq:bound-h1}
    f_t (\vect{x}_{t,K+1}) - f_t (\vect{x}^*) 
        \leq \frac{2\beta A D^2}{K} 
            + \sum_{k'=1}^K \eta_{k'} \bracket{\prod_{\ell=k+1}^K (1-\eta_\ell)} \scalarproduct{\nabla f_{t,k}, \vect{v}_{t,k} - \vect{x}^*}
\end{align}
Following the notation from $\Cref{algo:delmfw}$ and \Cref{lmm:ftpl-cost-delay}. For a fixed time $t$ and any sub-iterate $k$, $\vect{v}_{t,k}$ and $\hat{\vect{v}}_{t,k}$ are respectively the predictions of the oracle $\ocal_k$ under delayed and non-delayed feedback, the scalar product of \cref{eq:bound-h1} over $T$-round is written as  
\begin{align}
\label[equation]{eq:scalarv0}
    \sum_{t=1}^T \scalarproduct{\nabla f_{t,k}, \vect{v}_{t,k}-\vect{x}^*} 
    &= \sum_{t=1}^T\scalarproduct{\nabla f_{t,k}, \vect{v}_{t,k} - \hat{\vect{v}}_{t,k}}
        + \sum_{t=1}^T\scalarproduct{\nabla f_{t,k}, \hat{\vect{v}}_{t,k} - \vect{x}^*} \nonumber \\
    &\leq  \sum_{t=1}^T\scalarproduct{\nabla f_{t,k}, \vect{v}_{t,k} - \hat{\vect{v}}_{t,k}}
        + \sum_{t=1}^T\scalarproduct{\nabla f_{t,k}, \hat{\vect{v}}_{t,k} - \vect{x}^*}
\end{align}
In the first term on the right hand side of \Cref{eq:scalarv0}, using the Cauchy-Schwartz inequality and \Cref{lmm:ftpl-cost-delay}, we have
\begin{align}
\label[equation]{eq:scalarv1}
    \sum_{t=1}^T \scalarproduct{\nabla f_{t,k}, \vect{v}_{t,k} - \hat{\vect{v}}_{t,k}}
    \leq \sum_{t=1}^T \norm{\nabla f_{t,k}} \norm{\vect{v}_{t,k} - \hat{\vect{v}}_{t,k}}
    \leq \zeta DG^2 \sum_{t=1}^T \sum_{s < t} \indi_{s + d_s > t} 
    \leq \zeta DG^2 B
\end{align}
Recall that the objective function of the oracle $\ocal_k$ in non-delay setting is $\scalarproduct{\nabla f_{t,k}, \cdot}$ and $\hat{\vect{v}}_{t,k}$ is its prediction at time $t$, the second term of \Cref{eq:scalarv0} is bounded by the regret of $\ocal_k$ in non-delay setting, $\rcal_{T, \ocal}$. Specifically, we have
\begin{align}
\label[equation]{eq:scalarv2}
    \sum_{t=1}^T \scalarproduct{\nabla f_{t,k}, \hat{\vect{v}}_{t,k}} 
    \leq \min_{\vect{x} \in \kcal} \sum_{t=1}^T \scalarproduct{\nabla f_{t,k}, \vect{x}} + \rcal_{T, \ocal} 
    \leq \sum_{t=1}^T \scalarproduct{\nabla f_{t,k}, \vect{x}^*} + \rcal_{T, \ocal}
\end{align}
Recall that $\vect{x}_t = \vect{x}_{t,K+1}$, combining \Cref{eq:scalarv0,eq:scalarv1,eq:scalarv2} and summing \Cref{eq:bound-h1} over $T$-rounds yields
\begin{align}
    \sum_{t=1}^T \bracket{f_t (\vect{x}_t) - f_t (\vect{x}^*) }
    \leq \frac{2\beta A D^2}{K} T 
        + \sum_{k'=1}^K \eta_{k'} \bracket{\prod_{\ell=k+1}^K (1-\eta_\ell)} \bracket{
            \zeta DG^2 B + \rcal_{T, \ocal}
        }
\end{align}

Let $\eta_k = \frac{A}{k}$, we have
\begin{align}
\label{eq:bound-prod-k}
    \prod_{k'=k+1}^K (1-\eta_{k'}) 
    \leq e^{-\sum_{k'=k+1}^K \eta_{k'}} 
    \leq e^{-\sum_{k'=\ell}^K \frac{A}{k'}}
    \leq e^{-A\int_{k+2}^K ds/s}
    \leq \parenthese{\frac{k+2}{K}}^A
\end{align}
We have then,
\begin{align}
\label{eq:bound-prod-k2}
    \sum_{k = 1}^K \eta_k \bracket{\prod_{k'=k+1}^K (1-\eta_{k'})} 
    &\leq \min \curlybracket{1, \frac{A}{K}} + \min \curlybracket{1, \frac{A}{K-1}} + \min \curlybracket{1, \frac{A}{K-2}} 
        + \sum_{k=1}^{K-3} \frac{A}{k} \bracket{\frac{k+2}{K}}^A \nonumber \\
    & \leq 3 \min \curlybracket{1, \frac{A}{K-2}} + \frac{A}{K}\sum_{k=1}^{K-3} \frac{k+2}{k} \bracket{\frac{k+2}{K}}^{A-1} \nonumber \\
    & \leq 3 + \frac{3A}{K} \sum_{k=1}^{K-3} \bracket{\frac{k+2}{K}}^{A-1} \nonumber \\
    & \leq 3(A+1)
\end{align}
From \Cref{eq:bound-prod-k2}, we deduce that
\begin{align}
    \sum_{t=1}^T \bracket{f_t (\vect{x}_t) - f_t (\vect{x}^*) }
    &\leq \frac{2\beta A D^2}{K} T 
        + \sum_{k'=1}^K \eta_{k'} \parenthese{\prod_{\ell=k+1}^K (1-\eta_\ell)} \parenthese{
            \zeta DG^2 B + \rcal_{T, \ocal}
        } \nonumber \\
    & \leq \frac{2\beta A D^2}{K} T + 3(A+1) \parenthese{\zeta DG^2 B + \rcal_{T, \ocal}} \nonumber \\
\end{align}
The theorem follows by letting $\zeta = \frac{1}{G\sqrt{B}}$, $K=\sqrt{T}$ and choosing the oracle as FTPL with regret $\rcal_{T, \ocal}$.
\end{proof}

\section{Analysis of Algorithm \ref{algo:de2mfw}}
\label{chap:decentralized2}
\newcommand{\avg}[1]{\ensuremath{\frac{1}{n} \sum_{i=1}^{n} \bm{#1}^{i}_{t,k}}}
\newcommand{\bbar}[2]{\ensuremath{\overline{\bm{#1}}_{t,#2}}}
\newcommand{\avgn}{\ensuremath{\frac{1}{n} \sum_{i=1}^{n}}}
\newcommand{\ilocal}[1]{\ensuremath{\bm{#1}^{i}_{t,k}}}
\newcommand{\ilocalplus}[2]{\ensuremath{\bm{#1}^{i}_{t,#2}}}
\newcommand{\cat}[2]{\ensuremath{#1^{cat}_{t,#2}}}
\newcommand{\catplus}[3]{\ensuremath{#1^{cat}_{#2,#3}}}
\newcommand{\eigen}{\ensuremath{\lambda (\mathbf{W})}}
\newcommand{\eigenplus}[1]{\ensuremath{\lambda^{#1} (\mathbf{W})}}

\subsection{Additional Notations (only for the analysis)}
In the analysis, for any vector $\vect{x}^i \in \mathbb{R}^d, \forall i \in [n]$, we note $\vect{x}^{cat} \in \mathbb{R}^{dn}$ a column vector defined as $\vect{x}^{cat} := \bracket {\vect{x}^{1 \top}, \ldots, \vect{x}^{n \top}}^{\top}$. 
Let $\overline{\vect{x}}$ be the average of $\vect{x}^i$ over $i \in [n]$, the vector $\overline{\vect{x}}^{cat}$ is a $dn$-vector where we stack $n$-times $\overline{\vect{x}}$ i.e $\overline{\vect{x}}^{cat} := \bracket{\overline{\vect{x}}^{\top}, \ldots, \overline{\vect{x}}^{\top}}$.
For simplicity of notation, we note $\nabla f^i_t (\vect{x}^i_{t,k}) := \nabla f^i_{t,k}$ and $\nabla F_{t,k} := \frac{1}{n} \sum_{i=1}^n \nabla f^i_{t,k} $.
In order to incorporate the delay of agents at each time-step $t$, we define $\nabla f^{cat}_{t,k} $ as described above using the sum of agent's delay feedback, we note then
\begin{align}
\label{def:cat_vec}
    \quad \nabla f^{cat}_{t,k} = \bracket{\sum_{s \in \mathcal{F}^1_t}\nabla f^{1 \top}_{s,k}, \dots, \sum_{s \in \mathcal{F}^n_t}\nabla f^{n \top}_{s,k}}^{\top}
\end{align}
and its homologous in the non-delay setting by $\nabla \hat{f}^{cat}_{t,k} = \bracket{\nabla f^{1 \top}_{t,k}, \dots, \nabla f^{n \top}_{t,k}}^{\top}$

The variables $\cat{\vect{d}}{k}, \cat{\hat{\vect{d}}}{k}$ and $\cat{\vect{g}}{k}, \cat{\hat{\vect{g}}}{k} $ are defined similarly as described. Lastly, we define the slack variable $\ilocal{\delta} := \nabla f^i_{t,k} - \nabla f^i_{t,k-1}$, then the definition of $\overline{\vect{\delta}}_{t,k}, \cat{\vect{\delta}}{k} $ and $\overline{\vect{\delta}}^{cat}_{t,k} $ followed. 
\subsection{Auxiliary lemmas}
\begin{claim}
    \label{clm:claim-1}
    In the analysis, we make use of the following bounds 
                \begin{align*}
            \norm{\overline{\vect{x}}_{t,k} - \vect{x}^i_{t,k}} 
            \leq \frac{2C_d}{k}
        \end{align*}
        \begin{align*}
            \norm{\vect{x}^i_{t,k+1} - \vect{x}^i_{t,k}} 
            \leq \frac{4 C_d + AD}{k}
        \end{align*}
    \end{claim}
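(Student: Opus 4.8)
The plan is to read both inequalities as \emph{consensus} estimates for \Cref{algo:de2mfw}: the first controls how far an agent's sub-iterate $\vect{x}^i_{t,k}$ sits from the network average $\overline{\vect{x}}_{t,k}=\frac1n\sum_j\vect{x}^j_{t,k}$, and the second controls how much a single Frank--Wolfe sub-step moves that iterate. The workhorse is \Cref{lmm:decision-distance}, which already bounds the post-mixing deviation $\norm{\vect{y}^i_{t,k}-\overline{\vect{x}}_{t,k}}\le C_d/k$; once that is available, the second bound is almost immediate and the first reduces to propagating this estimate through the update rule. Throughout I would use that $\mathbf{W}$ is doubly stochastic, so averaging the mixing step gives $\overline{\vect{y}}_{t,k}=\overline{\vect{x}}_{t,k}$, and that $\kcal$ has diameter $D$ (\Cref{assum:boundedness}), so any two of $\vect{v}^i_{t,k},\vect{x}^i_{t,k},\overline{\vect{v}}_{t,k}$ are within $D$.

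For the first inequality I would subtract the averaged update from the per-agent update to obtain the recursion
\begin{align*}
\vect{x}^i_{t,k+1}-\overline{\vect{x}}_{t,k+1}
  =(1-\eta_k)\bigl(\vect{y}^i_{t,k}-\overline{\vect{x}}_{t,k}\bigr)
   +\eta_k\bigl(\vect{v}^i_{t,k}-\overline{\vect{v}}_{t,k}\bigr).
\end{align*}
Taking norms and applying \Cref{lmm:decision-distance} to the first term and boundedness to the second gives the naive estimate $(1-\eta_k)C_d/k+\eta_k D$, but this does not close at $O(1/k)$ because the $\eta_k$-perturbations accumulate with a factor that scales like $A$. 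The correct argument, mirroring the proof of \Cref{lmm:decision-distance}, is to track the stacked deviation $\norm{\vect{x}^{cat}_{t,k}-\overline{\vect{x}}^{cat}_{t,k}}$, for which the mixing step \emph{contracts} by the second eigenvalue $\lambda(\mathbf{W})$ while each sub-step injects at most $\eta_k\sqrt{n}D$ of oracle disagreement; solving this contraction recursion with $\eta_k=\min\{1,A/k\}$ and the defining relation $\lambda(\mathbf{W})\le\bigl(k_0/(k_0+1)\bigr)^2$ yields the $O(C_d/k)$ bound, the factor $2$ absorbing the gap between the pre-mixing iterate $\vect{x}^i_{t,k}$ and the post-mixing $\vect{y}^i_{t,k}$.

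The second inequality is then a short consequence. Expanding one sub-step,
\begin{align*}
\vect{x}^i_{t,k+1}-\vect{x}^i_{t,k}
  =(1-\eta_k)\bigl(\vect{y}^i_{t,k}-\vect{x}^i_{t,k}\bigr)
   +\eta_k\bigl(\vect{v}^i_{t,k}-\vect{x}^i_{t,k}\bigr),
\end{align*}
I would bound the second summand by $\eta_k D\le AD/k$, and for the first insert $\overline{\vect{x}}_{t,k}$ and use the triangle inequality together with \Cref{lmm:decision-distance} and the first bound just proved, giving $\norm{\vect{y}^i_{t,k}-\vect{x}^i_{t,k}}\le C_d/k+2C_d/k=3C_d/k$; combining yields $(3C_d+AD)/k\le(4C_d+AD)/k$, with slack to spare. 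The main obstacle is entirely in the first inequality: making the consensus recursion close at the stated constant requires balancing the spectral contraction $\lambda(\mathbf{W})$ against the $1/k$ step-size decay rather than relying on the blunt triangle-inequality estimate, which is exactly where the quantity $k_0$ (hence $C_d=k_0\sqrt{n}D$) enters. Once the first bound is secured, the second needs only the update rule and boundedness of $\kcal$.
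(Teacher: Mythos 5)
Your handling of the second inequality is correct and is essentially the paper's own argument up to a cosmetic change of intermediate points: the paper writes $\norm{\vect{x}^i_{t,k+1}-\vect{x}^i_{t,k}}\le\norm{\vect{x}^i_{t,k+1}-\overline{\vect{x}}_{t,k+1}}+\norm{\overline{\vect{x}}_{t,k+1}-\overline{\vect{x}}_{t,k}}+\norm{\overline{\vect{x}}_{t,k}-\vect{x}^i_{t,k}}$ and bounds the middle term by $\eta_k\norm{\overline{\vect{v}}_{t,k}-\overline{\vect{x}}_{t,k}}\le AD/k$, whereas you route through $\vect{y}^i_{t,k}$ and $\overline{\vect{x}}_{t,k}$; both use only \Cref{lmm:decision-distance}, the first bound, and boundedness of $\kcal$, and both land safely at $(4C_d+AD)/k$. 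The genuine gap is in the first inequality, which your write-up asserts rather than proves.

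Concretely: you reject the one-step estimate and replace it by a contraction recursion on the stacked deviation $a_k=\norm{\vect{x}^{cat}_{t,k}-\overline{\vect{x}}^{cat}_{t,k}}$, claiming that ``solving this contraction recursion \ldots yields the $O(C_d/k)$ bound.'' You never solve it, and solving it does not give the stated constant: the update yields $a_k\le\lambda(\mathbf{W})(1-\eta_{k-1})a_{k-1}+\eta_{k-1}\sqrt{n}D$, with injection $\eta_{k-1}\sqrt{n}D\le A\sqrt{n}D/(k-1)$, so the recursion settles at order $\frac{A\sqrt{n}D}{\rho k}$; since $1/\rho$ can be of order $k_0$, this is $O(AC_d/k)$, not $2C_d/k$ --- the factor $A$ you yourself flagged as the obstacle reappears, and it is not small here because \Cref{thm:decentralized} takes $A\ge(2\beta C_d+C_g)/(\beta D)$, which is at least of order $k_0\sqrt{n}$. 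Your remark that ``the factor $2$ absorbs the gap between $\vect{x}^i_{t,k}$ and $\vect{y}^i_{t,k}$'' also points the wrong way: the spectral inequality bounds the \emph{post}-mixing deviation by $\lambda(\mathbf{W})$ times the \emph{pre}-mixing one, so \Cref{lmm:decision-distance}'s bound on $\vect{y}^i_{t,k}$ gives nothing about $\vect{x}^i_{t,k}$ by contraction alone. The paper's actual proof is precisely the one-step expansion you discarded, but it keeps the factor $(1-\eta_{k-1})$ on the \Cref{lmm:decision-distance} term: $\norm{\overline{\vect{x}}_{t,k}-\vect{x}^i_{t,k}}\le(1-\eta_{k-1})\frac{C_d}{k-1}+\eta_{k-1}D=\frac{C_d}{k-1}-\frac{AC_d}{(k-1)^2}+\frac{AD}{k-1}$, and it is the retained negative term $-\frac{AC_d}{(k-1)^2}$ that is played against the perturbation $\frac{AD}{k-1}$ (exploiting $C_d=k_0\sqrt{n}D\ge D$); that negative term is the missing idea in your sketch, since your ``naive estimate'' dropped it before concluding the route fails. (To be fair, the paper's printed chain is itself loose at the step that trades $\frac{AD}{k-1}$ for $\frac{AD}{(k-1)^2}$, so your unease about this route is not baseless --- but your proposed fix relocates the $A$-dependence rather than removing it, and as written does not establish the claimed $2C_d/k$.)
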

    \begin{claimproof}
        For the first bound, recall the definition of \fw-update in \Cref{algo:de2mfw} and using \Cref{lmm:decision-distance}, we have
            \begin{align*}
            &\norm{\overline{\vect{x}}_{t,k} - \vect{x}^i_{t,k}}
            = \norm{
                 \parenthese{1-\eta_{k-1}} \parenthese{\overline{\vect{x}}_{t,k-1} - \vect{y}^i_{t,k-1}}
                 + \eta_{k-1} \parenthese{\overline{\vect{v}}_{t,k-1}  - \vect{v}^i_{t,k-1}}
            } \\ 
            &\leq \frac{C_d}{k-1} - \frac{A C_d}{(k-1)^2} + \frac{AD}{k-1} 
            \leq \frac{C_d}{k-1} - \bracket{\frac{A C_d}{(k-1)^2} - \frac{AD}{k-1}} 
            \leq \frac{C_d}{k-1} - \bracket{\frac{A C_d - AD}{(k-1)^2}} 
            \leq \frac{C_d}{k-1} \leq \frac{2C_d}{k}
        \end{align*}
        Applying the first bound on the second one yiels 
        \begin{align*}
            \norm{\vect{x}^i_{t,k+1} - \vect{x}^i_{t,k}} 
            \leq \norm{\ilocalplus{x}{k+1} - \bbar{x}{k+1}} 
                + \norm{\bbar{x}{k+1} - \bbar{x}{k}}
                + \norm{\bbar{x}{k} - \ilocalplus{x}{k}} 
            \leq \frac{2C_d}{k+1} + \frac{AD}{k} + \frac{2C_d}{k}
            \leq \frac{4C_d + AD}{k} \\
        \end{align*}

\end{claimproof}
\setcounter{lemma}{2}
\begin{lemma}
\label{lmm:decision-distance1}
Define $C_d = k_0 \sqrt{n}D$, for all $t \in [T]$, $k \in [K]$, we have
    \begin{align}
    \max_{i \in [1,n]} \norm{\vect{y}^{i}_{t,k} - \overline{\vect{x}}_{t,k}} \leq \frac{C_d}{k}
\end{align}
\end{lemma}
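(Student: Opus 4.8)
The plan is to pass to the stacked-vector notation introduced in the appendix, writing $\mathbf{W}$ as shorthand for the block operator $\mathbf{W}\otimes I$ that implements the neighbour averaging $\vect{y}^{cat}_{t,k}=\mathbf{W}\vect{x}^{cat}_{t,k}$, and to control the single scalar quantity $q_k:=\norm{\vect{y}^{cat}_{t,k}-\overline{\vect{x}}^{cat}_{t,k}}$, which dominates $\max_i\norm{\vect{y}^i_{t,k}-\overline{\vect{x}}_{t,k}}$. The first thing I would record is that double stochasticity of $\mathbf{W}$ preserves the network average, i.e. $\overline{\vect{y}}_{t,k}=\overline{\vect{x}}_{t,k}$, so that $\vect{y}^{cat}_{t,k}-\overline{\vect{x}}^{cat}_{t,k}=\mathbf{W}\parenthese{\vect{x}^{cat}_{t,k}-\overline{\vect{x}}^{cat}_{t,k}}$. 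Since every deviation vector is orthogonal to the consensus (all-ones) direction, the spectral estimate $\norm{\mathbf{W}\vect{z}}\leq\lambda(\mathbf{W})\norm{\vect{z}}$ applies on that subspace and supplies the contraction.

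Next I would turn the Frank--Wolfe update $\vect{x}^i_{t,k}=\parenthese{1-\eta_{k-1}}\vect{y}^i_{t,k-1}+\eta_{k-1}\vect{v}^i_{t,k-1}$ into a one-step recursion for $q_k$. Substituting it into the expression above, using $\overline{\vect{y}}_{t,k-1}=\overline{\vect{x}}_{t,k-1}$ and bounding the oracle-output deviation by the diameter, $\norm{\vect{v}^i_{t,k-1}-\overline{\vect{v}}_{t,k-1}}\leq D$ (so that its stacked norm is at most $\sqrt{n}D$), yields $q_k\leq\lambda(\mathbf{W})\bracket{\parenthese{1-\eta_{k-1}}q_{k-1}+\eta_{k-1}\sqrt{n}D}$. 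This cleanly isolates the two competing effects: the contraction $\lambda(\mathbf{W})<1$ coming from the mixing, against the $\eta_{k-1}\sqrt{n}D$ injection of fresh, possibly non-consensual oracle outputs.

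I would then prove $q_k\leq C_d/k$ with $C_d=k_0\sqrt{n}D$ by induction on $k$. The base case is handled by the crude bound $q_k\leq\sqrt{n}D\leq C_d$ (recall $k_0\geq1$), which in particular covers the transient regime where $\eta_k=1$ and the mixing step is discarded. For the inductive step in the regime $\eta_{k-1}=A/(k-1)$, I would insert the hypothesis $q_{k-1}\leq C_d/(k-1)$ into the recursion and reduce the goal to the algebraic inequality $\lambda(\mathbf{W})\bracket{\parenthese{1-\tfrac{A}{k-1}}\tfrac{C_d}{k-1}+\tfrac{A}{k-1}\sqrt{n}D}\leq\tfrac{C_d}{k}$. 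This is exactly where the definition of $k_0$ is engineered to be used: the relation $\lambda(\mathbf{W})\leq\parenthese{\tfrac{k_0}{k_0+1}}^2$, equivalently $\sqrt{\lambda(\mathbf{W})}\leq\tfrac{k_0}{k_0+1}$, is the precise amount of contraction required for the $1/k$-decaying ansatz, with the constant $C_d=k_0\sqrt{n}D$, to reproduce itself. The claim then follows from $\max_i\norm{\vect{y}^i_{t,k}-\overline{\vect{x}}_{t,k}}\leq q_k\leq C_d/k$.

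The step I expect to be the main obstacle is precisely closing this induction: one must verify that the geometric contraction supplied by $\lambda(\mathbf{W})$ dominates the $1/k$-scaled forcing term uniformly in $k$, and that the self-sustaining constant is the fixed point $k_0\sqrt{n}D$ singled out by the $k_0$-condition rather than something larger. Care is also needed at the boundary between the clipped ($\eta_k=1$) and unclipped ($\eta_k=A/k$) phases of the step size, since mixing only contracts the disagreement once $\eta_k<1$, so the entire transient must be absorbed into the base case. By contrast, the spectral estimate itself is routine once one checks that all deviation vectors remain in the orthogonal complement of the consensus subspace, which is guaranteed by the average-preservation property of $\mathbf{W}$.
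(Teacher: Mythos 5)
Your plan reproduces the paper's proof skeleton essentially line for line: pass to the stacked vectors $\vect{x}^{cat}_{t,k}$, use double stochasticity to get $\overline{\vect{y}}_{t,k}=\overline{\vect{x}}_{t,k}$ and the spectral bound $\norm{\mathbf{W}-\frac{1}{n}\mathbf{1}_n\mathbf{1}_n^{\top}}\leq\lambda(\mathbf{W})$ on the consensus-orthogonal subspace, then run an induction whose base case $k\leq k_0$ uses $q_k\leq\sqrt{n}D\leq C_d/k$ and whose step invokes $\lambda(\mathbf{W})\leq\parenthese{\frac{k_0}{k_0+1}}^2$. In fact your recursion $q_k\leq\lambda(\mathbf{W})\bracket{(1-\eta_{k-1})q_{k-1}+\eta_{k-1}\sqrt{n}D}$ is the \emph{honest} version of the paper's: the paper bounds the injection term by $\lambda(\mathbf{W})\frac{\sqrt{n}D}{k-1}$, which silently replaces $\eta_{k-1}=\min\curlybracket{1,\frac{A}{k-1}}$ by $\frac{1}{k-1}$, discarding the factor $A\geq 3$.

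Precisely because you kept the step sizes, the step you yourself flagged as the main obstacle does not close. Your reduced inequality $\lambda(\mathbf{W})\bracket{\parenthese{1-\frac{A}{k-1}}\frac{C_d}{k-1}+\frac{A}{k-1}\sqrt{n}D}\leq\frac{C_d}{k}$ with $C_d=k_0\sqrt{n}D$ is equivalent to $\lambda(\mathbf{W})\frac{k}{k-1}\bracket{k_0+A\parenthese{1-\frac{k_0}{k-1}}}\leq k_0$; as $k\to\infty$ this forces $\lambda(\mathbf{W})(k_0+A)\leq k_0$, whereas the $k_0$-condition only gives $\lambda(\mathbf{W})\leq\frac{k_0^2}{(k_0+1)^2}$, which yields $\lambda(\mathbf{W})(k_0+A)\leq k_0$ only when $A\leq 2+\frac{1}{k_0}$ --- contradicted by $A\geq 3$ (e.g.\ $k_0=1$, $\lambda(\mathbf{W})=\frac14$, $A=3$, $k=10$ violates it numerically). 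Your proposal to absorb the clipped phase $\eta_{k-1}=1$ into the base case also fails with this constant: the crude bound covers only $k\leq k_0$, while clipping persists for $k-1<A$, and in the window $k_0<k\leq A$ one needs $\lambda(\mathbf{W})\sqrt{n}D\leq\frac{k_0\sqrt{n}D}{k}$, i.e.\ $k\leq\frac{k_0}{\lambda(\mathbf{W})}$, which can be as small as $\frac{(k_0+1)^2}{k_0}$ and hence fails once $A\geq k_0+3$. To be fair, this is the identical soft spot in the paper's own proof (its bound $\eta_{k-1}\norm{\vect{v}^{cat}_{t,k-1}-\overline{\vect{v}}^{cat}_{t,k-1}}\leq\frac{\sqrt{n}D}{k-1}$ is unjustified for $A\geq 3$), so you matched the published argument and correctly located its fragility; but as mathematics the induction only self-sustains if the constant is inflated to absorb $A$ --- the asymptotic fixed-point condition $\lambda(\mathbf{W})(c+A)\leq c$ requires $c\geq\frac{\lambda(\mathbf{W})A}{\rho}$, so a constant of order $C_d\asymp\parenthese{k_0+\frac{A}{\rho}}\sqrt{n}D$ (which also covers the clipped window since $c\geq\lambda(\mathbf{W})A$) would make both your induction and the lemma's statement go through, at the cost of propagating the larger constant into \Cref{thm:decentralized}.
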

\begin{proof}
\label{sec:proof-decision-distance}
We prove the lemma by induction, we first note that
\begin{align}
    \norm{\vect{y}^{cat}_{t,k} - \overline{\vect{x} }^{cat}_{t,k}}
        &= \norm{
            \parenthese{\mathbf{W} \otimes I_d}\vect{x}^{cat}_{t,k} - \parenthese{\frac{1}{n} \mathbf{1}_n \mathbf{1}_n^T}\vect{x}^{cat}_{t,k}
            } \nonumber \\
        &= \norm{
            \bracket{\parenthese{\mathbf{W} - \frac{1}{n}\mathbf{1}_n \mathbf{1}_n^T}\otimes I_d}\vect{x}^{cat}_{t,k}
        } \nonumber \\
        &= \norm{
            \bracket{\parenthese{\mathbf{W} - \frac{1}{n}\mathbf{1}_n \mathbf{1}_n^T}\otimes I_d}\parenthese{\vect{x}^{cat}_{t,k} - \overline{\vect{x} }^{cat}_{t,k}}
        } \nonumber \\
        & \leq \eigen \norm{\vect{x}^{cat}_{t,k} - \overline{\vect{x} }^{cat}_{t,k}}
\end{align}
Let $C_d = k_0 \sqrt{n}D$, the base case is verified for $k \in [1,k_0]$ since $\norm{\cat{\vect{x}}{k} - \overline{\vect{x} }^{cat}_{t,k}} \leq \sqrt{n}D \leq \frac{C_d}{k}$. Suppose that the hypothesis is verified for $k-1 \geq k_0$, we have
\begin{align}
    \norm{\vect{y}^{cat}_{t,k} - \overline{\vect{x} }^{cat}_{t,k}}
        &\leq \eigen \norm{
                \vect{x}^{cat}_{t,k} - \overline{\vect{x} }^{cat}_{t,k}
                } \nonumber \\
        &=  \eigen \norm{
                \parenthese{1-\eta_{k-1}} \parenthese{
                    \vect{y}^{cat}_{t,k-1} - \overline{\vect{x}}^{cat}_{t,k-1}
                } 
                + \eta_{k-1} \parenthese{
                    \vect{v}^{cat}_{t,k-1} - \overline{\vect{v}}^{cat}_{t,k-1}
                    }
                } \nonumber\\
        &\leq \eigen \norm{
                    \vect{y}^{cat}_{t,k-1} - \overline{\vect{x}}^{cat}_{t,k-1}
                    }
            + \eigen \frac{\sqrt{n}D}{k-1} \nonumber \\
        &\leq \eigen \parenthese{
            \frac{C_d + \sqrt{n}D}{k-1}  
        } \nonumber \\
        &\leq \eigen C_d \frac{k_0 + 1}{k_0 (k-1)} \nonumber \\
        &\leq \frac{C_d}{k}
\end{align}
where we use the induction hypothesis in the third inequality and the last inequality follows the fact that $\eigen \frac{k_0+1}{k_0 (k-1)} \leq \frac{k-1}{k} \cdot \frac{1}{k-1} \leq \frac{1}{k}$. We conclude the proof by noting that
\begin{align*}
    \max_{i \in [1,n]} \norm{\vect{y}^{i}_{t,k} - \overline{\vect{x}}_{t,k}} 
    \leq \sqrt{\sum_{i=1}^n \norm{\vect{y}^{i}_{t,k} - \overline{\vect{x}}_{t,k}}^2}
    = \norm{\vect{y}^{cat}_{t,k} - \overline{\vect{x}}^{cat}_{t,k}}
    \leq \frac{C_d}{k}
\end{align*}
\end{proof}
\setcounter{lemma}{5}
\begin{lemma}
\label{lmm:gradient-distance}
Define $C_g = \sqrt{n}\max \curlybracket{\eigen \parenthese{G + \frac{\beta D}{1-\eigen}}, k_0\beta \parenthese{4C_d + AD}}$ and recall the definition of $\nabla F_{t,k} := \avgn \nabla f^i_{t,k}$. For all $t \in [T]$, $k \in [K]$, we have
    \begin{align}
    \max_{i \in [1,n]} \norm{\vect{d}^{i}_{t,k} - \nabla F_{t,k}} 
    \leq \frac{C_g}{k}
\end{align}
\end{lemma}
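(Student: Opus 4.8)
The plan is to mirror the consensus argument of \Cref{lmm:decision-distance1}, now applied to the gradient–tracking iterates instead of the decision iterates. First I would pass to the stacked notation, writing the averaging step $(\ref{eq:gradient-avg})$ as $\vect{d}^{cat}_{t,k} = \parenthese{\mathbf{W}\otimes I_d}\vect{g}^{cat}_{t,k}$ and the tracking step $(\ref{eq:gradient-tracking1})$ as $\vect{g}^{cat}_{t,k+1} = \parenthese{\nabla f^{cat}_{t,k+1} - \nabla f^{cat}_{t,k}} + \vect{d}^{cat}_{t,k}$. Writing $P = \mathbf{W}\otimes I_d$ and $J = \frac{1}{n}\mathbf{1}_n\mathbf{1}_n^\top \otimes I_d$, double stochasticity of $\mathbf{W}$ gives $JP = J$, so left-multiplying by $J$ shows the mixing step preserves the network average; telescoping the tracking recursion from $k=1$ (where $\vect{g}^i_{t,1}=\sum_{s\in\fcal^i_t}\nabla f^i_s(\vect{x}^i_{s,1})$) then yields $\overline{\vect{d}}_{t,k}=\overline{\vect{g}}_{t,k}=\nabla F_{t,k}$. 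Hence $\nabla F_{t,k}$ is exactly the quantity being tracked, and as in \Cref{lmm:decision-distance1} it suffices to control the consensus deviation $\norm{\vect{d}^{cat}_{t,k}-\overline{\vect{d}}^{cat}_{t,k}}$, which dominates $\max_i\norm{\vect{d}^i_{t,k}-\nabla F_{t,k}}$ via the standard stacking inequality.

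Next I would extract the contraction. Using $(P-J)J=0$, one gets the self-referential identity $\vect{d}^{cat}_{t,k+1}-\overline{\vect{d}}^{cat}_{t,k+1}=(P-J)\parenthese{\nabla f^{cat}_{t,k+1}-\nabla f^{cat}_{t,k}}+(P-J)\parenthese{\vect{d}^{cat}_{t,k}-\overline{\vect{d}}^{cat}_{t,k}}$. Since the operator norm of $P-J$ on the consensus-orthogonal subspace equals $\lambda(\mathbf{W})$, setting $\tilde e_k:=\norm{\vect{d}^{cat}_{t,k}-\overline{\vect{d}}^{cat}_{t,k}}$ produces the recursion $\tilde e_{k+1}\leq \lambda(\mathbf{W})\parenthese{\tilde e_k + \norm{\nabla f^{cat}_{t,k+1}-\nabla f^{cat}_{t,k}}}$, which is structurally identical to the one driving \Cref{lmm:decision-distance1}. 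The forcing term is where the smoothness and iterate-movement estimates enter: each block of $\nabla f^{cat}_{t,k+1}-\nabla f^{cat}_{t,k}$ equals $\sum_{s\in\fcal^i_t}\parenthese{\nabla f^i_s(\vect{x}^i_{s,k+1})-\nabla f^i_s(\vect{x}^i_{s,k})}$, which by \Cref{assum:smooth} is bounded by $\beta\norm{\vect{x}^i_{s,k+1}-\vect{x}^i_{s,k}}$, and the second bound of \Cref{clm:claim-1} gives $\norm{\vect{x}^i_{s,k+1}-\vect{x}^i_{s,k}}\leq\frac{4C_d+AD}{k}$. Aggregating over the $n$ blocks bounds the forcing by $\sqrt{n}\,\beta\frac{4C_d+AD}{k}$, a term decaying like $1/k$.

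I would then close the recursion by induction on $k$, exactly as in \Cref{lmm:decision-distance1}. For the inductive step, substituting the hypothesis $\tilde e_k\leq C_g/k$ and the $1/k$ forcing gives $\tilde e_{k+1}\leq \lambda(\mathbf{W})\,C_g\frac{k_0+1}{k_0 k}$ once $\sqrt{n}\,\beta(4C_d+AD)=C_g/k_0$, and the defining inequality $\lambda(\mathbf{W})\leq\parenthese{\frac{k_0}{k_0+1}}^2$ together with $k\geq k_0$ yields $\lambda(\mathbf{W})\frac{k_0+1}{k_0 k}\leq\frac{1}{k+1}$, propagating the bound. This is precisely the source of the second argument $k_0\beta\parenthese{4C_d+AD}$ of the $\max$ in $C_g$. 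The base case for $k\leq k_0$ instead fully unrolls the geometric recursion using the Lipschitz bound $\norm{\nabla f^i}\leq G$ (\Cref{assum:lipschitz}) for the initial gradients and summing the accumulated drift; the geometric series contributes the factor $\frac{1}{1-\lambda(\mathbf{W})}=\frac{1}{\rho}$ and produces the first argument $\lambda(\mathbf{W})\parenthese{G+\frac{\beta D}{1-\lambda(\mathbf{W})}}$. Taking the maximum over the two regimes gives $C_g$, and the stacking inequality converts $\tilde e_k\leq C_g/k$ into the claimed $\max_i\norm{\vect{d}^i_{t,k}-\nabla F_{t,k}}\leq C_g/k$.

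The main obstacle I expect is exactly this final induction. Unlike \Cref{lmm:decision-distance1}, whose one-step perturbation is a pure $\eta_{k-1}$ multiple of a bounded quantity, here the recursion is driven by the inter-subiterate gradient \emph{drift}, so the constant must simultaneously absorb a $1/k$-decaying forcing and dominate the base case; balancing the geometric factor $\lambda(\mathbf{W})$ against the harmonic $1/k$ decay—so that $\lambda(\mathbf{W})\parenthese{\frac{C_g}{k}+\frac{C_g}{k_0 k}}\leq\frac{C_g}{k+1}$—is the delicate computation, and it is what forces both the $\max$ structure and the $\rho^{-1}$ factor in $C_g$. A secondary technical point is the bookkeeping of the delayed surrogate gradients: verifying that the telescoping in the tracking step leaves the correct network average and that the per-block smoothness estimate survives the sum over $\fcal^i_t$.
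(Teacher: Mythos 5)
Your overall machinery---stacking, the $(P-J)$ contraction with operator norm $\lambda(\mathbf{W})$ on the consensus-orthogonal subspace, the smoothness-plus-\Cref{clm:claim-1} bound on the forcing term, and the two-regime induction split at $k_0$ (geometric unrolling with the $\rho^{-1}$ factor for $k\le k_0$, harmonic induction via $\lambda(\mathbf{W})\frac{k_0+1}{k_0 k}\le\frac{1}{k+1}$ for $k\ge k_0$)---is exactly the paper's proof, and your explicit telescoping argument for average preservation makes rigorous a step the paper uses only implicitly (the paper silently substitutes $\bigl(\frac{1}{n}\mathbf{1}_n\mathbf{1}_n^{\top}\otimes I_d\bigr)\hat{\vect{g}}^{cat}_{t,k}=\nabla F^{cat}_{t,k}$ in its first display). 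However, there is one genuine gap: you instantiate the recursion with the \emph{delayed} surrogates, writing $\vect{g}^i_{t,1}=\sum_{s\in\fcal^i_t}\nabla f^i_s(\vect{x}^i_{s,1})$ and forcing blocks $\sum_{s\in\fcal^i_t}\bigl(\nabla f^i_s(\vect{x}^i_{s,k+1})-\nabla f^i_s(\vect{x}^i_{s,k})\bigr)$. With these, your telescoping yields a network average equal to $\frac{1}{n}\sum_{i}\sum_{s\in\fcal^i_t}\nabla f^i_s(\vect{x}^i_{s,k})$, which is \emph{not} $\nabla F_{t,k}=\frac{1}{n}\sum_i\nabla f^i_t(\vect{x}^i_{t,k})$; for instance, if every $\fcal^i_t$ is empty the round-$t$ surrogates carry no information whatsoever about the round-$t$ losses, so no bound of the form $C_g/k$ against $\nabla F_{t,k}$ can hold. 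Relatedly, your per-block smoothness estimate silently drops the sum over $\fcal^i_t$: the correct bound is $\sum_{s\in\fcal^i_t}\beta\norm{\vect{x}^i_{s,k+1}-\vect{x}^i_{s,k}}\le \abs{\fcal^i_t}\,\beta\,\frac{4C_d+AD}{k}$, and $\abs{\fcal^i_t}$ is unbounded under adversarial delays, so the forcing term no longer decays like $1/k$ with a delay-independent constant.

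The paper avoids both problems by proving the lemma for the \emph{non-delayed homologues} $\hat{\vect{d}}^i_{t,k}$, $\hat{\vect{g}}^i_{t,k}$, where each block contains the single gradient $\nabla f^i_t(\vect{x}^i_{t,k})$---even though the lemma statement is typeset with $\vect{d}^i_{t,k}$, which makes your reading understandable. The entire effect of delay is quarantined in \Cref{lmm:ftpl-bound-distributed}, which bounds $\norm{\vect{v}^i_{t,k}-\hat{\vect{v}}^i_{t,k}}$, and it is the hatted bound $\norm{\hat{\vect{d}}^i_{t,k}-\nabla F_{t,k}}\le C_g/k$ that is actually invoked in the proof of \Cref{thm:decentralized}. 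If you replace your delayed sums by the single non-delayed gradients, your argument goes through verbatim and coincides with the paper's: the telescoped average is then exactly $\nabla F_{t,k}$, the forcing is $\beta\frac{4C_d+AD}{k}$ per the second bound of \Cref{clm:claim-1}, the base case uses \Cref{assum:lipschitz} and the diameter bound $\beta D$ on the drift to produce $\sqrt{n}\,\lambda(\mathbf{W})\parenthese{G+\frac{\beta D}{1-\lambda(\mathbf{W})}}$, and your inductive arithmetic matches the paper's \Cref{eq:recurrence_d_F1} up to an index shift. The fix is therefore purely a matter of stating and proving the bound for the hatted quantities.
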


\begin{proof}
\label{sec:proof-gradient-distance}
    We prove the lemma by induction. Following the idea from \cite{xie19b}, we have 
        \begin{align}
        \label{eq:gradient_consensus}
            \norm{\cat{\hat{\vect{d}}}{k} - \nabla \cat{F}{k}}
            &= \norm{
                \parenthese{\mathbf{W} \otimes I_d} \cat{\hat{\vect{g}}}{k} - \parenthese{\frac{1}{n} \mathbf{1}_n \mathbf{1}_n^T} \cat{\hat{\vect{g}}}{k}
                } \nonumber \\
            &= \norm{
                \bracket{\parenthese{\mathbf{W} - \frac{1}{n}\mathbf{1}_n \mathbf{1}_n^T}\otimes I_d}\cat{\hat{\vect{g}}}{k}
            } \nonumber \\
            &= \norm{
                \bracket{\parenthese{\mathbf{W} - \frac{1}{n}\mathbf{1}_n \mathbf{1}_n^T}\otimes I_d}\parenthese{\cat{\hat{\vect{g}}}{k} - \nabla \cat{F}{k}}
            } \nonumber \\
            & \leq \eigen \norm{\cat{\hat{\vect{g}}}{k} - \nabla \cat{F}{k}}
        \end{align}
    where the third equality and the last inequality are verified since $\mathbf{W} \cdot \nabla \cat{F}{k} = \frac{1}{n} \mathbf{1}_n \mathbf{1}_n^T \cdot \nabla\cat{F}{k} = \nabla \cat{F}{k}$ and $\norm{ \mathbf{W} -  \frac{1}{n} \mathbf{1}_n \mathbf{1}_n^T} \leq \eigen$ \cite[Lemma 16]{chocosgd}. 
    Using the gradient tracking step and \Cref{eq:gradient_consensus}, we have
        \begin{align}
        \label{eq:recurrence_d_F}
            \norm{\cat{\hat{\vect{d}}}{k} - \nabla \cat{F}{k}}
            &\leq \eigen \norm{
                    \cat{\hat{\vect{g}}}{k}  - \nabla \cat{F}{k}
                    } \nonumber \\
            &=  \eigen \norm{
                    \cat{\vect{\delta}}{k} + \cat{\hat{\vect{d}}}{k-1}
                    - \nabla \cat{F}{k} + \nabla \cat{F}{k-1} - \nabla \cat{F}{k-1}
                    } \nonumber \\
            & \leq \eigen \parenthese{
                \norm{\cat{\hat{\vect{d}}}{k-1} - \nabla \cat{F}{k-1}}
                + \norm{\cat{\vect{\delta}}{k} - \cat{\overline{\vect{\delta}}}{k}}}
             \nonumber \\
            & \leq \eigen \parenthese{
                 \norm{\cat{\hat{\vect{d}}}{k-1} - \nabla \cat{F}{k-1}}
                + \norm{\cat{\vect{\delta}}{k}}}
        \end{align}
    where the last inequality holds since
    \begin{align}
        \norm{\cat{\vect{\delta}}{k} - \cat{\overline{\vect{\delta}}}{k}}^2  
        = \sum_{i=1}^n \norm{\ilocal{\delta} - \bbar{\delta}{k}}^2  
        = \sum_{i=1}^n \norm{\ilocal{\delta}}^2 - n \norm{\bbar{\delta}{k}}^2
        \leq \sum_{i=1}^n \norm{\ilocal{\delta}}^2
        = \norm{\cat{\vect{\delta}}{k}}^2
    \end{align}
    Moreover, using the smoothness of $f_t$ and \Cref{clm:claim-1}, we have
    \begin{align}
    \label{eq:bound_slack_1}
        \norm{\cat{\vect{\delta}}{k}}^2 = \sum_{i=1}^n \norm{\ilocal{\delta}}^2
        &\leq \sum_{i=1}^n \norm{\nabla f^i_{t,k} - \nabla f^i_{t,k-1}}^2 
            \leq \sum_{i=1}^n \beta^2 \norm{\vect{x}^i_{t,k}- \vect{x}^i_{t,k-1}}^2 \nonumber \\
        &\leq n \beta^2 \parenthese{\frac{4 C_d + AD}{k-1}}^2
    \end{align}
    Thus, we have $\norm{\cat{\vect{\delta}}{k}} \leq \sqrt{n} \beta \frac{4C_d + AD}{k-1}$. For the base case  $k=1$, we have
    \begin{align}
    \label{eq:bound_d_F_init}
        \norm{\cat{\hat{\vect{d}}}{1} - \nabla F^{cat}_{t,1}}^2
        = \norm{
                \bracket{\parenthese{\mathbf{W} - \frac{1}{n}\mathbf{1}_n \mathbf{1}_n^T}\otimes I_d} \cat{\hat{\vect{g}}}{1}
            }^2
        \leq \eigenplus{2} \norm{\cat{\hat{\vect{g}}}{1}}^2 
        \leq \eigenplus{2}\sum_{i=1}^n \norm{\nabla f^i_{t,1}}^2 
        \leq n \eigenplus{2}G^2
    \end{align}
    We have then $\norm{\cat{\hat{\vect{d}}}{1} - \nabla F^{cat}_{t,1}} \leq \sqrt{n} \eigen G$.
    For $k \in (1, k_0]$, by \Cref{eq:recurrence_d_F}
    \begin{align*}
        \norm{\cat{\hat{\vect{d}}}{k} - \nabla \cat{F}{k}}
        & \leq \eigen \parenthese{
                 \norm{\cat{\hat{\vect{d}}}{k-1}  - \nabla \cat{F}{k-1}}
                + \norm{\cat{\vect{\delta}}{k}}} \nonumber \\
        & \leq \eigen \parenthese{
            \norm{\cat{\hat{\vect{d}}}{k-1}  - \nabla \cat{F}{k-1}}
           + \sqrt{n \beta^2 D^2}} \nonumber \\ 
        & \leq \eigenplus{k-1}
                \norm{\cat{\hat{\vect{d}}}{1}  - \nabla F^{cat}_{t,1}}
                + \sum_{\tau=1}^k  \eigenplus{\tau} \sqrt{n} \beta D \\
        & \leq \eigenplus{k} \sqrt{n} G 
                + \frac{\eigen}{1 - \eigen}\sqrt{n} \beta D \\
        & \leq \eigen \sqrt{n} \parenthese{G + \frac{\beta D}{1-\eigen}}
    \end{align*}
    where in the second inequality, we use smoothness of $f_t$ and bound the distance $\norm{\vect{x}^i_{t,k} - \vect{x}^i_{t,k-1} }$ by the diameters $D$. The third inequality resulted from applying the previous one recursively for $k \in \curlybracket{1, \ldots, k-1}$. Using Taylor's expansion of $\eigen$ and the bound in \Cref{eq:bound_d_F_init}, we obtain the fourth inequality.
    Let $C_{g} = \sqrt{n}\max \curlybracket{\eigen \parenthese{G + \frac{\beta D}{1-\eigen}}, k_0\beta \parenthese{4C_d + AD}}$. We claim that $\norm{\cat{\hat{\vect{d}}}{k-1}  - \nabla \cat{F}{k-1}} \leq \frac{C_g}{k-1}$ for all $k-1 \geq k_0$. We prove the claim for round $k$. Using \Cref{eq:recurrence_d_F}, we have
    \begin{align}
        \label{eq:recurrence_d_F1}
            \norm{\cat{\hat{\vect{d}}}{k}  - \nabla \cat{F}{k}}
            & \leq \eigen \parenthese{
                 \norm{\cat{\hat{\vect{d}}}{k-1}  - \nabla \cat{F}{k-1}}
                + \norm{\cat{\vect{\delta}}{k}}} \nonumber \\
            & \leq \eigen \parenthese{
                \frac{C_g}{k-1} + \sqrt{n}\beta  \frac{4C_d + AD}{k-1}
            } \nonumber \\
            & \leq \eigen \parenthese{
                \frac{C_g + \sqrt{n} \beta  \parenthese{4C_d + AD}}{k-1}
            } \nonumber \\
            & \leq \eigen \parenthese{
                C_g \frac{k_0 + 1}{k_0 (k-1)}
            } \nonumber \\
            & \leq \frac{C_g}{k}
    \end{align}
    where the second inequality followed by the induction hypothesis and \Cref{eq:bound_slack_1}. The fourth inequality is a consequence of the definition of $C_g$ and the final inequality resulted from the fact that $\eigen \frac{k_0 + 1}{k_0 (k-1)} \leq \frac{1}{k}$ as $k > k_0$. We conclude the proof by noting that
    \begin{align*}
        \max_{i \in [1,n]} \norm{\hat{\vect{d}}^{i}_{t,k} - \nabla F_{t,k}}
        \leq \sqrt{\sum_{i=1}^n \norm{\hat{\vect{d}}^{i}_{t,k} - \nabla F_{t,k}}^2}
        = \norm{\cat{\hat{\vect{d}}}{k} - \nabla \cat{F}{k}}
        \leq \frac{C_g}{k}
    \end{align*}
\end{proof}
\setcounter{lemma}{3}
\begin{lemma}
    \label{lmm:ftpl-bound-distributed1}
    For all $t \in [T]$, $k \in [K]$ and $i \in [n]$. Let $\vect{v}^i_{t,k}$ be the output of the oracle $\ocal^i_k$ with delayed feedback and $\hat{\vect{v}}^i_{t,k}$ its homologous in non-delay case. Suppose that \Cref{assum:boundedness,assum:lipschitz} hold true. Choosing \ftpl as the oracle, we have:
        \begin{align}
            \norm{\vect{v}^i_{t,k} - \hat{\vect{v}}^i_{t,k}}
            \leq 2 \zeta \sqrt{n} DG \bracket{\frac{\lambda\parenthese{\mathbf{W}}}{\rho} +1 } \frac{1}{n} \sum_{i=1}^n \sum_{s \leq t}  \indi_{\curlybracket{s + d^i_s > t}}
        \end{align}
    where $\zeta$ is the learning rate, $\lambda(\mathbf{W})$ is the second-largest eigenvalue of $\mathbf{W}$ and $\rho = 1 - \lambda (\mathbf{W})$ is the spectral gap of matrix $\mathbf{W}$.
\end{lemma}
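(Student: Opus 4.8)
The plan is to reuse the two-step template of the centralized \Cref{lmm:ftpl-cost-delay}, but to interpose the gossip and gradient-tracking recursions between the raw delayed gradients and the surrogate $\vect{d}^i_{\ell,k}$ that the oracle $\ocal^i_k$ actually consumes. First I would apply the perturbation-stability of FTPL exactly as in \Cref{lmm:ftpl-cost-delay}: the two runs of $\ocal^i_k$ driven by the delayed stream $\curlybracket{\vect{d}^i_{\ell,k}}_{\ell<t}$ and by its non-delayed homologue $\curlybracket{\hat{\vect{d}}^i_{\ell,k}}_{\ell<t}$ produce arg-mins that differ by at most the shift of the accumulated feedback,
\begin{align*}
    \norm{\vect{v}^i_{t,k}-\hat{\vect{v}}^i_{t,k}}\leq \zeta D\,\norm{\sum_{\ell=1}^{t-1}\parenthese{\vect{d}^i_{\ell,k}-\hat{\vect{d}}^i_{\ell,k}}}.
\end{align*}
It then suffices to bound this accumulated gap by $2\sqrt{n}G\parenthese{\frac{\eigen}{\rho}+1}\avgn\sum_{s\le t}\indi_{\curlybracket{s+d^i_s>t}}$.

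Passing to the concatenated space, write $\Delta\vect{g}^{cat}_{\ell,k}$ and $\Delta\vect{d}^{cat}_{\ell,k}$ for the stacked delayed-minus-nondelayed surrogates and $\Delta\vect{\delta}^{cat}_{\ell,k}$ for the stacked gap of tracking increments. The two update lines of \Cref{algo:de2mfw} give $\Delta\vect{d}^{cat}_{\ell,k}=(\mathbf{W}\otimes I_d)\Delta\vect{g}^{cat}_{\ell,k}$ and $\Delta\vect{g}^{cat}_{\ell,k+1}=(\mathbf{W}\otimes I_d)\Delta\vect{g}^{cat}_{\ell,k}+\Delta\vect{\delta}^{cat}_{\ell,k}$, so unrolling in $k$ and summing over $\ell<t$ yields
\begin{align*}
    \sum_{\ell=1}^{t-1}\Delta\vect{d}^{cat}_{\ell,k}=\parenthese{\mathbf{W}^{k}\otimes I_d}\sum_{\ell=1}^{t-1}\Delta\vect{g}^{cat}_{\ell,1}+\sum_{j=1}^{k-1}\parenthese{\mathbf{W}^{k-j}\otimes I_d}\sum_{\ell=1}^{t-1}\Delta\vect{\delta}^{cat}_{\ell,j}.
\end{align*}
As in the centralized telescoping over time, each agent's accumulated initial gap $\sum_{\ell<t}\Delta\vect{g}^i_{\ell,1}$ and each accumulated increment gap $\sum_{\ell<t}\Delta\vect{\delta}^i_{\ell,j}$ are supported precisely on the not-yet-arrived rounds $\curlybracket{s<t:s+d^i_s>t}$.

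I would then decompose every mixing power as $\mathbf{W}^{m}=\tfrac1n\mathbf{1}_n\mathbf{1}_n^{\top}+\parenthese{\mathbf{W}^{m}-\tfrac1n\mathbf{1}_n\mathbf{1}_n^{\top}}$, using that for the symmetric doubly-stochastic $\mathbf{W}$ one has $\norm{\mathbf{W}^{m}-\tfrac1n\mathbf{1}_n\mathbf{1}_n^{\top}}\le\eigenplus{m}$. The \emph{consensus} part is where gradient tracking pays off: since $\tfrac1n\mathbf{1}_n\mathbf{1}_n^{\top}\mathbf{W}^{m}=\tfrac1n\mathbf{1}_n\mathbf{1}_n^{\top}$, the initial term and all increment terms collapse by a telescoping identity to the single level-$k$ quantity $-\avgn\sum_{s<t,\,s+d^i_s>t}\nabla f^i_{s,k}$, with \emph{no} smoothness constant surviving, and its norm is at most $G\,\avgn\sum_{s<t}\indi_{\curlybracket{s+d^i_s>t}}$ --- this is the ``$+1$'' contribution. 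The \emph{deviation} part is controlled by the geometric series $\sum_{m\ge1}\eigenplus{m}=\tfrac{\eigen}{\rho}$, each summand multiplying an accumulated gap whose Euclidean norm I bound purely by \Cref{assum:lipschitz} ($\norm{\nabla f^i_{s,k}}\le G$ and $\norm{\nabla f^i_{s,k+1}-\nabla f^i_{s,k}}\le 2G$) times the per-agent delay count; aggregating the $n$ per-agent counts through the $\ell_2$ concatenation is what contributes the factor $\sqrt{n}$ together with $\tfrac{\eigen}{\rho}$. Summing the consensus and deviation estimates and feeding them back through the reduction gives the claim.

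The delicate point is this last separation. One must verify that the consensus projection of the unrolled sum telescopes \emph{exactly} into one missing-gradient term, so that the final constant stays free of $\beta$ and the lemma needs only \Cref{assum:boundedness,assum:lipschitz}; and simultaneously that the orthogonal component decouples across the powers $\mathbf{W}^{k},\mathbf{W}^{k-1},\dots$ and sums to the spectral-gap factor $\tfrac{\eigen}{\rho}$ rather than accumulating linearly in $k$ or in $t$. The two facts that make the decoupling work are the invariance $P\mathbf{W}=P\mathbf{W}P$ for the centering projector $P=I_n-\tfrac1n\mathbf{1}_n\mathbf{1}_n^{\top}$ and the contraction $\norm{\mathbf{W}^{m}-\tfrac1n\mathbf{1}_n\mathbf{1}_n^{\top}}\le\eigenplus{m}$.
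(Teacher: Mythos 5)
Your proposal follows essentially the same route as the paper's proof: the FTPL perturbation-stability reduction to the accumulated feedback gap $\zeta D\,\lVert\sum_{\ell\le t}(\vect{d}^{cat}_{\ell,k}-\hat{\vect{d}}^{cat}_{\ell,k})\rVert$, unrolling the gossip/gradient-tracking recursion in the concatenated space, and splitting each power $\mathbf{W}^{m}$ into its consensus part (which telescopes to the single level-$k$ missing-gradient average, yielding the ``$+1$'' with no $\beta$) and its centered part contracted by $\lambda^{m}(\mathbf{W})$, summed geometrically to $\lambda(\mathbf{W})/\rho$, with the $\sqrt{n}$ arising from the $\ell_2$ concatenation of per-agent delay indicators exactly as in the paper. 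The argument, including the observation that only \Cref{assum:boundedness,assum:lipschitz} are needed, is correct and matches the paper's proof step for step.
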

\begin{proof}
\label{sec:proof-ftpl-bound-distributed}
We call $\vect{u}^i_{t} = \zeta \sum_{\ell=1}^t \vect{d}^i_{\ell}$ the accumulated delayed feedback of the oracle of agent $i$ and $\hat{\vect{u}}^i_{t} = \zeta \sum_{\ell=1}^t \hat{\vect{d}}^i_{\ell}$ its homologous in non-delay setting. Using the same computation in the proof of \Cref{lmm:ftpl-cost-delay}, we have 
\begin{align}
    \label{eq:bound-v-vhat-distributed}
    \norm{\ilocal{v} - \ilocal{\hat{v}}}
    \leq D \norm{\ilocal{u} - \ilocal{\hat{u}}} 
    \leq D \norm{\cat{\vect{u}}{k} - \cat{\hat{\vect{u}}}{k}} 
    \leq \zeta D \norm{\sum_{\ell=1}^t \bracket{
            \catplus{\vect{d}}{\ell}{k} - \catplus{\hat{\vect{d}}}{\ell}{k}}
        }
\end{align}
For $t \in [T]$, the concatenation of the gradient local average  is expanded as follow
\begin{align}
    \label{eq:wa-surrogate}
    &\vect{d}^{cat}_{t,k} 
    = \parenthese{\mathbf{W} \otimes I_d} \parenthese{\nabla f^{cat}_{t,k} - \nabla f^{cat}_{t,k-1} +\vect{d}^{cat}_{t,k-1}} \nonumber \\
    =& \parenthese{\mathbf{W} \otimes I_d} \parenthese{\nabla f^{cat}_{t,k} - \nabla f^{cat}_{t,k-1}} 
        + \parenthese{\mathbf{W} \otimes I_d}^2 \parenthese{\nabla f^{cat}_{t,k-1} - \nabla f^{cat}_{t,k-2} +\vect{d}^{cat}_{t,k-2}} \nonumber \\
    =& \sum_{\tau = 1}^{k-1} \bracket{\parenthese{\mathbf{W} \otimes I_d}^{k-\tau} \parenthese{\nabla f^{cat}_{t,\tau+1} - \nabla f^{cat}_{t,\tau}}} 
        + \parenthese{\mathbf{W} \otimes I_d}^{k} \nabla f^{cat}_{t,1}  \\
    =& \sum_{\tau = 1}^{k-1} \bracket{\parenthese{\mathbf{W} \otimes I_d}^{k-\tau} 
    \parenthese{\nabla f^{cat}_{t,\tau+1} - \nabla f^{cat}_{t,\tau}}} 
        + \parenthese{\mathbf{W} \otimes I_d}^{k} \nabla f^{cat}_{t,1} 
        - \sum_{\tau=1}^{k-1} \bracket{\nabla F^{cat}_{t,\tau+1} - \nabla F^{cat}_{t,\tau}} - \nabla F^{cat}_{t,1} + \nabla F^{cat}_{t,k} \nonumber \\
    =& \sum_{\tau = 1}^{k-1} \bracket{\bracket{\parenthese{\mathbf{W}^{k-\tau} - \frac{1}{n}\mathbf{1}_n \mathbf{1}_n^T}\otimes I_d} \parenthese{\nabla f^{cat}_{t,\tau+1} - \nabla f^{cat}_{t,\tau}}} 
        + \bracket{\parenthese{\mathbf{W}^{k} - \frac{1}{n}\mathbf{1}_n \mathbf{1}_n^T}\otimes I_d}
            \nabla f^{cat}_{t,1}
                + \parenthese{\frac{1}{n}\mathbf{1}_n \mathbf{1}_n^T \otimes I_d}\nabla f^{cat}_{t,k} \nonumber
\end{align}
Recall the definition of $\nabla f^{cat}_{t,k} = \bracket{\sum_{s \in \mathcal{F}^1_t}\nabla f^{1 \top}_{s,k}, \dots, \sum_{s \in \mathcal{F}^n_t}\nabla f^{n \top}_{s,k}}^{\top}$ and $\nabla \hat{f}^{cat}_{t,k} = \bracket{\nabla f^{1 \top}_{t,k}, \dots, \nabla f^{n \top}_{t,k}}^{\top}$. We have
\begin{align}
    \sum_{\ell = 1}^t \bracket{
        \nabla \catplus{f}{\ell}{k} - \nabla \catplus{\hat{f}}{\ell}{k}
        }
    = \sum_{s < t} \bracket{
        \nabla f^{1\top}_{s ,k}\indi_{\curlybracket{s + d^1_s > t}},
        \ldots ,
        \nabla f^{n\top}_{s ,k}\indi_{\curlybracket{s + d^n_s > t}},
    }^{\top}
\end{align}
Using \Cref{eq:wa-surrogate}, we bound the RHS of \Cref{eq:bound-v-vhat-distributed} as follow
\begin{align}
    &\norm{
        \sum_{\ell=1}^t \bracket{
            \catplus{\vect{d}}{\ell}{k} - \catplus{\hat{\vect{d}}}{\ell}{k}
        }
    }
    \leq \sum_{\tau = 1}^{k-1} \norm{\parenthese{\mathbf{W}^{k-\tau} - \frac{1}{n}\mathbf{1}_n \mathbf{1}_n^T}\otimes I_d} 
        \norm{\sum_{\ell=1}^t \bracket{\nabla f^{cat}_{\ell,\tau+1} - \catplus{\hat{f}}{\ell}{\tau+1} + \catplus{\hat{f}}{\ell}{\tau} - \nabla f^{cat}_{t,\tau}}} \nonumber \\
        & \quad + \norm{\parenthese{\mathbf{W}^{k} - \frac{1}{n}\mathbf{1}_n \mathbf{1}_n^T}\otimes I_d}
            \norm{\sum_{\ell=1}^t \bracket{\nabla f^{cat}_{\ell,1} - \catplus{\hat{f}}{\ell}{1}}} 
            + \parenthese{\frac{1}{n}\mathbf{1}_n \mathbf{1}_n^T \otimes I_d}
                \norm{ \sum_{\ell=1}^t \bracket{\nabla f^{cat}_{\ell,k} - \catplus{\hat{f}}{\ell}{k}}} \nonumber \\
    & \leq 2 \sum_{\tau=1}^k \eigenplus{k-\tau} \norm{
            \sum_{s \leq t} \bracket{
            \nabla f^{1\top}_{s ,\tau}\indi_{\curlybracket{s + d^1_s > t}},
            \ldots ,
            \nabla f^{n\top}_{s ,\tau}\indi_{\curlybracket{s + d^n_s > t}}
            }^{\top}} \nonumber \\
        & \quad + \eigenplus{k} \norm{
            \sum_{s \leq t} \bracket{
                \nabla f^{1\top}_{s ,k}\indi_{\curlybracket{s + d^1_s > t}},
                \ldots ,
                \nabla f^{n\top}_{s ,k}\indi_{\curlybracket{s + d^n_s > t}}
                }^{\top}
            } 
        + \norm{
            \sum_{s \leq t} \bracket{
                \nabla f^{1\top}_{s ,1}\indi_{\curlybracket{s + d^1_s > t}},
                \ldots ,
                \nabla f^{n\top}_{s ,1}\indi_{\curlybracket{s + d^n_s > t}}
                }^{\top}
            } \nonumber \\
    & \leq 2 \sum_{\tau=1}^k \eigenplus{k-\tau}
            \sum_{s \leq t} \norm{\bracket{
            \nabla f^{1\top}_{s ,\tau}\indi_{\curlybracket{s + d^1_s > t}},
            \ldots ,
            \nabla f^{n\top}_{s ,\tau}\indi_{\curlybracket{s + d^n_s > t}}
            }^{\top}} \nonumber \\
        & \quad + \eigenplus{k} 
            \sum_{s \leq t} \norm{ \bracket{
                \nabla f^{1\top}_{s ,k}\indi_{\curlybracket{s + d^1_s > t}},
                \ldots ,
                \nabla f^{n\top}_{s ,k}\indi_{\curlybracket{s + d^n_s > t}}
                }^{\top}
            } 
        + \sum_{s \leq t} \norm{\bracket{
                \nabla f^{1\top}_{s ,1}\indi_{\curlybracket{s + d^1_s > t}},
                \ldots ,
                \nabla f^{n\top}_{s ,1}\indi_{\curlybracket{s + d^n_s > t}}
                }^{\top}
        } \nonumber \\
    & \leq 2 \sum_{\tau=1}^k \eigenplus{k-\tau} \sum_{s \leq t} \sqrt{
                \sum_{i=1}^n \norm{\nabla f^i_{s,\tau} \indi_{\curlybracket{s + d^i_s > t}}}^2 
                }  
        + \eigenplus{k} \sum_{s \leq t} \sqrt{
                \sum_{i=1}^n \norm{\nabla f^i_{s,k} \indi_{\curlybracket{s + d^i_s > t}}}^2 
        } 
        + \sum_{s \leq t} \sqrt{
            \sum_{i=1}^n \norm{\nabla f^i_{s,1} \indi_{\curlybracket{s + d^i_s > t}}}^2 
            } \nonumber \\
    & \leq 2G\sum_{\tau=1}^k \eigenplus{k-\tau} \sum_{s \leq t} \sqrt{\sum_{i=1}^n \indi_{\curlybracket{s + d^i_s > t}}} 
        + G \bracket{\eigenplus{k} + 1} \sum_{s \leq t} \sqrt{\sum_{i=1}^n \indi_{\curlybracket{s + d^i_s > t}}} \nonumber \\
    & \leq 2G \bracket{\frac{\eigen}{1-\eigen} + 1} \sum_{s \leq t} \sqrt{\sum_{i=1}^n \indi_{\curlybracket{s + d^i_s > t}}} \nonumber \\
    & \leq 2 \sqrt{n} G \bracket{\frac{\eigen}{1-\eigen} + 1} \sum_{s \leq t} \avgn \sum_{i=1}^n \indi_{\curlybracket{s + d^i_s > t}} \nonumber
\end{align}
where the fifth inequality resulted from Lipschitzness of $f_t$ and the value of indicator function. 
\end{proof}
%

\subsection{Proofs of Theorem \ref{thm:decentralized}}
\setcounter{theorem}{1}
\begin{theorem}
    \label{thm:decentralized1}
    Given a constraint set $\mathcal{K}$. Let $A = \max\curlybracket{3,\frac{3G}{2\beta D}, \frac{2\beta C_d + C_g}{\beta D}}$, $\eta_k = \min \curlybracket{1,\frac{A}{k}}$, and $K = \sqrt{T}$. Suppose that \Cref{assum:boundedness,assum:lipschitz,assum:smooth} hold true. If we choose FTPL as the underlying oracle and set $\zeta = \frac{1}{G\sqrt{B}}$, the regret of \Cref{algo:de2mfw} is 
    \begin{align}
        \sum_{t=1}^T & \bracket{F_t (\vect{x}^i_t) - F_t(\opt{x})}
        \leq \parenthese{GC_d + 2\beta A D^2}\sqrt{T} + 3(A+1) \bracket{
            2 \sqrt{n} DG \parenthese{\frac{\eigen}{\rho} + 1} \sqrt{B}
            + \rcal_{T, \ocal} } \nonumber \\
    \end{align}
    where $B = \avgn B_i$ such that $B_i$ is the sum of delay values of agent $i$. $C_d = k_0 \sqrt{n} D$ and $C_g = \sqrt{n} \max \curlybracket{\eigen \parenthese{G + \frac{\beta D}{\rho}}, k_0\beta \parenthese{4C_d + AD}}$ are defined in \Cref{lmm:decision-distance,lmm:gradient-distance} and $\rcal_{T,\ocal}$ is the regret of FTPL with the current choice of $\zeta$
\end{theorem}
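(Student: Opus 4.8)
The plan is to reduce the per-agent regret to the regret of the \emph{global average} iterate $\overline{\vect{x}}_{t,k} = \avgn \vect{x}^i_{t,k}$ and then to mimic the centralized argument of \Cref{thm:centralized1}, paying along the way for the decentralization errors. First I would use the consensus bound of \Cref{clm:claim-1} (itself a consequence of \Cref{lmm:decision-distance1}): since $\norm{\overline{\vect{x}}_{t,K+1} - \vect{x}^i_{t,K+1}} \leq \tfrac{2C_d}{K+1}$ and $F_t$ is $G$-Lipschitz (being an average of $G$-Lipschitz functions), we get $F_t(\vect{x}^i_t) \leq F_t(\overline{\vect{x}}_{t,K+1}) + \tfrac{2GC_d}{K+1}$ for every agent $i$; summing over $t$ with $K=\sqrt{T}$ yields the $GC_d\sqrt{T}$ term (up to a constant) and leaves us to bound $\sum_t \bracket{F_t(\overline{\vect{x}}_{t,K+1}) - F_t(\opt{x})}$.

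Next I would establish the decentralized analogue of \Cref{lmm:recurrence-1} for $\bar h_{t,k} := F_t(\overline{\vect{x}}_{t,k+1}) - F_t(\opt{x})$. Because $\mathbf{W}$ is doubly stochastic, the averaged iterate obeys a clean Frank--Wolfe step $\overline{\vect{x}}_{t,k+1} = (1-\eta_k)\overline{\vect{x}}_{t,k} + \eta_k \overline{\vect{v}}_{t,k}$ with $\overline{\vect{v}}_{t,k} = \avgn \vect{v}^i_{t,k}$; applying $\beta$-smoothness of $F_t$ and convexity exactly as in \Cref{eq:smooth} gives $\bar h_{t,k} \leq (1-\eta_k)\bar h_{t,k-1} + \eta_k \scalarproduct{\nabla F_t(\overline{\vect{x}}_{t,k}), \overline{\vect{v}}_{t,k} - \opt{x}} + \eta_k^2 \tfrac{\beta D^2}{2}$. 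I would then expose the oracle feedback by splitting off the delay, $\overline{\vect{v}}_{t,k} - \opt{x} = (\overline{\vect{v}}_{t,k} - \overline{\hat{\vect{v}}}_{t,k}) + (\overline{\hat{\vect{v}}}_{t,k} - \opt{x})$. For the first part, $\norm{\nabla F_t(\overline{\vect{x}}_{t,k})}\leq G$ and \Cref{lmm:ftpl-bound-distributed1} give $\scalarproduct{\nabla F_t(\overline{\vect{x}}_{t,k}), \overline{\vect{v}}_{t,k} - \overline{\hat{\vect{v}}}_{t,k}} \leq \tfrac{G}{n}\sum_i \norm{\vect{v}^i_{t,k} - \hat{\vect{v}}^i_{t,k}}$. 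For the second part I would replace $\nabla F_t(\overline{\vect{x}}_{t,k})$ by the clean tracking surrogate $\hat{\vect{d}}^i_{t,k}$: the consensus gap $\norm{\vect{x}^i_{t,k} - \overline{\vect{x}}_{t,k}}\leq \tfrac{2C_d}{k}$ (\Cref{clm:claim-1}) with smoothness yields $\norm{\nabla F_t(\overline{\vect{x}}_{t,k}) - \nabla F_{t,k}}\leq \tfrac{2\beta C_d}{k}$, and the tracking bound $\norm{\hat{\vect{d}}^i_{t,k} - \nabla F_{t,k}}\leq \tfrac{C_g}{k}$ (\Cref{lmm:gradient-distance}) then gives $\norm{\nabla F_t(\overline{\vect{x}}_{t,k}) - \hat{\vect{d}}^i_{t,k}}\leq \tfrac{2\beta C_d + C_g}{k}$, so by Cauchy--Schwarz and $\norm{\hat{\vect{v}}^i_{t,k}-\opt{x}}\leq D$ the replacement costs only an additive $\tfrac{(2\beta C_d + C_g)D}{k}$. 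Collecting terms, the recurrence becomes $\bar h_{t,k} \leq (1-\eta_k)\bar h_{t,k-1} + \eta_k S_{t,k} + \eta_k \tfrac{(2\beta C_d + C_g)D}{k} + \eta_k^2 \tfrac{\beta D^2}{2}$, where $S_{t,k} := \avgn \scalarproduct{\hat{\vect{d}}^i_{t,k}, \hat{\vect{v}}^i_{t,k} - \opt{x}} + \tfrac{G}{n}\sum_i \norm{\vect{v}^i_{t,k} - \hat{\vect{v}}^i_{t,k}}$.

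With $\eta_k \leq \tfrac{A}{k}$ both error terms are $\ocal(1/k^2)$, and the component $A\geq \tfrac{2\beta C_d + C_g}{\beta D}$ of the stated constant makes $\tfrac{(2\beta C_d + C_g)D}{k}\leq \tfrac{A\beta D^2}{k}$, so they are dominated by the $\tfrac{2\beta A D^2}{k}$ bound; together with $A\geq\max\curlybracket{3, \tfrac{3G}{2\beta D}}$ securing the base case $\bar h_{t,1}\leq GD$, the induction of \Cref{lmm:recurrence-1} carries over and gives $F_t(\overline{\vect{x}}_{t,K+1}) - F_t(\opt{x}) \leq \tfrac{2\beta A D^2}{K} + \sum_{k=1}^K \eta_k \bracket{\prod_{\ell=k+1}^K (1-\eta_\ell)} S_{t,k}$. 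Summing over $t$ and invoking $\sum_k \eta_k \prod_{\ell}(1-\eta_\ell)\leq 3(A+1)$ (\Cref{eq:bound-prod-k2}), it remains to bound $\sum_t S_{t,k}$. The clean part splits per agent: for each $i$, $\sum_t \scalarproduct{\hat{\vect{d}}^i_{t,k}, \hat{\vect{v}}^i_{t,k} - \opt{x}}\leq \rcal_{T,\ocal}$ because $\hat{\vect{v}}^i_{t,k}$ is the \ftpl output on the loss sequence $\scalarproduct{\hat{\vect{d}}^i_{t,k}, \cdot}$, and averaging preserves this. The delayed part is handled by \Cref{lmm:ftpl-bound-distributed1} together with the counting identity $\sum_t \sum_{s\leq t}\indi_{\curlybracket{s + d^i_s > t}}\leq B_i$ (as in \Cref{eq:scalarv1}); averaging over $i$ turns $\tfrac{1}{n}\sum_i B_i$ into $B$ and produces $2\zeta\sqrt{n}DG^2\parenthese{\tfrac{\lambda(\mathbf{W})}{\rho}+1}B$. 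Substituting $\zeta = \tfrac{1}{G\sqrt{B}}$ and $K=\sqrt{T}$ and adding back the $\ocal(GC_d\sqrt{T})$ reduction term from the first step assembles the claimed bound.

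The main obstacle is the bookkeeping in the second step. Unlike the centralized proof, the vector the oracle $\ocal^i_k$ actually optimizes, $\hat{\vect{d}}^i_{t,k}$, is three approximations removed from the gradient $\nabla F_t(\overline{\vect{x}}_{t,k})$ produced by smoothness, and one must insert the intermediate quantities $\nabla F_{t,k}$ and $\overline{\hat{\vect{v}}}_{t,k}$ in exactly the right order so that every replacement error is $\ocal(1/k)$ (hence absorbable by $A$), the surviving inner product is precisely what the oracle regret controls, and the residual delay discrepancy appears as $\norm{\vect{v}^i_{t,k} - \hat{\vect{v}}^i_{t,k}}$ in the form supplied by \Cref{lmm:ftpl-bound-distributed1}. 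Verifying that the single constant $A = \max\curlybracket{3, \tfrac{3G}{2\beta D}, \tfrac{2\beta C_d + C_g}{\beta D}}$ simultaneously closes the base case and dominates all accumulated $1/k^2$ terms throughout the induction is the delicate point of the argument.
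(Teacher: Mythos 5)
Your proposal is correct and follows essentially the same route as the paper's own proof: the smoothness recurrence on the averaged iterate $\overline{\vect{x}}_{t,k}$, the insertion of $\nabla F_{t,k}$ and then $\hat{\vect{d}}^i_{t,k}$ at cost $\tfrac{(2\beta C_d + C_g)D}{k}$ (absorbed via the third component of $A$), the split $\vect{v}^i_{t,k}-\hat{\vect{v}}^i_{t,k}$ handled by \Cref{lmm:ftpl-bound-distributed1} with the counting bound $\sum_t\sum_{s\le t}\indi_{\curlybracket{s+d^i_s>t}}\le B_i$, the application of \Cref{lmm:recurrence-1} with the $3(A+1)$ product bound, and the final consensus reduction $F_t(\vect{x}^i_t)-F_t(\overline{\vect{x}}_t)\le GC_d/K$ with $\zeta=\tfrac{1}{G\sqrt{B}}$, $K=\sqrt{T}$. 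The only difference is cosmetic ordering (you perform the per-agent-to-average reduction first, the paper does it last), so the two arguments coincide.
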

\begin{proof}
By smoothness of $F_t$, we have
\begin{align}
\label{eq:smoothness}
    &F_t (\bbar{x}{k+1}) - F_t (\opt{x})
    = F_t \parenthese{\bbar{x}{k} + \eta_k \parenthese{\avg{v} - \bbar{x}{k}}} - F_t (\opt{x}) \nonumber \\
    \leq& F_t(\bbar{x}{k}) - F_t(\opt{x})
        + \eta_k \scalarproduct{\nabla F_t (\bbar{x}{k}), \avg{v} - \bbar{x}{k}} 
        + \eta_k^2 \frac{\beta}{2} \norm{\avg{v} - \bbar{x}{k}}^2 \nonumber \\
    \leq& F_t(\bbar{x}{k}) - F_t(\opt{x})
        + \frac{\eta_k}{n}\scalarproduct{\nabla F_t (\bbar{x}{k}), \ilocal{v} - \bbar{x}{k}} 
        + \eta_k^2 \frac{\beta D^2}{2} \\
    \leq& F_t(\bbar{x}{k}) - F_t(\opt{x})
        + \frac{\eta_k}{n} \sum_{i=1}^{n} \bracket{
            \scalarproduct{\nabla F_t (\bbar{x}{k}), \ilocal{v} - \opt{x}}
            + \scalarproduct{\nabla F_t (\bbar{x}{k}), \opt{x} - \bbar{x}{k}}
        }
        + \eta_k^2 \frac{\beta D^2}{2} \nonumber \\
    \leq& \parenthese{1 - \eta_k} \bracket{F_t (\bbar{x}{k}) - F_t (\opt{x})}
        + \frac{\eta_k}{n} \sum_{i=1}^{n} \bracket{
            \scalarproduct{\nabla F_t (\bbar{x}{k}), \ilocal{v} - \opt{x}}
        }
        + \eta_k^2 \frac{\beta D^2}{2} \nonumber \\
    \leq& \parenthese{1 - \eta_k} \bracket{F_t (\bbar{x}{k}) - F_t (\opt{x})}
        + \frac{\eta_k}{n} \sum_{i=1}^{n} \bracket{
            \scalarproduct{\nabla F_t (\bbar{x}{k}), \ilocal{v} - \ilocal{\hat{v}}}
            + \scalarproduct{\nabla F_t (\bbar{x}{k}), \ilocal{\hat{v}} - \opt{x}}
        }
        + \eta_k^2 \frac{\beta D^2}{2} \nonumber \\
    \leq& \parenthese{1 - \eta_k} \bracket{F_t (\bbar{x}{k}) - F_t (\opt{x})}
        + \frac{\eta_k}{n} \sum_{i=1}^{n} \bracket{
            \scalarproduct{\nabla F_t (\bbar{x}{k}), \ilocal{v} - \ilocal{\hat{v}}}
            + \scalarproduct{\ilocal{\hat{d}}, \ilocal{\hat{v}} - \opt{x}}
        } 
        + \eta_k \frac{2\beta C_d + C_g}{k} D 
        + \eta_k^2 \frac{\beta D^2}{2} \nonumber 
\end{align}
where we used convexity of $F_t$ in the fourth inequality and the last inequality follows by observing that 
\begin{align*}
    &\scalarproduct{\nabla F_t (\bbar{x}{k}), \ilocal{\hat{v}} - \opt{x}}
    = \scalarproduct{\nabla F_t (\bbar{x}{k}) - \nabla F_{t,k}, \ilocal{\hat{v}} - \opt{x}}
        + \scalarproduct{\nabla F_{t,k}, \ilocal{\hat{v}} - \opt{x}} \\
    &= \scalarproduct{\nabla F_t (\bbar{x}{k}) - \nabla F_{t,k}, \ilocal{\hat{v}} - \opt{x}}
        + \scalarproduct{\nabla F_{t,k} - \ilocal{\hat{d}}, \ilocal{\hat{v}} - \opt{x}} 
        + \scalarproduct{\ilocal{\hat{d}}, \ilocal{\hat{v}} - \opt{x}} \\
    &\leq \parenthese{
        \beta \norm{\bbar{\vect{x}}{k} - \ilocal{\vect{x}}}
        + \norm{\nabla F_{t,k} - \ilocal{\hat{d}}}
    } D + \scalarproduct{\ilocal{\hat{d}}, \ilocal{\hat{v}} - \opt{x}} \\
    & \leq \frac{2 \beta C_d + C_g}{k}D + \scalarproduct{\ilocal{\hat{d}}, \ilocal{\hat{v}} - \opt{x}} 
\end{align*}
Let $A = \max \curlybracket{3, \frac{3G}{2\beta D}, \frac{2\beta C_d + Cg}{\beta D}}$ and $\eta_k = \frac{A}{k}$, From \Cref{lmm:recurrence-1} we have 
\begin{align}
\label{eq:recurrence-distributed}
    F_t (\bbar{x}{K+1}) - F_t(\opt{x}) 
    \leq \frac{2\beta A D^2}{K} 
        + \avgn \sum_{k=1}^K \eta_k \bracket{\prod_{\ell = k + 1}^K \parenthese{1 - \eta_{\ell}}}
            \bracket{
                \scalarproduct{\ilocalplus{\hat{d}}{k}, \ilocalplus{\hat{v}}{k} - \opt{x}}
                + \scalarproduct{\nabla F_t (\bbar{\vect{x}}{k}), \ilocalplus{v}{k} - \ilocalplus{\hat{v}}{k}}
            } 
\end{align}
Summing \Cref{eq:recurrence-distributed} over $T$-rounds yields,
\begin{align}
\label{eq:recurrence-distributed-2}
    \sum_{t=1}^T &\bracket{F_t (\overline{\vect{x}}_t) - F_t(\opt{x})}
    \leq \frac{2\beta A D^2 T}{K} 
        + \avgn \sum_{k=1}^K \eta_k \prod_{\ell = k + 1}^K \parenthese{1 - \eta_{\ell}}
            \sum_{t=1}^T \bracket{
                \scalarproduct{\ilocalplus{\hat{d}}{k}, \ilocalplus{\hat{v}}{k} - \opt{x}}
                + \scalarproduct{\nabla F_t (\bbar{\vect{x}}{k}), \ilocalplus{v}{k} - \ilocalplus{\hat{v}}{k}}
            } \nonumber \\
    & \leq \frac{2\beta A D^2 T}{K} 
        + \avgn \sum_{k=1}^K \eta_k \prod_{\ell = k + 1}^K \parenthese{1 - \eta_{\ell}}
            \bracket{
                \rcal_{T, \ocal} 
                + 2 \zeta \sqrt{n} DG^2 \parenthese{\frac{\eigen}{1-\eigen} + 1} \avgn  \sum_{t=1}^T \sum_{s \leq t}  \indi_{\curlybracket{s + d^i_s > t}}
            } \nonumber \\
    & \leq \frac{2\beta A D^2 T}{K} + 3(A+1) \bracket{
                \rcal_{T, \ocal} 
                + 2 \zeta \sqrt{n} DG^2 \parenthese{\frac{\eigen}{1-\eigen} + 1} \avgn \sum_{t=1}^T \sum_{s \leq t}  \indi_{\curlybracket{s + d^i_s > t}}}
\end{align}
From \Cref{eq:recurrence-distributed-2}, we deduce that, for all $i \in [n]$,
\begin{align}
    \sum_{t=1}^T & \bracket{F_t (\vect{x}^i_t) - F_t(\opt{x})}
    \leq \sum_{t=1}^T \bracket{F_t (\vect{x}^i_t) - F_t(\overline{\vect{x}}_t)} 
        +  \sum_{t=1}^T \bracket{F_t (\overline{\vect{x}}_t) - F_t(\opt{x})} \nonumber \\
    & \leq \sum_{t=1}^T G \norm{ \vect{x}^i_t  - \overline{\vect{x}}_t} 
        + \sum_{t=1}^T \bracket{F_t (\overline{\vect{x}}_t) - F_t(\opt{x})} \nonumber \\
    & \leq \frac{GC_d T}{K} + \sum_{t=1}^T \bracket{F_t (\overline{\vect{x}}_t) - F_t(\opt{x})} \nonumber \\
    & \leq \frac{GC_d + 2\beta A D^2}{K} T + 3(A+1) \bracket{
                \rcal_{T, \ocal}  
                + 2 \zeta \sqrt{n} DG^2 \parenthese{\frac{\eigen}{1-\eigen} + 1}  \avgn \sum_{t=1}^T \sum_{s < t} \indi_{\curlybracket{s + d^i_s > t}}} \nonumber \\
    & \leq \frac{GC_d + 2\beta A D^2}{K} T + 3(A+1) \bracket{
                \rcal_{T, \ocal} 
                + 2 \zeta \sqrt{n} DG^2 \parenthese{\frac{\eigen}{1-\eigen} + 1}  \avgn B_i} \nonumber \\
\end{align}
The theorem follows by letting $\zeta = \frac{1}{G \sqrt{B}}$, $K = \sqrt{T}$ and $B = \avgn B_i$. This concludes the proof. 
\end{proof}

\end{document}